\def\eqref#1{equation~\ref{#1}}
\def\1{\bm{1}}
\DeclareMathAlphabet{\mathsfit}{\encodingdefault}{\sfdefault}{m}{sl}
\SetMathAlphabet{\mathsfit}{bold}{\encodingdefault}{\sfdefault}{bx}{n}
\setlist[itemize]{leftmargin=*,itemsep=0.25em}
\setlist[enumerate]{leftmargin=*,itemsep=0.25em}
\newcommand{\circledone}{\ding{172}}%
\newcommand{\circledtwo}{\ding{173}}%
\newcommand{\cmark}{\ding{51}}%
\newcommand{\xmark}{\ding{55}}%
\theoremstyle{plain}
\newtheorem{theorem}{Theorem}[section]
\newtheorem{lemma}[theorem]{Lemma}
\theoremstyle{definition}
\theoremstyle{remark}
\newcommand{\norm}[1]{\left\lVert#1\right\rVert}
\newcommand{\diff}{\mathop{}\!\mathrm{d}}
\newcommand{\expt}{\mathbb{E}}
\newcommand{\mcal}[1]{\mathcal{#1}}
\title{When Source-Free Domain Adaptation Meets Learning with Noisy Labels}
\author{
Li Yi$^{1,*}$\quad  Gezheng Xu$^{2,}$\thanks{Equal contribution}\quad  Pengcheng Xu$^{2}$\quad  Jiaqi Li$^{2}$\quad  Ruizhi Pu$^{2}$\\
\textbf{Charles Ling}$^{2}$\quad 
\textbf{A. Ian McLeod}$^{1}$\quad 
\textbf{Boyu Wang}$^{1,2,}$\thanks{Corresponding author}\quad \\
$^{1}$Department of Statistical and Actuarial Sciences \quad\quad 
  $^{2}$Department of Computer Science\\
University of Western Ontario\\
\texttt{\{lyi7,gxu86,pxu67,jli3779,rpu2,charles.ling,aimcleod\}@uwo.ca} \\
\texttt{ bwang@csd.uwo.ca} \\
}
\begin{document}

\maketitle

\begin{abstract}
Recent state-of-the-art source-free domain adaptation (SFDA) methods have focused on learning meaningful cluster structures in the feature space, which have succeeded in adapting the knowledge from source domain to unlabeled target domain without accessing the private source data. However, existing methods rely on the pseudo-labels generated by source models that can be noisy due to domain shift.
In this paper, we study SFDA from the perspective of learning with label noise (LLN).
Unlike the label noise in the conventional LLN scenario, we prove that the label noise in SFDA follows a different distribution assumption.
We also prove that such a difference makes existing LLN methods that rely on their distribution assumptions unable to address the label noise in SFDA.
Empirical evidence suggests that only marginal improvements are achieved when applying the existing LLN methods to solve the SFDA problem.
On the other hand, although there exists a fundamental difference between the label noise in the two scenarios, we demonstrate theoretically that the early-time training phenomenon (ETP), which has been previously observed in conventional label noise settings, can also be observed in the SFDA problem.
Extensive experiments demonstrate significant improvements to existing SFDA algorithms by leveraging ETP to address the label noise in SFDA.
\end{abstract}

\section{Introduction}
Deep learning demonstrates strong performance on various tasks across different fields. However, it is limited by the requirement of large-scale labeled and independent, and identically distributed (i.i.d.) data. Unsupervised domain adaptation (UDA) is thus proposed to mitigate the distribution shift between the labeled source and unlabeled target domain. 
In view of the importance of data privacy, it is crucial to be able to adapt a pre-trained source model to the unlabeled target domain without accessing the private source data, which is known as Source Free Domain Adaptation (SFDA).

The current state-of-the-art SFDA methods \citep{liang2020we, yang2021exploiting, yang2021generalized} mainly focus on learning meaningful cluster structures in the feature space, and the quality of the learned cluster structures hinges on the reliability of pseudo labels generated by the source model.
Among these methods, SHOT \citep{liang2020we} purifies pseudo labels of target data based on nearest centroids, and then the purified pseudo labels are used to guide the self-training.
G-SFDA \citep{yang2021generalized} and NRC \citep{yang2021exploiting} \textcolor{black}{further refine pseudo labels by encouraging} similar predictions to the data point and its neighbors. 
For a single target data point, when most of its neighbors are correctly predicted, these methods 
can provide an accurate  pseudo label to the data point.
However, as we illustrate the problem in Figure \ref{fig:intro-a}(a-b), when the majority of its neighbors are incorrectly predicted to a category, it will be assigned with an incorrect pseudo label, misleading the learning of cluster structures.
The experimental result on VisDA \citep{peng2017visda}, shown in Figure \ref{fig:intro-b}, further verifies this phenomenon. By directly applying the pre-trained source model on each target domain instance \textcolor{black}{(central instance)}, we collect its neighbors and evaluate their quality. We observed that for each class a large proportion of the neighbors are \emph{misleading} \textcolor{black}{(i.e., the neighbors' pseudo labels are different from the central instance's true label)}, some even with high confidence (e.g., the \emph{over-confident misleading neighbors} whose prediction score is larger than 0.75). Based on this observation, we can conclude that: (1) the pseudo labels leveraged in current SFDA methods can be heavily noisy; (2) some pseudo-label purification methods utilized in SFDA, which severely rely on the quality of the pseudo label itself, will be affected by such label noise, and the prediction error will accumulate as the training progresses. More details can be found in Appendix \ref{app:noisyObervation}.

\begin{figure}
\centering
    \subfloat[\scriptsize{Overview of the SFDA problem and our method}]{
      \label{fig:intro-a} 
      \centering
      \includegraphics[width=.64\textwidth]{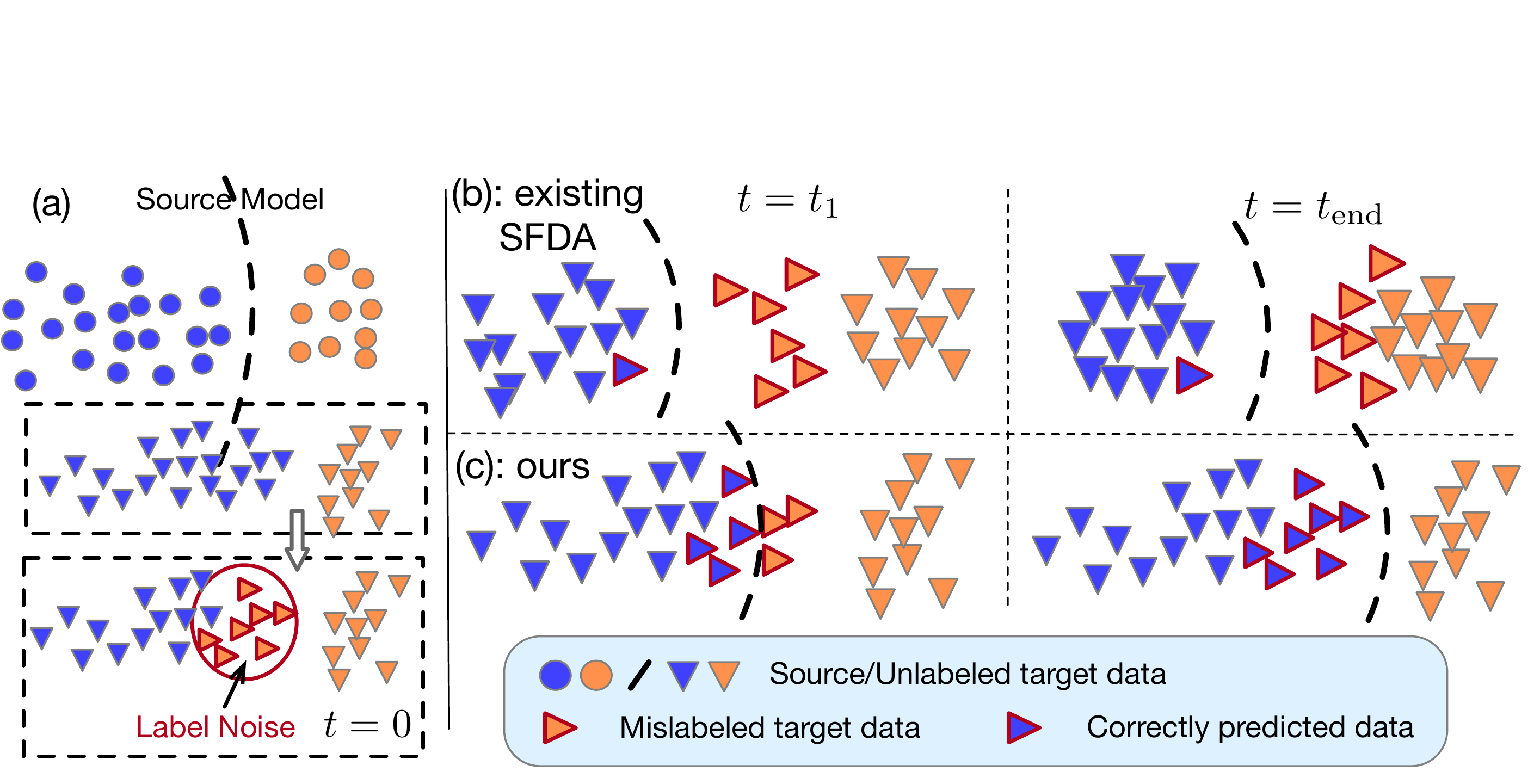}}
\quad
    \subfloat[\tiny{Neighbors Label Noise Analysis On VisDA}]{
      \label{fig:intro-b} 
      \centering
      \includegraphics[width=.28\textwidth]{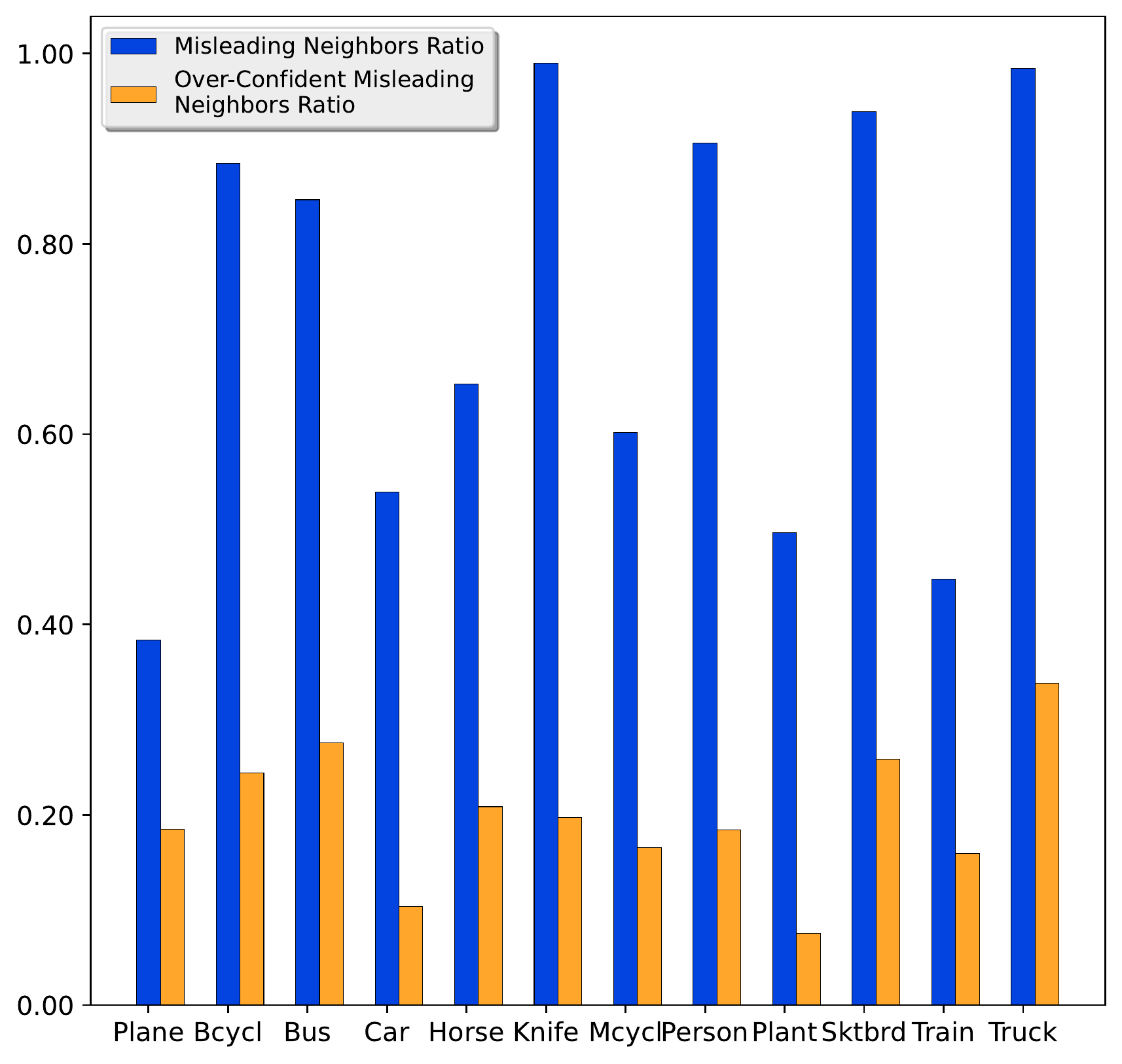}}
\caption{(i) (a) The SFDA problem can be formulated as an LLN problem.
    (b) The existing SFDA algorithms using the local cluster information cannot address label noise due to the unbounded label noise (Section \ref{sec3}).
    (c) We prove that ETP exists in SFDA, which can be leveraged to address the unbounded label noise (Section \ref{sec:sec4}). 
    (ii) Observed Label Noise Phenomena on VisDA dataset.}
\vspace{-17pt}
\label{fig:intro}
\end{figure}

In this paper, we address the aforementioned problem by formulating SFDA as \emph{learning with label noise} (LLN).
\textcolor{black}{Unlike existing studies that heuristically rely on cluster structures or neighbors, we investigate the properties of label noise in SFDA and show that there is an intrinsic discrepancy between the SFDA and the LLN problems. Specifically,}
in conventional LLN scenarios, the label noise is generated by human annotators or image search engines \citep{patrini2017making, xiao2015learning, xia2020robust}, where the underlying distribution assumption is that the mislabeling rate for a sample is bounded.
However, in the SFDA scenarios, the label noise is generated by the source model \textcolor{black}{due to the distribution shift}, where we prove that the mislabeling rate for a sample is much higher, and can approach $1$.
We term the former label noise in LLN as \emph{bounded label noise} and the latter label noise in SFDA as \emph{unbounded label noise}.
Moreover, we theoretically show that most existing LLN methods, which rely on bounded label noise assumption, are unable to address the label noise in SFDA due to the fundamental difference (Section~\ref{sec3}).

\textcolor{black}{To this end, we leverage \emph{early-time training phenomenon} (ETP) in LLN to address the unbounded label noise and to improve the efficiency of existing SFDA algorithms. Specifically, ETP indicates that classifiers can predict mislabeled samples with relatively high accuracy during the early learning phase before they start to memorize the mislabeled data~\citep{liu2020early}. Although ETP has been previously observed in, it has only been studied in the bounded random label noise in the conventional LLN scenarios. In this work, we theoretically and empirically show that ETP still exists in the unbounded label noise scenario of SFDA. Moreover, we also empirically justify that existing SFDA algorithms can be substantially improved by leveraging ETP, which opens up a new avenue for SFDA. As an instantiation, we incorporate a simple early learning regularization (ELR) term~\citep{liu2020early} with existing SFDA objective functions, achieving consistent improvements on  four different SFDA benchmark datasets. As a comparison, we also apply other existing LLN methods, including \textcolor{black}{Generalized Cross Entropy (GCE) \citep{zhang2018generalized}, Symmetric Cross Entropy Learning (SL) \citep{wang2019symmetric}, Generalized Jensen-Shannon Divergence (GJS) \citep{englesson2021generalized} and Progressive Label Correction (PLC) \citep{zhang2021learning}}, to SFDA.
Our empirical evidence shows that they are inappropriate for addressing the label noise in SFDA. 
This is also consistent with our theoretical results (Section~\ref{sec:sec4}).}

Our main contribution can be summarized as: 
(1) We establish the connection between the SFDA and the LLN.
Compared with the conventional LLN problem that assumes bounded label noise, the problem in SFDA can be viewed as the problem of LLN with the unbounded label noise.
(2) We theoretically and empirically justify that ETP exists in the unbounded label noise scenario. \textcolor{black}{On the algorithmic side, we instantiate our analysis by simply adding a regularization term into the SFDA objective functions.}
(3) We conduct extensive experiments to show that ETP can be utilized to improve many existing SFDA algorithms by a large margin across multiple SFDA benchmarks.

\vspace{-8 pt}

\section{Related work}
\vspace{-5 pt}

\noindent\textbf{Source-free domain adaptation.} Recently, SFDA are studied for data privacy. The first branch of research is to leverage the target pseudo labels to conduct self-training to implicitly achieve adaptation \citep{liang2021source,tanwisuth2021prototype,ahmed2021unsupervised,yang2021generalized}. SHOT \citep{liang2020we} introduces k-means clustering and mutual information maximization strategy for self-training. NRC \citep{yang2021exploiting} further investigates the neighbors of target clusters to improve the accuracy of pseudo labels. \textcolor{black}{These studies more or less involve pseudo-label purification processes, but they are primarily heuristic algorithms and suffer from the previously mentioned label noise accumulation problem.} 
The other branch is to utilize the generative model to synthesize target-style training data \citep{qiu2021source,liu2021source}. Some methods also explore the SFDA algorithms in various settings. USFDA \citep{kundu2020universal} and FS \citep{kundu2020towards} design methods for universal and open-set UDA. 
In this paper, we regard SFDA as the LLN problem. We aim to explore what category of noisy labels exists in SFDA and to ameliorate such label noise to improve the performance of current SFDA algorithms.

\noindent\textbf{Learning with label noise.} Existing methods for training neural networks with label noise focus on symmetric, asymmetric, and instance-dependent label noise.
For example, a branch of research focuses on leveraging noise-robust loss functions to cope with the symmetric and asymmetric noise, including GCE \citep{zhang2018generalized}, SL \citep{wang2019symmetric}, NCE \citep{ma2020normalized}, and GJS \citep{englesson2021generalized}, which have been proven effective in bounded label noise.
On the other hand, CORES \citep{cheng2020learning} and CAL \citep{zhu2021second} are shown useful in mitigating instance-dependent label noise.
These methods are only tailed to conventional LLN settings.
Recently, \citet{liu2020early} has studied early-time training phenomenon (ETP) in conventional label noise scenarios and proposes a regularization term ELR to exploit the benefits of ETP.
PCL \citep{zhang2021learning} is another conventional LLN algorithm utilizing ETP, but it cannot maintain the exploit of ETP in SFDA as memorizing noisy labels is much faster in SFDA.
Our contributions are: 
(1) We theoretically and empirically study ETP in the SFDA scenario.
(2) Based on an in depth analysis of many existing LLN methods \citep{zhang2018generalized,wang2019symmetric,englesson2021generalized,zhang2021learning},
we demonstrate that ELR is useful for many SFDA problems. 


\section{Label Noise In SFDA} \label{sec3}
The presence of label noise on training datasets has been shown to degrade the model performance \citep{malach2017decoupling, han2018co}.
In SFDA, existing algorithms rely on pseudo-labels produced by the source model, which are inevitably noisy due to the domain shift.
The SFDA methods such as \citet{liang2020we,yang2021exploiting, yang2021generalized} cannot tackle the situation when some target samples and their neighbors are all incorrectly predicted by the source model.
In this section, we formulate the SFDA as the problem of LLN to address this issue.
We assume that the source domain $\mathcal{D}_S$ and the target domain $\mathcal{D}_T$ follow two different underlying distributions over $\mathcal{X}\times\mathcal{Y}$, where $\mathcal{X}$ and $\mathcal{Y}$ are respectively the input and label spaces.
In the SFDA setting, we aim to learn a target classifier $f(\mathbf{x};\theta): \mathcal{X} \to \mathcal{Y}$ only with a pre-trained model $f_S(\mathbf{x})$ on $\mathcal{D}_S$ and a set of unlabeled target domain observations drawn from $\mathcal{D}_T$. We regard the incorrectly assigned pseudo-labels as noisy labels.
Unlike the ``bounded label noise'' assumption in the conventional LLN domain, we will show that the label noise in SFDA is \emph{unbounded}.
We further prove that most existing LLN methods that rely on the bounded assumption cannot address the label noise in SFDA due to the difference. 





\textbf{Label noise in conventional LLN settings:} In conventional label noise settings, the injected noisy labels are collected by either human annotators or image search engines \citep{lee2018cleannet, li2017webvision, xiao2015learning}.
The label noise is usually assumed to be either independent of instances (i.e., symmetric label noise or asymmetric label noise) \citep{patrini2017making, liu2015classification, xu2019ldmi} or dependent of instances (i.e., instance-dependent label noise) \citep{berthon2021confidence, xia2020part}.
The underling assumption for them is that a sample $\mathbf{x}$ has the highest probability of being in the correct class $y$, i.e., $\Pr[\tilde Y=i|Y=i,X=x]>\Pr[\tilde Y=j|Y=i,X=x],\  \forall x\in \mathcal{X}, i\not=j$,
where $\tilde Y$ is the noisy label and $Y$ is the ground-truth label for input $X$.
Equivalently, it assumes a bounded noise rate.
For example, given an image to annotate, the mislabeling rate for the image is bounded by a small number, which is realistic in conventional LLN settings \citep{xia2020part,cheng2020learning}.
When the label noise is generated by the source model, the underlying assumption of these types of label noise does not hold.
\textbf{Label noise in SFDA:} As for the label noise generated by the source model, mislabeling rate for an image can approach $1$, that is, $\Pr[\tilde Y=j|Y=i,X=x]\to 1,\ \exists \mathcal{S}\subset\mathcal{X},\ \forall x\in \mathcal{S}, i\not=j$.
To understand that the label noise in SFDA is unbounded, we consider a two-component \textcolor{black}{Multivariate} Gaussian mixture distribution with equal priors for both domains.
Let the first component ($y=1$) of the source domain distribution $\mathcal{D}_S$  be $\mathcal{N}(\boldsymbol{\mu_1}, \sigma^2\mathbf{I}_d)$, and the second  component ($y=-1$) of $\mathcal{D}_S$ be $\mathcal{N}(\boldsymbol{\mu_2}, \sigma^2\mathbf{I}_d)$\textcolor{black}{, where $\boldsymbol{\mu_1}, \boldsymbol{\mu_2} \in \mathbb{R}^d$\ and $\mathbf{I}_d \in \mathbb{R}^{d \times d}$}.
For the target domain distribution $\mathcal{D}_T$, let the first component ($y=1$) of $\mathcal{D}_T$  be $\mathcal{N}(\boldsymbol{\mu_1}+\mathbf{\Delta}, \sigma^2\mathbf{I}_d)$, and the second  component ($y=-1$) of $\mathcal{D}_T$ be $\mathcal{N}(\boldsymbol{\mu_2}+\mathbf{\Delta}, \sigma^2\mathbf{I}_d)$, where $\mathbf{\Delta} \in \mathbb{R}^d $ is the shift of the two domains.
Notice that the domain shift considered is a general shift and it has been studied in \citet{stojanov2021domain, zhao2019learning}, where we also illustrate the domain shift in Figure \ref{fig:illu} in supplementary material.


Let $f_S$ be the optimal source classifier.
First, we build the relationship between the mislabeling rate for target data and the domain shift:
\begin{equation} \label{eq:1}
    \Pr_{(\mathbf{x},y)\sim \mathcal{D}_T}[f_S(\mathbf{x})\not=y]= \frac{1}{2} \Phi(-\frac{d_1}{\sigma}) + \frac{1}{2} \Phi(-\frac{d_2}{\sigma}),
\end{equation}
where $d_1=\norm{\frac{\boldsymbol{\mu_2}-\boldsymbol{\mu_1}}{2}-\mathbf{c}}\mathrm{sign}(\norm{\frac{\boldsymbol{\mu_2}-\boldsymbol{\mu_1}}{2}}-\norm{\mathbf{c}})$, $d_2=\norm{\frac{\boldsymbol{\mu_2}-\boldsymbol{\mu_1}}{2}+\mathbf{c}}$,  $\mathbf{c}=\alpha(\boldsymbol{\mu_2}-\boldsymbol{\mu_1})$, $\alpha = \frac{\mathbf{\Delta}^\top(\boldsymbol{\mu_2}-\boldsymbol{\mu_1}) }{\norm{\boldsymbol{\mu_2}-\boldsymbol{\mu_1}}^2}$ is the magnitude of domain shift, and $\Phi$ is the standard normal cumulative distribution function.
Eq.~(1) shows that the magnitude of the domain shift inherently controls the mislabeling error for target data.
This mislabeling rate increases as the magnitude of the domain shift increases.
We defer the proof and details to Appendix \ref{app:sec1}.

More importantly, we characterize that the label noise is unbounded among these mislabeled samples.

\begin{theorem}\label{thm:2}
Without loss of generality, we assume that the $\mathbf{\Delta}$ is positively correlated with the vector $\boldsymbol{\mu_2}-\boldsymbol{\mu_1}$\textcolor{black}{, i.e., $\mathbf{\Delta}^\top(\boldsymbol{\mu_2}-\boldsymbol{\mu_1}) > 0$}.
For $(\mathbf{x}, y) \sim \mathcal{D}_T$, if $\mathbf{x}\in \mathbf{R}$, then
\begin{equation}
    \Pr[f_S(\mathbf{x})\not=y] \geq 1 - \delta,
\end{equation}
where $\delta \in (0,1)$ (i.e., $\delta = 0.01$), $\mathbf{R}=\mathbf{R}_1 \bigcap \mathbf{R}_2$, $\mathbf{R}_1 = \{\mathbf{x}: \norm{\mathbf{x} - \boldsymbol{\mu}_1 - \mathbf{\Delta}} \leq \sigma(\frac{\sqrt{d}}{2} - \frac{\log{\frac{1-\delta}{\delta}}}{\sqrt{d}})\}$, and 
$\mathbf{R}_2 = \{\mathbf{x}: \mathbf{x}^\top \mathbf{1}_d > (\sigma d + 2\boldsymbol{\mu}_1^\top \mathbf{1}_d)/{2} \}$.
Meanwhile, $\mathbf{R}$ is non-empty when $\alpha > (\log {\frac{1-\delta}{\delta}})/d$, where
$\alpha = \frac{\mathbf{\Delta}^\top(\boldsymbol{\mu_2}-\boldsymbol{\mu_1}) }{\norm{\boldsymbol{\mu_2}-\boldsymbol{\mu_1}}^2}  >0$ is the magnitude of the domain shift along the direction $\boldsymbol{\mu_2}-\boldsymbol{\mu_1}$.
\end{theorem}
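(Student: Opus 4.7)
The backbone of the argument is the observation that $\mathbf{R}_2$ is exactly the half-space on which the Bayes-optimal source classifier $f_S$ outputs $-1$. Under the implicit normalization $\boldsymbol{\mu}_2 - \boldsymbol{\mu}_1 = \sigma \mathbf{1}_d$ (so that $\|\boldsymbol{\mu}_2-\boldsymbol{\mu}_1\|^2 = \sigma^2 d$), the nearest-mean boundary for the source mixture is precisely the hyperplane $\mathbf{x}^\top \mathbf{1}_d = (\sigma d + 2\boldsymbol{\mu}_1^\top \mathbf{1}_d)/2$, so $\mathbf{x} \in \mathbf{R}_2 \Rightarrow f_S(\mathbf{x}) = -1$. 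The theorem then reduces to lower bounding the target posterior $\Pr_{(\mathbf{x},y)\sim\mathcal{D}_T}[y = 1 \mid \mathbf{x}]$ by $1-\delta$ uniformly on $\mathbf{R}_1 \cap \mathbf{R}_2$.

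\textbf{Posterior lower bound.} Writing $\mathbf{z} := \mathbf{x} - (\boldsymbol{\mu}_1 + \mathbf{\Delta})$ and expanding the two squared-norm terms in the equal-prior log-likelihood ratio yields
\begin{equation*}
    \ell(\mathbf{x}) \;:=\; \log \frac{p_1(\mathbf{x})}{p_{-1}(\mathbf{x})} \;=\; \frac{d}{2} \;-\; \frac{\mathbf{z}^\top(\boldsymbol{\mu}_2 - \boldsymbol{\mu}_1)}{\sigma^2}.
\end{equation*}
I would then apply Cauchy--Schwarz together with $\|\boldsymbol{\mu}_2 - \boldsymbol{\mu}_1\| = \sigma\sqrt{d}$ and the constraint $\|\mathbf{z}\| \leq r := \sigma\big(\frac{\sqrt{d}}{2} - \frac{1}{\sqrt{d}}\log\frac{1-\delta}{\delta}\big)$ coming from $\mathbf{R}_1$; the $d/2$ terms cancel by design, giving $\ell(\mathbf{x}) \geq \log\frac{1-\delta}{\delta}$. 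Inverting through $\Pr[y=1\mid\mathbf{x}] = (1+e^{-\ell(\mathbf{x})})^{-1}$ produces $\Pr[y = 1 \mid \mathbf{x}] \geq 1 - \delta$, and combining with $f_S(\mathbf{x}) = -1$ on $\mathbf{R}_2$ delivers $\Pr[f_S(\mathbf{x}) \neq y \mid \mathbf{x}] \geq 1-\delta$ as required.

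\textbf{Non-emptiness of $\mathbf{R}$.} It suffices to check that the ball $\mathbf{R}_1$ of radius $r$ centred at $\boldsymbol{\mu}_1 + \mathbf{\Delta}$ meets the half-space $\mathbf{R}_2$. Using the identity $\mathbf{\Delta}^\top \mathbf{1}_d = \alpha\sigma d$ (immediate from the definition of $\alpha$ and the standing normalization of $\boldsymbol{\mu}_2 - \boldsymbol{\mu}_1$), the signed Euclidean distance from the centre to the boundary hyperplane of $\mathbf{R}_2$ evaluates to $\sigma\sqrt{d}\,(\alpha - 1/2)$. When $\alpha \geq 1/2$ the centre itself lies in $\mathbf{R}_2$; otherwise the ball reaches $\mathbf{R}_2$ iff $r \geq \sigma\sqrt{d}\,(1/2 - \alpha)$, and substituting $r$ collapses this inequality to $\alpha \geq (\log\frac{1-\delta}{\delta})/d$, matching the stated hypothesis.

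\textbf{Main obstacle.} Conceptually the argument is short; the subtlety lies entirely in the calibration of $r$. The theorem's somewhat opaque radius is engineered so that the Cauchy--Schwarz step cancels exactly the $d/2$ contribution in $\ell(\mathbf{x})$ and leaves precisely the threshold $\log\frac{1-\delta}{\delta}$ demanded by the sigmoid inversion; the same bookkeeping pins down the sharpness of the non-emptiness bound. Care must also be taken that the Cauchy--Schwarz inequality is the binding worst case (tight when $\mathbf{z}$ is aligned with $\boldsymbol{\mu}_2 - \boldsymbol{\mu}_1$), and that one is explicit about the implicit normalization $\boldsymbol{\mu}_2 - \boldsymbol{\mu}_1 = \sigma \mathbf{1}_d$ under which the hyperplane appearing in the definition of $\mathbf{R}_2$ coincides with the Bayes decision boundary of $f_S$.
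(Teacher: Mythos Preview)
Your proposal is correct and follows essentially the same route as the paper's proof: both work under the normalization $\boldsymbol{\mu}_2-\boldsymbol{\mu}_1=\sigma\mathbf{1}_d$, identify $\mathbf{R}_2$ with the half-space $\{f_S=-1\}$, lower-bound the target posterior $\Pr[y=1\mid\mathbf{x}]$ on $\mathbf{R}_1$, and verify non-emptiness by checking that the ball reaches across the source decision hyperplane. The only presentational difference is that the paper first constructs a single worst-case point $\mathbf{x}_1$ on $\partial\mathbf{R}_1$ aligned with $\boldsymbol{\mu}_2-\boldsymbol{\mu}_1$, computes the likelihood ratio there to be $\exp(m_0 d)=\tfrac{1-\delta}{\delta}$, and then appeals to ``rotational symmetry'' to extend the bound to all of $\mathbf{R}_1$, whereas you bound the log-likelihood ratio directly at a generic point via Cauchy--Schwarz; these are the same inequality, and your formulation is if anything the cleaner of the two.
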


Conventional LLN methods assume that the label noise is bounded: $\Pr[f_{H}(\mathbf{x})\not=y] < m,\ \forall (\mathbf{x},y)\sim \mathcal{D}_T$, where $f_H$ is the labeling function, and $ m=0.5$ if the number of clean samples of each component are the same \citep{cheng2020learning}. 
However, Theorem \ref{thm:2} indicates that the label noise generated by the source model is unbounded for any $\mathbf{x}\in \mathbf{R}$.
In practice, region $\mathbf{R}$ is non-empty as neural networks are usually trained on high dimensional data such that $d\gg 1$, so $\alpha> (\log{\frac{1-\delta}{\delta}})/d \to 0$ is easy to satisfy.
The probability measure on $\mathbf{R}=\mathbf{R}_1 \bigcap \mathbf{R}_2$ (i.e., $\Pr_{(\mathbf{x}, y) \sim \mathcal{D}_T}[\mathbf{x}\in \mathbf{R}]$) increases as the magnitude of the domain shift $\alpha$ increases, meaning more data points contradict the conventional LLN assumption.
\textcolor{black}{More details can be found in Appendix \ref{app:p2}.}

Given that the unbounded label noise exists in SFDA, the following Lemma establishes that many existing LLN methods \citep{wang2019symmetric, manwani2013noise, englesson2021generalized, ma2020normalized}, which rely on the bounded assumption, are \emph{not} noise tolerant in SFDA.

\begin{lemma} \label{lem:1}
Let the risk of the function $h:\mathcal{X}\to \mathcal{Y}$ under the clean data be $R(h)=\expt_{\mathbf{x},  y}[\ell_\text{LLN}(h(\mathbf{x}), y)]$, and the risk of $h$ under the noisy data be $\widetilde R(h)=\expt_{\mathbf{x},  \tilde y}[\ell_\text{LLN}(h(\mathbf{x}), \tilde y)]$, where the noisy data follows the unbounded assumption, i.e., ~$\Pr[\tilde y \not=y|\mathbf{x}\in \mathbf{R}] = 1 - \delta$ for a subset $\mathbf{R}\subset \mathcal{X}$ and $\delta \in (0,1)$.
Then the global minimizer $\tilde h^\star$ of  $\widetilde R(h)$ disagrees with the global minimizer $h^\star$ of $R(h)$ on data points $\mathbf{x} \in \mathbf{R}$ with a high probability at least $1 - \delta$.
\end{lemma}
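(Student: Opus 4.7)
The plan is to exploit the defining \emph{symmetric} property of the noise-tolerant LLN losses referenced above (SL, NCE, GJS and similar), namely that $\sum_{k=1}^{K} \ell_\text{LLN}(h(\mathbf{x}), k) = C$ for some constant $C$ independent of $h$ and $\mathbf{x}$, where $K = |\mathcal{Y}|$. This identity is precisely what these methods use to derive their noise-tolerance guarantee, and that guarantee itself only holds when the mislabeling rate stays below the class-balanced threshold $(K-1)/K$ (equivalently $1/2$ in the binary case). Since the unbounded assumption $\Pr[\tilde y \neq y \mid \mathbf{x} \in \mathbf{R}] = 1 - \delta$ permits $1 - \delta$ arbitrarily close to $1$, we are operating exactly in the regime the existing guarantees do not cover.

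Concretely, I would first decompose $\widetilde R(h)$ into its pointwise contributions and isolate the conditional risk at $\mathbf{x} \in \mathbf{R}$. Writing this conditional risk out and substituting the symmetric identity---taking the $1-\delta$ mass to be uniform over the $K-1$ incorrect classes, which is the standard LLN noise model---I obtain
\[
\widetilde R(h \mid \mathbf{x}) \;=\; \frac{(1-\delta)\,C}{K-1} \;+\; \left(\delta - \frac{1-\delta}{K-1}\right) \ell_\text{LLN}(h(\mathbf{x}), y).
\]
Whenever $\delta < 1/K$, which follows from the unbounded regime, the coefficient multiplying $\ell_\text{LLN}(h(\mathbf{x}), y)$ is strictly negative. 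Hence pointwise minimization of $\widetilde R$ at $\mathbf{x} \in \mathbf{R}$ is equivalent to \emph{maximization} of $\ell_\text{LLN}(h(\mathbf{x}), y)$, which is the opposite of what $h^\star$ does. This forces $\tilde h^\star(\mathbf{x}) \neq h^\star(\mathbf{x}) = y$ for every $\mathbf{x} \in \mathbf{R}$. To recover the ``at least $1 - \delta$'' probability bound, I would note that the disagreement event contains the flip event $\{\tilde y \neq y\}$, which by the unbounded assumption has probability at least $1 - \delta$ conditional on $\mathbf{x} \in \mathbf{R}$.

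The main obstacle is making ``many existing LLN methods'' precise enough that the symmetric identity applies uniformly: the identity is exact for SL and NCE but requires a small extra normalization for GJS, and for GCE it holds only asymptotically in its temperature parameter. A secondary subtlety is the assumed distribution of the $1-\delta$ mass across the $K-1$ incorrect classes---if that mass concentrates on a single wrong class the argument actually simplifies (the minimizer is forced onto that class deterministically), so treating the uniform case suffices as the worst case. The algebra beyond the decomposition above is routine.
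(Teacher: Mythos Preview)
Your approach is essentially the same as the paper's: both exploit the symmetric identity $\sum_{k}\ell_\text{LLN}(h(\mathbf{x}),k)=C$ to rewrite the noisy risk, show that on $\mathbf{R}$ the coefficient multiplying the clean-label loss becomes negative once the noise rate $1-\delta$ exceeds the class-balanced threshold, and conclude that the noisy minimizer is driven away from the true label there. The paper reaches the sign-flip via a comparison of $\widetilde R(\tilde h^\star)$ and $\widetilde R(h^\star)$ with general transition probabilities $\eta_{yk}(\mathbf{x})$ rather than your pointwise conditional-risk computation under uniform wrong-class mass, and your final appeal to the flip event $\{\tilde y\neq y\}$ is slightly redundant (the pointwise argument already gives deterministic disagreement on $\mathbf{R}$, so the $1-\delta$ bound is immediate), but neither difference is substantive.
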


We denote $\ell_\text{LLN}$ by the existing noise-robust loss based LLN methods in \citet{wang2019symmetric, manwani2013noise, englesson2021generalized, ma2020normalized}.
When the noisy data follows the bounded assumption, these methods are noise tolerant as the minimizer $\tilde h^\star$ converges to the minimizer $h^\star$ with a high probability.
We defer the details and proof of the related LLN methods to Appendix \ref{app:lem}.

\section{Learning With Label Noise in SFDA} \label{sec:sec4}
Given a fundamental difference between the label noise in SFDA and the label noise in conventional LLN scenarios, existing LLN methods, whose underlying assumption is bounded label noise, cannot be applied to solve the label noise in SFDA. 
This section focuses on investigating how to address the unbounded label noise in SFDA.



Motivated by the recent studies \citet{liu2020early, arpit2017closer}, which observed an early-time training phenomenon (ETP) on noisy datasets with bounded random label noise, we find that ETP does not rely on the bounded random label noise assumption, and it can be generalized to the unbounded label noise in SFDA.
ETP describes the training dynamics of the classifier that preferentially fits the clean samples and therefore has higher prediction accuracy for mislabeled samples during the early-training stage. Such training characteristics can be very beneficial for SFDA problems in which we only have access to the source model and the highly noisy target data.
To theoretically prove ETP in the presence of unbounded label noise, we first describe the problem setup.

We still consider a two-component Gaussian mixture distribution with equal priors. 
We denote $y$ by the true label for $\mathbf{x}$, and assume it is a balanced sample from $\{-1, +1\}$. 
The instance $\mathbf{x}$ is sampled from the distribution $\mathcal{N}(y \boldsymbol{\mu},\ \sigma \mathbf{1}_d)$,
where $\norm{\boldsymbol{\mu}}=1$. 
We denote $\tilde y$ by the noisy label for $\mathbf{x}$.
We observe that the label noise generated by the source model is close to the decision boundary revealed in Theorem \ref{thm:2}.
So, to assign the noisy labels, we let $\tilde y=y \beta(\mathbf{x}, y)$, where $\beta(\mathbf{x},y)=\mathrm{sign}(\mathbbm{1}\{y\mathbf{x}^\top \boldsymbol{\mu}>r\}-0.5)$ is the label flipping function, and $r$ controls the mislabeling rate.
If $\beta(\mathbf{x},y)<1$, then the data point $\mathbf{x}$ is mislabeled.
Meanwhile, the label noise is unbounded by adopting the label flipping function $\beta(\mathbf{x},y)$: $\Pr[\tilde y \not= y|y\mathbf{x}^\top \boldsymbol{\mu}\leq r]=1$, where $\mathbf{R}=\{\mathbf{x}: y\mathbf{x}^\top \boldsymbol{\mu}\leq r\}$.

We study the early-time training dynamics of gradient descent on the linear classifier.
The parameter $\theta$ is learned over the unbounded label noise data $\{x_i,\tilde y_i\}_{i=1}^n$ with the following logistic loss function:
\begin{equation*}
    \mcal{L}(\theta_{t+1}) = \frac{1}{n}\sum_{i=1}^n\log \left(1 + \exp \big({-\tilde y_i\theta_{t+1}^\top \mathbf{x}_i}\big) \right),
\end{equation*}
where $\theta_{t+1}=\theta_t -  \eta \nabla_\theta \mcal{L}(\theta_{t})$, and $\eta$ is the learning rate.
Then the following theorem builds the connection between the prediction accuracy for mislabeled samples at an early-training time $T$.


\begin{theorem} \label{thm:3}
Let $B=\{\mathbf{x}: \tilde y \not= y\}$ be a set of mislabeled samples.
Let $\kappa(B; \theta)$ be the prediction accuracy calculated by the ground-truth labels and the predicted labels by the classifier with parameter $\theta$ for mislabeled samples.
If at most half of the samples are mislabeled ($r<1$),
then there exists a proper time $T$ and a constant $c_0>0$ such that for any $0<\sigma<c_0$ and $n \to \infty$, with probability $1-o_p(1)$:
\begin{equation} \label{eq:thm3}
    \kappa(B; \theta_{T}) \geq 1- \exp\{-\frac{1}{200} g(\sigma)^2 \},
\end{equation}
where $g(\sigma)=\frac{\mathrm{Erf}[\frac{1-r}{\sqrt{2}\sigma}]}{2(1+2\sigma)\sigma} + \frac{ \exp{(-\frac{(r-1)^2}{2\sigma^2})}}{\sqrt{2\pi}(1+2\sigma)} > 0$ is a monotone decreasing function  that $g(\sigma)\to \infty$ as $\sigma \to 0$, and
$\mathrm{Erf}[x]=\frac{2}{\sqrt{\pi}}\int_0^x e^{-t^2}\diff t$.
\end{theorem}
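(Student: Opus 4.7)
My plan is to analyze the early gradient-descent trajectory by first showing that $\theta_t$ rapidly aligns with the signal direction $\boldsymbol{\mu}$, and then converting the resulting linear separator into an accuracy bound over $B$ via a Mills-ratio estimate on a one-dimensional Gaussian. The key preliminary is the expected gradient: writing $\mathbf{x}=y\boldsymbol{\mu}+\sigma\mathbf{g}$ with $\mathbf{g}\sim\mathcal{N}(0,\mathbf{I}_d)$ and applying the identities $\expt[h\mathbbm{1}\{h>a\}]=\phi(a)$ and $\expt[h\mathbbm{1}\{h\le a\}]=-\phi(a)$ for $h\sim\mathcal{N}(0,1)$, one obtains $\expt[\tilde y\mathbf{x}]=v\,\boldsymbol{\mu}$ with
\[v=\mathrm{Erf}\bigl((1-r)/(\sqrt{2}\sigma)\bigr)+2\sigma\,\phi\bigl((r-1)/\sigma\bigr)>0\quad\text{whenever }r<1.\]
A direct rearrangement of the formula for $g(\sigma)$ in the statement shows $g(\sigma)=v/\bigl(2\sigma(1+2\sigma)\bigr)$, so $g(\sigma)$ is the signal strength normalized by a second-moment scale $\sigma(1+2\sigma)$; this identity is what will eventually produce the precise form of the exponent in Eq.~(\ref{eq:thm3}).

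Second, I would establish a linearized-dynamics window. At $\theta_0=0$ the logistic sigmoid inside $\nabla\mcal{L}$ equals $\tfrac12$, so the gradient reduces to $-\tfrac{1}{2n}\sum_i\tilde y_i\mathbf{x}_i$. Using the elementary bound $|s(u)-\tfrac12|\le\tfrac14|u|$ for the sigmoid $s$, one can verify inductively that as long as $\max_i|\theta_t^\top\mathbf{x}_i|$ remains small,
\[\theta_T=\frac{\eta T}{2n}\sum_{i=1}^n \tilde y_i \mathbf{x}_i + o_p(\|\theta_T\|).\]
A sub-Gaussian concentration bound (Hoeffding/Bernstein) on the i.i.d.\ summands $\tilde y_i\mathbf{x}_i$ gives $\tfrac{1}{n}\sum_i\tilde y_i\mathbf{x}_i=v\boldsymbol{\mu}+O_p(1/\sqrt n)$, so that $\theta_T/\|\theta_T\|\to\boldsymbol{\mu}$ in probability for a stopping time $T=T(\eta,\sigma,n)$ chosen so that $\eta T v$ is of constant order while the training logits stay small.

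Third, once directional alignment is in hand, on $B=\{\mathbf{x}:y\mathbf{x}^\top\boldsymbol{\mu}\le r\}$ the prediction $\mathrm{sign}(\theta_T^\top\mathbf{x})$ coincides (up to a vanishing perturbation) with $\mathrm{sign}(\mathbf{x}^\top\boldsymbol{\mu})$, which matches the ground truth iff $y\mathbf{x}^\top\boldsymbol{\mu}>0$. Setting $u=y\mathbf{x}^\top\boldsymbol{\mu}\sim\mathcal{N}(1,\sigma^2)$, the empirical accuracy on $B$ converges in probability to
\[\kappa(B;\theta_T)\longrightarrow 1-\frac{\Pr[u\le 0]}{\Pr[u\le r]}=1-\frac{\Phi(-1/\sigma)}{\Phi((r-1)/\sigma)}.\]
Applying the Mills-ratio sandwich $\phi(x)\,x/(1+x^2)\le\Phi(-x)\le\phi(x)/x$ to numerator and denominator and bookkeeping the polynomial prefactors reduces this residual ratio to an upper bound of the form $\exp\bigl(-c\,v^2/(\sigma^2(1+2\sigma)^2)\bigr)=\exp\bigl(-c\,g(\sigma)^2\bigr)$; after tracking the constants along the chain one can take $c\ge 1/200$, reproducing the stated inequality.

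The main obstacle is the rigorous control in Step 2: ensuring the sigmoid linearization stays valid over the entire $T$-step window while the perpendicular component of $\tfrac{1}{n}\sum_i\tilde y_i\mathbf{x}_i$ remains small relative to its parallel component $v\boldsymbol{\mu}$. This will likely require an inductive choice of $T$ paired with a uniform concentration bound on the $n$ training logits $\theta_t^\top\mathbf{x}_i$, together with an explicit condition for when the linear regime ends. The remaining steps, while algebraically involved, are essentially Gaussian-integral bookkeeping once the alignment is established.
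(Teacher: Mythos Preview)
Your two-part structure (alignment, then accuracy) and the computation $\expt[\tilde y\,\mathbf{x}]=v\boldsymbol{\mu}$ with the identification $g(\sigma)=v/\bigl(2\sigma(1+2\sigma)\bigr)$ match the paper exactly. The execution of each part, however, diverges.

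\textbf{Alignment.} The paper does not linearize the sigmoid. It writes $\nabla\mcal{L}(\theta_t)=-\tfrac{1}{2n}\sum_i\tilde y_i\mathbf{x}_i+\tfrac{1}{2n}\sum_i\mathbf{x}_i\tanh(\theta_t^\top\mathbf{x}_i)$, keeps the nonlinear term, and bounds $|\boldsymbol{\mu}^\top(\text{second term})|\le\tfrac12(1+2\sigma)\bigl(\tanh(\theta_t^\top\boldsymbol{\mu})+2\sigma\|\theta_t\|\bigr)$ via Cauchy--Schwarz and Gaussian norm concentration. Combined with the uniform bound $\|\nabla\mcal{L}\|\le1+2\sigma$ and the stopping rule $T=\min\{t:\theta_t^\top\boldsymbol{\mu}\ge0.1\}$, a short induction on $t$ yields $\frac{\theta_T^\top\boldsymbol{\mu}}{\|\theta_T\|}\ge\frac{b_0}{10(1+2\sigma)}$ with $b_0=v/2$. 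This sidesteps your ``main obstacle'' entirely: no linear-regime window is needed, and $\|\theta_t\|$ is controlled automatically by the induction hypothesis (for $t<T$ one gets $\|\theta_t\|\le(1+2\sigma)/b_0$). The trade-off is that the paper obtains only \emph{partial} alignment, cosine $\ge b_0/(10(1+2\sigma))$, not the perfect alignment $\theta_T/\|\theta_T\|\to\boldsymbol{\mu}$ your linearization aims for.

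\textbf{Accuracy.} This is where your plan has a gap. In the paper, $g(\sigma)$ is nothing other than the lower bound on $\frac{\theta_T^\top\boldsymbol{\mu}}{\sigma\|\theta_T\|}$; the accuracy is $\Phi$ of this ratio, and a single application of $1-\Phi(x)\le e^{-x^2/2}$ with $x\ge g(\sigma)/10$ produces Eq.~(\ref{eq:thm3}) directly. Your route---perfect alignment, then the conditional ratio $1-\Phi(-1/\sigma)/\Phi((r-1)/\sigma)$, then Mills---does not land on $g(\sigma)^2$: for small $\sigma$ your residual behaves like $(1-r)\exp\bigl(-r(2-r)/(2\sigma^2)\bigr)$ while $\exp(-g(\sigma)^2/200)\sim\exp\bigl(-1/(800\sigma^2)\bigr)$, so matching exponents would force $r(2-r)\ge1/400$, which fails for small $r$. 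The $r$-dependence in the theorem lives in the \emph{alignment cosine} (through $b_0$), not in a tail-ratio; by driving the cosine to $1$ you erase that dependence and cannot recover the specific exponent $g(\sigma)^2$. (Your explicit conditioning on $B$ is more careful than what the paper writes in its second part, but it leads you away from the quantity that actually controls the stated bound.)
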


The proof is provided in Appendix \ref{app:p3}.
Compared to ETP found in \citet{liu2020early}, where the label noise is assumed to be bounded, Theorem \ref{thm:3} presents that ETP also exists even though the label noise is unbounded.
At a proper time T, the classifier trained by the gradient descent algorithm can provide accurate predictions for mislabeled samples, where its accuracy is lower bounded by a function of the variance of clusters $\sigma$.
When $\sigma \to 0$, the predictions of all mislabeled samples equal to their ground-truth labels (i.e., $\kappa(B; \theta_{T}) \to 1$).
When the classifier is trained for a sufficiently long time, it will gradually memorize mislabeled data.
The predictions of mislabeled samples are equivalent to their incorrect labels instead of their ground-truth labels \citep{liu2020early,maennel2020neural}.
Based on these insights, the memorization of mislabeled data can be alleviated by leveraging their predicted labels during the early-training time.

To leverage the predictions during the early-training time, we adopt a recently established method, early learning regularization (ELR) \citep{liu2020early}, which encourages model predictions to stick to the early-time predictions for $\mathbf{x}$. 
Since ETP exists in the scenarios of the unbounded label noise, ELR can be applied to solve the label noise in SFDA.
The regularization is given by:
\begin{equation} \label{eq:elr}
    \mathcal{L}_\text{ELR}(\theta_t)=\log(1-\bar y_t^\top f(\mathbf{x};\theta_t)),
\end{equation}
where we overload $f(\mathbf{x};\theta_t)$ to be the probabilistic output for the sample $\mathbf{x}$, and $\bar y_t=\beta \bar y_{t-1} + (1-\beta)f(\mathbf{x};\theta_t)$ is the moving average prediction for $\mathbf{x}$, where $\beta$ is a hyperparameter.
To see how ELR prevents the model from memorizing the label noise, we calculate the gradient of Eq.~(\ref{eq:elr}) with respect to $f(\mathbf{x};\theta_t)$, which is given by:
\begin{equation*}
    \frac{\diff  \mathcal{L}_\text{ELR}(\theta_t)}{\diff f(\mathbf{x};\theta_t)} = -\frac{\bar y_t}{1-\bar y_t^\top f(\mathbf{x};\theta_t)}.
\end{equation*}
Note that minimizing Eq.~(\ref{eq:elr}) forces $f(\mathbf{x};\theta_t)$ to close to $\bar y_t$.
When $\bar y_t$ is aligned better with $f(\mathbf{x};\theta_t)$, the magnitude of the gradient becomes larger.
It makes the gradient of aligning $f(\mathbf{x};\theta_t)$ with $\bar y_t$ overwhelm the gradient of other loss terms that align $f(\mathbf{x};\theta_t)$ with noisy labels.
As the training progresses, the moving averaged predictions $\bar y_t$ for target samples gradually approach their ground-truth labels till the time $T$.
Therefore, Eq.~(\ref{eq:elr}) prevents the model from memorizing the label noise by forcing the model predictions to stay close to these moving averaged predictions $\bar y_t$, which are very likely to be ground-truth labels.

Some existing LLN methods propose to assign pseudo labels to data or require two-stage training for label noise \citep{cheng2020learning,zhu2021second,zhang2021learning}.
Unlike these LLN methods, Eq.~(\ref{eq:elr}) can be easily embedded into any existing SFDA algorithms without conflict.
The overall objective function is given by:
\begin{equation}
    \mathcal{L}=\mathcal{L}_\text{SFDA} + \lambda \mathcal{L}_\text{ELR},
\end{equation}
where $\mathcal{L}_\text{SFDA}$ is \emph{any} SFDA objective function, and $\lambda$ is a hyperparameter.

\begin{figure*}[t] 
\centering
\begin{subfigure}{.24\linewidth}
  \includegraphics[ width=\textwidth]{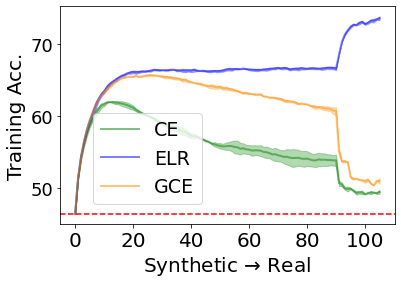}
  \caption{VisDA-C}
\end{subfigure}%
\begin{subfigure}{.24\linewidth}
  \includegraphics[ width=\textwidth]{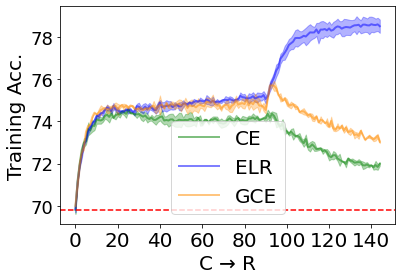}
  \caption{DomainNet}
\end{subfigure}%
\begin{subfigure}{.24\linewidth}
  \includegraphics[ width=\textwidth]{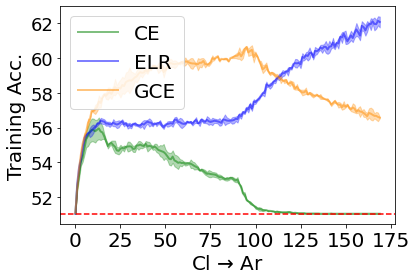}
  \caption{Office-Home}
\end{subfigure}
\begin{subfigure}{.24\linewidth}
  \includegraphics[width=\textwidth]{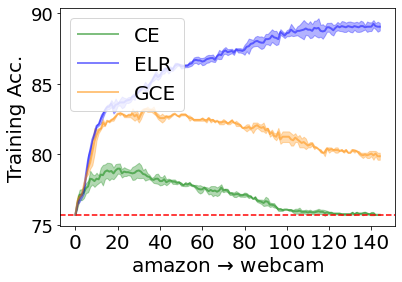}
  \caption{Office-31}
\end{subfigure}
\caption{Training accuracy on various target domains.
The source models initialize the classifiers and annotate unlabeled target data.
\textcolor{black}{As the classifiers memorize the unbounded label noise very fast, for the first $90$ steps, we evaluate the prediction accuracy on target data every batch, and one step represents one training batch.
After the $90$ steps, we evaluate the prediction accuracy for every $0.3$ epoch, shown as one step.
We use the CE, GCE, and ELR to train the classifiers on the labeled target data, shown in solid green lines, solid orange lines, and solid blue lines, respectively.}
The dotted red line represents the accuracy of labeling target data.
Eventually, the classifiers memorize the label noise, and the prediction accuracy equals the labeling accuracy (shown in (iii-iv)).
Additional results on transfer pairs can be found in Appendix \ref{app:curve}.}
\vspace{-12 pt}
\label{fig:lines}
\end{figure*}

\paragraph{Empirical Observations on Real-World Datasets. } \label{subsec:emp}
We empirically verify that target classifiers have higher prediction accuracy for target data during the early training and adaptation stage. 
We propose leveraging this benefit to prevent the classifier from memorizing the noisy labels. The observations are shown in Figure \ref{fig:lines}.
The parameters of classifiers are initialized by source models.
Labels of target data are annotated by the initialized classifiers.
We train the target classifiers on target data with the standard cross-entropy (CE) loss \textcolor{black}{and the generalized cross-entropy (GCE) loss, a well-known noise-robust loss widely leveraged in bounded LLN scenarios}.
The solid green, orange and blue lines represent the training accuracy of optimizing the classifiers with CE loss, GCE loss, and ELR loss, respectively.
The dotted red lines represent the labeling accuracy of the initialized classifiers.
Considering that the classifiers memorize the unbounded label noise very fast, we evaluate the prediction accuracy on target data every batch for the first $90$ steps.
After $90$ steps, we evaluate the prediction accuracy for every $0.33$ epoch.
The green lines show that ETP exists in SFDA, which is consistent with our theoretical result.
Meanwhile, \textcolor{black}{in all scenarios, green and orange lines show that classifiers provide higher prediction accuracy during the first a few iterations.}
After a few iterations, they start to memorize the label noise \textcolor{black}{even with noise-robust loss (e.g., GCE)}.
Eventually, the classifiers are expected to memorize the whole datasets.
For conventional LLN settings, it has been empirically verified that it takes a \emph{much longer} time before classifiers start memorizing the label noise \citep{liu2020early, xia2020robust}.
We provide further analysis in Appendix \ref{app:secmemspeed}.
We highlight that PCL \citep{zhang2021learning} leverages ETP at every epoch, so it cannot capture the benefits of ETP and is inappropriate for unbounded label noise due to the fast memorization speed in SFDA.
As a comparison, we choose ELR since it leverages ETP at every batch.
The blue lines show that leveraging ETP via ELR can address the memorization of noisy labels in SFDA.

\begin{table}[btp]
\begin{minipage}[tbp]{0.99\textwidth}
\caption{Accuracies (\%) on Office-Home for ResNet50-based methods.}
	\begin{center}
		\linespread{1.0}
		\addtolength{\tabcolsep}{-5.5pt}
		\scalebox{1.0}{
			\resizebox{0.99\textwidth}{!}{%
				\begin{tabular}{l|c|ccccccccccccc}
					\hline
					Method & \multicolumn{1}{|c|}{SF}& Ar$\rightarrow$Cl & Ar$\rightarrow$Pr & Ar$\rightarrow$Rw & Cl$\rightarrow$Ar & Cl$\rightarrow$Pr & Cl$\rightarrow$Rw & Pr$\rightarrow$Ar & Pr$\rightarrow$Cl & Pr$\rightarrow$Rw & Rw$\rightarrow$Ar & Rw$\rightarrow$Cl & Rw$\rightarrow$Pr & \textbf{Avg} \\
					\toprule
					MCD \citep{saito2018maximum}&\xmark &48.9	&68.3	&74.6	&61.3	&67.6	&68.8	&57.0	&47.1	&75.1	&69.1	&52.2	&79.6	&64.1 \\
					CDAN \citep{long2018conditional}&\xmark &50.7	&70.6	&76.0	&57.6	&70.0	&70.0	&57.4	&50.9	&77.3	&70.9	&56.7	&81.6	&65.8 \\
					SAFN~\citep{Xu_2019_ICCV}&\xmark &52.0&	71.7&76.3&64.2&69.9&71.9&63.7&51.4&	77.1&70.9&57.1&81.5&67.3\\
					Symnets \citep{zhang2019domain} &\xmark& 47.7 & 72.9 & 78.5 & {64.2} & 71.3 & 74.2 & {64.2}  & 48.8 & {79.5} & {74.5} & 52.6 & {82.7} & 67.6 \\
					MDD \citep{zhang2019bridging}&\xmark & 54.9 & 73.7 & 77.8 & 60.0 & 71.4 & 71.8 & 61.2 & {53.6} & 78.1 & 72.5 & \textbf{60.2} & 82.3 & 68.1 \\
					TADA \citep{wang2019transferable1} &\xmark& 53.1 & 72.3 & 77.2 & 59.1 & 71.2 & 72.1 & 59.7 & {53.1} & 78.4 & 72.4 & {60.0} & 82.9 & 67.6 \\
					{BNM} \citep{cui2020towards}&\xmark & 52.3 & 73.9 & {80.0} & 63.3 & {72.9} & {74.9} & 61.7 & {49.5} & {79.7} & 70.5 & {53.6} & 82.2 & 67.9 \\
					BDG \citep{yang2020bi}&\xmark & 51.5 & 73.4 & {78.7} & 65.3 & {71.5} & {73.7} & 65.1 & {49.7} & {81.1} & 74.6 & {55.1} & 84.8 & 68.7 \\
					SRDC \citep{tang2020unsupervised}&\xmark & 52.3 & 76.3 & {81.0} & \textbf{69.5} & {76.2} & {78.0} & \textbf{68.7} & {53.8} & {81.7} & \textbf{76.3} & {57.1} & {85.0} & {71.3} \\
					RSDA-MSTN~\citep{gu2020spherical}&\xmark&53.2&77.7&81.3&66.4&74.0&76.5&67.9&53.0&82.0&75.8&57.8&\textbf{85.4}&70.9\\
					\toprule
					\toprule
					Source Only&\cmark & 44.6 & 67.3 & 74.8 & 52.7 & 62.7 & 64.8 & 53.0 & 40.6 & 73.2 & 65.3 & 45.4 & 78.0 & {60.2}  \\
					\textbf{\ \ +ELR}&\cmark & \underline{52.4} & \underline{73.5} & \underline{77.3} & \underline{62.5} & \underline{70.6} & \underline{71.0} & \underline{61.1} & \underline{50.8} & \underline{78.9} & \underline{71.7} & \underline{56.7}  &  \underline{81.6} & \underline{67.3} \\
										\toprule
					SHOT~\citep{liang2020we}&\cmark &{57.1} & {78.1} & 81.5 & {68.0} & {78.2} & {78.1} & {67.4} & 54.9 & {82.2} & 73.3 & 58.8 & {84.3} & {71.8}  \\
					\textbf{\ \ +ELR}&\cmark & \textbf{\underline{58.7}} & \textbf{\underline{78.9}} & \textbf{\underline{82.1}} & \underline{68.5} & \underline{79.0} & 77.5 & \underline{68.2}  & \underline{57.1}  & 81.9 & \underline{74.2} & \underline{59.5} & \underline{84.9} & \textbf{\underline{72.6}} \\
										\toprule
					G-SFDA~\citep{yang2021generalized}&\cmark & 55.8 & 77.1 & 80.5 & 66.4 & 74.9 & 77.3 & 66.5 & 53.9 & 80.8 & 72.4 & 59.7 & 83.2 & {70.7}  \\
    					\textbf{\ \ +ELR}&\cmark & \underline{56.4} & \underline{77.6} & \underline{81.1} &  \underline{67.1} & \underline{75.2} & \underline{77.9} & 65.9 & \underline{55.0} & \underline{81.2} & 72.1 & \underline{60.0}  & \underline{83.6} & \underline{71.1} \\
				\toprule
				{NRC}~\citep{yang2021exploiting}&\cmark & 56.3 	& 77.6 	& 81.0 	& 65.3	& 78.3 	& 77.5 	& 64.5 	& 56.0 	&  82.4 	& 70.0	& 57.1 	& 82.9 	& 70.8 \\
			\textbf{\ \ +ELR}&\cmark & \underline{58.4} & \underline{78.7} &  \underline{81.5} &  \underline{69.2} &  \textbf{\underline{79.5}} & \textbf{\underline{79.3}} & 66.3 & \textbf{\underline{58.0}} & \textbf{\underline{82.6}} & \underline{73.4} & \underline{59.8} & \underline{85.1} & \textbf{\underline{72.6}} \\
					\hline
		\end{tabular}}}
			\vspace{-17pt}
				\label{tab:home}
	\end{center}
	\end{minipage}
\end{table}

\section{Experiments} \label{sec:exp}
We aim to improve the efficiency of existing SFDA algorithms by using ELR to leverage ETP. 
We evaluate the performance on four different SFDA benchmark datasets: Office-$31$ \citep{saenko2010adapting}, Office-Home \citep{venkateswara2017deep}, VisDA \citep{peng2017visda} and DomainNet \citep{peng2019moment}.
Due to the limited space, the results on the dataset Office-$31$ and additional experimental details are provided in Appendix \ref{app:exp}.

{\bf Evaluation. }We incorporate ELR into three existing baseline methods: SHOT \citep{liang2020we}, G-SFDA \citep{zhang2018generalized}, and NRC \citep{yang2021exploiting}.
SHOT uses k-means clustering and mutual information maximization strategy to train the representation network while freezing the final linear layer.
G-SFDA aims to cluster target data with similar neighbors and attempts to maintain the source domain performance.
NRC also explores the neighbors of target data by graph-based methods.
ELR can be easily embedded into these methods by simply adding the regularization term into the loss function to optimize without affecting existing SFDA frameworks.
We average the results based on three random runs.

{\bf Results. }
Tables \ref{tab:home}-\ref{tab:office31} show the results before/after leveraging the early-time training phenomenon, where  Table \ref{tab:office31} is shown in Appendix \ref{app:exp}.
Among these tables, the top part shows the results of conventional UDA methods, and the bottom part shows the results of SFDA methods.
In the tables, we use SF to indicate whether the method is source free or not.
We use Source Only + ELR to indicate ELR with self-training.
The results show that ELR itself can boost the performances.
As existing SFDA methods are not able to address unbounded label noise, incorporating ELR into these SFDA methods can further boost the performance.
The four datasets, including all $31$ pairs (e.g., $A\to D$) of tasks, show better performance after solving the unbounded label noise problem using the early-time training phenomenon.
Meanwhile, solving the unbounded label noise on existing SFDA methods achieves state-of-the-art on all benchmark datasets.
These SFDA methods also outperform most methods that need to access source data.

{\bf Analysis about hyperparameters $\beta$ and $\lambda$. }
The hyperparameter $\beta$ is chosen from \{$0.5$, $0.6$, $0.7$, $0.8$,  $0.9$, $0.99$\}, and $\lambda$  is chosen from \{$1$, $3$, $7$, $12$, $25$\}.
We conduct the sensitivity study on hyperparameters of ELR on the DomainNet dataset, which is shown in Figure \ref{fig:sfdaln}(a-b).
In each Figure, the study is conducted by fixing the other hyperparameter to the optimal one.
The performance is robust to the hyperparameter $\beta$ except $\beta=0.99$.
When $\beta=0.99$, classifiers are sensitive to changes in learning curves.
Thus, the performance degrades since the learning curves change quickly in the unbounded label noise scenarios.
Meanwhile, the performance is also robust to the hyperparameter $\lambda$ except when  $\lambda$ becomes too large.
The hyperparameter $\lambda$ is to balance the effects of existing SFDA algorithms and the effects of ELR.
As we indicated in Tables \ref{tab:home}-\ref{tab:office31}, barely using ELR to address the SFDA problem is not comparable to these SFDA methods.
Hence, a large value of $\lambda$ makes neural networks neglect the effects of these SFDA methods, leading to degraded performance.

\begin{table}[ht]
\footnotesize
\begin{minipage}[ht]{0.99\textwidth}
\caption{Accuracies (\%) on DomainNet for ResNet50-based methods.}
	\begin{center}
		\linespread{1.0}
		\addtolength{\tabcolsep}{-5.5pt}
				\begin{tabular}{l|c|ccccccccccccc}
					\hline
					Method & \multicolumn{1}{|c|}{SF}& R$\rightarrow$C & R$\rightarrow$P & R$\rightarrow$S & C$\rightarrow$R & C$\rightarrow$P & C$\rightarrow$S & P$\rightarrow$R & P$\rightarrow$C & P$\rightarrow$S & S$\rightarrow$R & S$\rightarrow$C & S$\rightarrow$P & \textbf{Avg} \\
					\toprule
                    MCD \citep{saito2018maximum}&\xmark & 61.9 & 69.3 & 56.2 & 79.7 & 56.6 & 53.6 & 83.3 & 58.3 & 60.9 & 81.7 & 56.2 & 66.7 & 65.4\\
                    DANN \citep{ganin2016domain}&\xmark & 63.4 & 73.6 & 72.6 & 86.5 & 65.7 & 70.6 & 86.9 & 73.2 & 70.2 & 85.7 & 75.2 & 70.0 & 74.5\\
                    DAN \citep{long2015learning}&\xmark & 64.3 & 70.6 & 58.4 & 79.4 & 56.7 & 60.0 & 84.5 & 61.6 & 62.2 & 79.7 & 65.0 & 62.0 & 67.0\\
                    COAL~\citep{tan2020class}&\xmark & 73.9 & 75.4 & 70.5 & 89.6 & 70.0 & 71.3 & 89.8 & 68.0 & 70.5 & 88.0 & 73.2 & 70.5 & 75.9\\
                    MDD~\citep{zhang2019bridging}&\xmark & 77.6 & 75.7 & \textbf{74.2} & 89.5 & 74.2 & \textbf{75.6} & 90.2 & 76.0 & \textbf{74.6} & 86.7 & 72.9 & 73.2 & 78.4\\
				
					\toprule
			\toprule
			Source Only&\cmark & {53.7} & {71.6} & 52.9 &70.8 &49.5 & 58.3 &85.2 &  59.6 & 59.1 & {30.6} & 74.8 & {65.7} & {61.0} \\
			
    			\textbf{\ \ +ELR}&\cmark & \underline{70.2} & \underline{81.7} & \underline{61.7} & \underline{79.9} & \underline{63.8} & \underline{67.0} & \underline{90.0} & \underline{72.1} & \underline{66.8} & \underline{85.1} & \underline{78.5} & \underline{68.8} & \underline{73.8} \\
									\toprule
			SHOT~\citep{liang2020we}&\cmark & 73.3& 80.1 & 65.8 & \textbf{91.4} & 74.3 & 69.2 & 91.9& 77.0 & 66.2& 87.4 & 81.3 & 75.0 & {77.7} \\
			
			\textbf{\ \ +ELR}&\cmark & \textbf{\underline{78.0}} & \underline{81.9} & \underline{67.4} & 91.1& \textbf{\underline{75.9}} & \underline{71.0} & \underline{92.6} & \underline{79.3} & \underline{68.0} & \underline{88.7} & \underline{84.8} & \underline{77.0} & \textbf{\underline{79.7}} \\
				\toprule
			G-SFDA~\citep{yang2021generalized}&\cmark & 65.8 & 78.9& 60.2 & 	80.5 & 64.7 & 64.6& 89.3 & 69.9 & 63.6 & 	86.4 & 78.8 & 71.1 & {72.8} \\

			\textbf{\ \ +ELR}&\cmark & \underline{69.4} &	\underline{80.9} & \underline{60.6} & \underline{81.3} & \underline{67.2} & \underline{66.4} & \underline{90.2} & \underline{73.2} & \underline{64.9} & \underline{87.6} & \underline{82.1} & 71.0 & \underline{74.6} \\
			\toprule
			{NRC}~\citep{yang2021exploiting}&\cmark & 69.8 & 81.1 & 62.9 & 83.4 & 74.4 & 66.3 & 90.3 & 73.4& 65.2 & 88.2 & 82.2  & 75.8 & 76.4 \\
			
			\textbf{\ \ +ELR}&\cmark & \underline{75.6} & \textbf{\underline{82.2}} & \underline{65.7} & \underline{91.2} & \underline{77.2}  & \underline{68.5} & \textbf{\underline{92.7}} &\textbf{\underline{79.8}} &  \underline{67.5}   & \textbf{\underline{89.3}}& \textbf{\underline{85.1}} & \textbf{\underline{77.6}} & \underline{79.4} \\
			\hline
		\end{tabular}
				\label{tab:domain}
						\vspace{-5.5pt}
	\end{center}
	\end{minipage}	
\end{table}

\begin{table}[btp]
\footnotesize
\begin{minipage}[tbp]{0.99\textwidth}
\caption{Accuracies (\%) on VisDA-C (Synthesis $\to$ Real) for ResNet101-based methods.}
	\begin{center}
		\linespread{1.0}
		\addtolength{\tabcolsep}{-5.5pt}
			\begin{tabular}{l|c|ccccccccccccc}
			\hline
			Method&\multicolumn{1}{|c|}{SF} & plane & bcycl & bus & car & horse & knife & mcycl & person & plant & sktbrd & train & truck & \textbf{Per-class} \\
			\toprule
			DANN \citep{ganin2016domain}&\xmark   & 81.9 & 77.7 & 82.8 & 44.3 & 81.2 & 29.5 & 65.1 & 28.6  & 51.9 & 54.6  & 82.8 & 7.8  & 57.4   \\
			DAN \citep{long2015learning}&\xmark   & 87.1 & 63.0 & 76.5 & 42.0 & 90.3 & 42.9 & 85.9 & 53.1  & 49.7 & 36.3  & 85.8 & 20.7 & 61.1   \\
			ADR \citep{saito2017adversarial}&\xmark& 94.2 & 48.5 & 84.0 & {72.9} & 90.1 & 74.2 & {92.6} & 72.5 & 80.8 & 61.8 & 82.2 & 28.8 & 73.5 \\
			CDAN \citep{long2018conditional}&\xmark  & 85.2 & 66.9 & 83.0 & 50.8 & 84.2 & 74.9 & 88.1 & 74.5  & 83.4 & 76.0  & 81.9 & 38.0 & 73.9   \\
			SAFN \citep{Xu_2019_ICCV}&\xmark & 93.6 & 61.3 & {84.1} & 70.6 & {94.1} & 79.0 & 91.8 & {79.6} & {89.9} & 55.6 & {89.0} & 24.4 & 76.1 \\
			SWD \citep{lee2019sliced}&\xmark & 90.8 & {82.5} & 81.7 & 70.5 & 91.7 & 69.5 & 86.3 & 77.5 & 87.4 & 63.6 & 85.6 & 29.2 & 76.4 \\
			MDD \citep{zhang2019bridging}&\xmark& - & {-} & -& - & - & - & - & - & - & - & - & - & 74.6 \\
			MCC~\citep{jin2019minimum}&\xmark& {88.7} & 80.3 & 80.5 & 71.5 & 90.1 & 93.2 & 85.0 & 71.6 & 89.4 & {73.8} & 85.0 & {36.9} & {78.8} \\
			STAR~\citep{lu2020stochastic}&\xmark& {95.0} & 84.0 & 84.6 & 73.0 & 91.6 & 91.8 & 85.9 & 78.4 & 94.4 & {84.7} & 87.0 & {42.2} & {82.7} \\
			RWOT~\citep{xu2020reliable}&\xmark& 95.1& 80.3& 83.7& \textbf{90.0}& 92.4& 68.0& 92.5& 82.2& 87.9& 78.4& \textbf{90.4}& \textbf{68.2}& 84.0 \\
			\toprule
			\toprule
			Source Only&\cmark & {60.9} & {21.6} & 50.9 & 67.6 & 65.8 & {6.3} & 82.2 & {23.2} & {57.3} & {30.6} & 84.6 & {8.0} & {46.6} \\
			
			\textbf{\ \ +ELR}&\cmark & \underline{95.4} & \underline{45.7} & \underline{89.7} &  \underline{69.8} & \underline{94.1} & \underline{97.1} & \textbf{\underline{92.9}} & \underline{80.1} & \underline{89.7} & \underline{52.8} & \underline{83.3} & 4.3 & \underline{74.6} \\
									\toprule
			SHOT~\citep{liang2020we}&\cmark & {94.3} & {88.5} & 80.1 & 57.3 & 93.1 & {94.9} & 80.7 & {80.3} & {91.5} & {89.1} & 86.3 & {58.2} & {82.9} \\
			
			\textbf{\ \ +ELR}&\cmark & \underline{95.8} & {84.1} & \underline{83.3} & \underline{67.9} & \underline{93.9} & \textbf{\underline{97.6}} & \underline{89.2} & {80.1} & {90.6} & \underline{90.4} & \underline{87.2} & {48.2} & \underline{84.1} \\
				\toprule
			G-SFDA~\citep{yang2021generalized}&\cmark & 96.0 & 87.6 & 85.3 & 72.8 & 95.9 & 94.7 & 88.4 & 79.0 & 92.7 & 93.9 & 87.2 & 43.7 & {84.8} \\
			
			\textbf{\ \ +ELR}&\cmark & \textbf{\underline{97.3}}&  \underline{89.1} & \textbf{\underline{89.8}} & \underline{79.2} & \textbf{\underline{96.9}} & \underline{97.5} & \underline{92.2} & \textbf{\underline{82.5}} & \textbf{\underline{95.8}} & \textbf{\underline{94.5}} & \underline{87.3} & 34.5 & \textbf{\underline{86.4}} \\
			\toprule
			{NRC}~\citep{yang2021exploiting}&\cmark & {96.9} & {89.7} & {84.0} & 59.8 & {95.9} & {96.6} & 86.5 &{80.9} & {92.8} & {92.6} & 90.2  &  {60.2} & {85.4} \\
			
			\textbf{\ \ +ELR}&\cmark & \underline{97.1} & \textbf{\underline{89.7}} & {82.7} & \underline{62.0} & \underline{96.2} & \underline{97.0} & \underline{87.6} & \underline{81.2} & \underline{93.7} & \underline{94.1} & 90.2  &  {58.6} & \underline{85.8} \\
			\hline
			\end{tabular}
			\label{tab:visda}
		\vspace{-5.5pt}
	\end{center}
\end{minipage}

\end{table}

\begin{figure*}[!ht]
\includegraphics[width=0.95\textwidth]{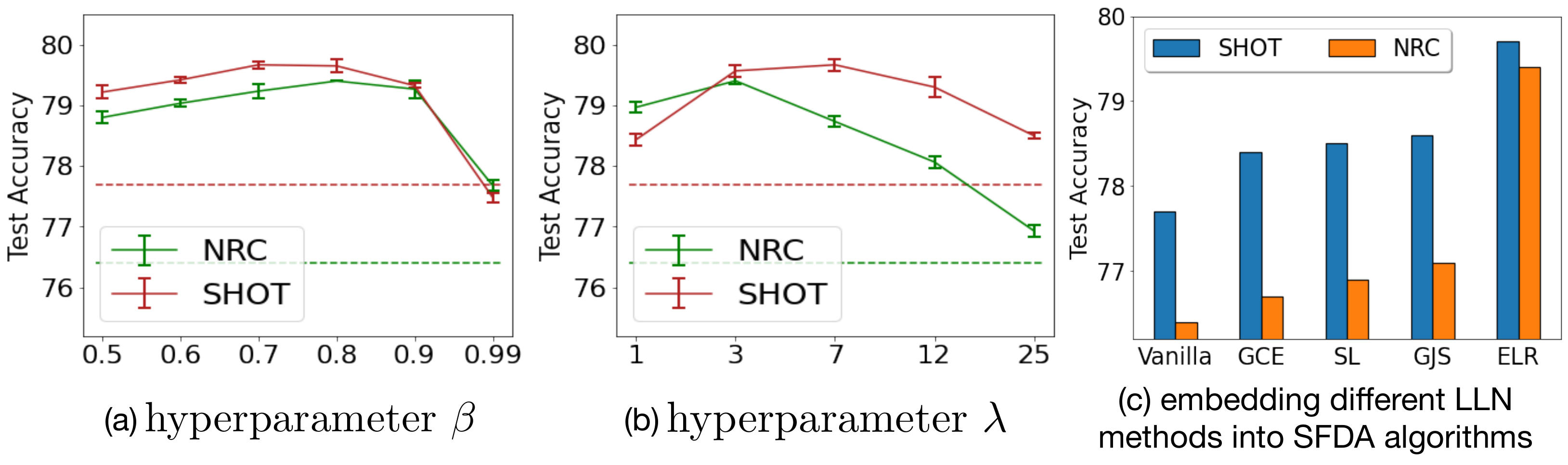}
\caption{(a)-(b) show the test accuracy on the DomainNet dataset with respect to hyperparameters of ELR.
(c) shows the test accuracy of incorporating various existing LLN methods into the SFDA methods on the DomainNet dataset.}
\label{fig:sfdaln}
\vspace{-12pt}
\end{figure*}

\begin{figure*}[t] 
\centering
\begin{subfigure}{.24\linewidth}
  \includegraphics[ width=\textwidth]{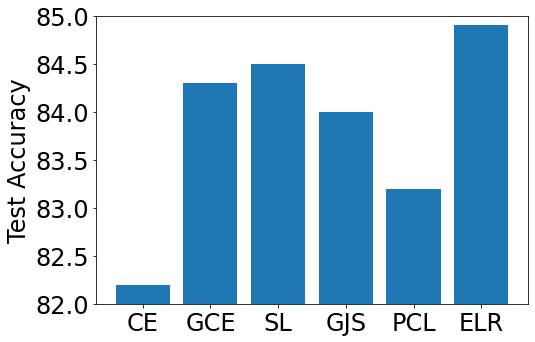}
  \caption{Office-31}
\end{subfigure}%
\begin{subfigure}{.24\linewidth}
  \includegraphics[ width=\textwidth]{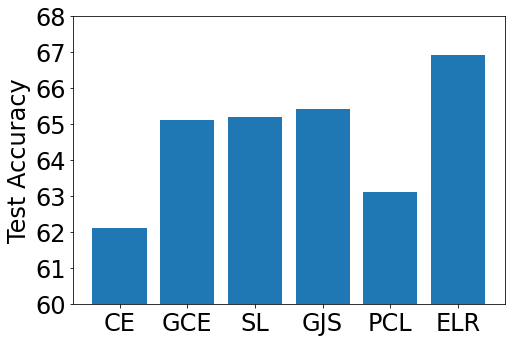}
  \caption{Office-Home}
\end{subfigure}%
\begin{subfigure}{.24\linewidth}
  \includegraphics[ width=\textwidth]{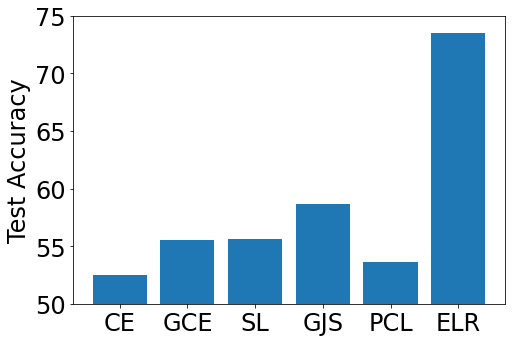}
  \caption{VisDA}
\end{subfigure}
\begin{subfigure}{.24\linewidth}
  \includegraphics[ width=\textwidth]{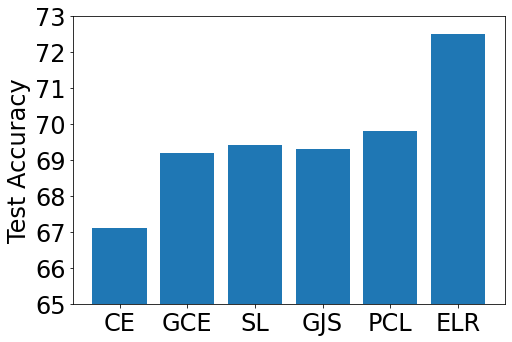}
  \caption{DomainNet}
\end{subfigure}
\caption{Evaluation of label noise methods on SFDA problems.
We use source models as an initialization of classifiers trained on target data and also use source models to annotate unlabeled target data.
Then we treat the target datasets as noisy datasets and use different label noise methods to solve the memorization issue.}
\label{fig:labelacc}
\vspace{-12pt}
\end{figure*}

\vspace{-5 pt}
\subsection{Discussion on Existing LLN Methods} \label{subsec:llnexp}
\vspace{-5 pt}
As we formulate the SFDA as the problem of LLN, it is of interest to discuss some existing LLN methods.
We mainly discuss existing LLN methods that can be easily embedded into the current SFDA algorithms.
Based on this principle, we choose GCE \citep{zhang2018generalized}, SL \citep{wang2019symmetric} and GJS \citep{englesson2021generalized} that have been theoretically proved to be robust to symmetric and asymmetric label noise, which are bounded label noise.
We highlight that a more recent method GJS outperforms ELR in real-world noisy datasets.
However, we will show that GJS is inferior to ELR in SFDA scenarios, because the underlying assumption for GJS does not hold in SFDA.
Besides ELR, which leverages ETP,  PCL  is another method to leverage the same phenomenon, but we will show that it is also inappropriate for SFDA.

To show the effects of the existing LLN methods under the unbounded label noise, we test these LLN methods on various SFDA datasets with target data whose labels are generated by source models.
As shown in Figure \ref{fig:labelacc}, GCE, SL, GJS, and PCL are better than CE but still not comparable to ELR.
Our analysis indicates that ELR follows the principle of ETP, which is theoretically justified in SFDA scenarios by our Theorem \ref{thm:2}.
Methods GCE, SL, and GJS follow the bounded label noise assumption, which does not hold in SFDA.
Hence, they perform worse than ELR in SFDA, even though GJS outperforms ELR in conventional LLN scenarios.
PCL \citep{zhang2021learning} utilizes ETP to purify noisy labels of target data, but it performs significantly worse than ELR.
As the memorization speed of the unbounded label noise is very fast, and classifiers memorize noisy labels within a few iterations (shown in Figure \ref{fig:lines}), purifying noisy labels every epoch is inappropriate for SFDA.
However, we notice that PCL performs relatively better on DomainNet than on other datasets.
The reason behind it is that the memorization speed in the DomainNet dataset is relatively slow than other datasets, which is shown in Figure \ref{fig:lines}.
In conventional LLN scenarios, PCL does not suffer from the issue since the memorization speed is much lower than the conventional LLN scenarios. 

In Figure \ref{fig:sfdaln}(c), we also evaluate the performance by incorporating the existing LLN methods into the SFDA algorithms SHOT and NRC.
Since PCL and SHOT assign pseudo labels to target data, PCL is incompatible with some existing SFDA methods and cannot be easily embedded into some SFDA algorithms.
Hence, we only embed GCE, SL, GJS, and ELR into the SFDA algorithms.
The figure illustrates that ELR still performs better than other LLN methods when incorporated into SHOT and NRC.
We also notice that GCE, SL, and GJS provide marginal improvement to the vanilla SHOT and NRC methods.
We think the label noise in SFDA datasets is the hybrid noise that consists of both bounded label noise and unbounded label noise due to the non-linearity of neural networks.
The GCE, SL, and GJS can address the bounded label noise, while ELR can address both bounded and unbounded label noise.
Therefore, these experiments demonstrate that using ELR to leverage ETP can successfully address the unbounded label noise in SFDA.

\vspace{-8 pt}
\section{Conclusion}
\vspace{-8 pt}
\textcolor{black}{In this paper, we study SFDA from a new perspective of LLN by theoretically showing that SFDA can be viewed as the problem of LLN with the unbounded label noise. Under this assumption, we rigorously justify that robust loss functions are not able to address the memorization issues of unbounded label noise.
Meanwhile, based on this assumption, we further theoretically and empirically analyze the learning behavior of models during the early-time training stage and find that ETP can benifit the SFDA problems.
Through extensive experiments across multiple datasets, we show that ETP can be exploited by ELR to improve prediction performance, and it can also be used to enhance existing SFDA algorithms.}

\subsubsection*{Acknowledgments}
This work is supported by Natural Sciences and Engineering Research Council of Canada (NSERC), Discovery Grants program.

\bibliography{iclr2023_conference}
\bibliographystyle{iclr2023_conference}

\newpage
\appendix
\section{Neighbors Label Noise Observations on Real-World Datasets} \label{app:noisyObervation}
This section provides more observed results and explanations of Neighbors' label noise during the Source-Free Domain Adaptation process on real-world datasets.

\begin{figure*}[!ht]
\includegraphics[width=0.95\textwidth]{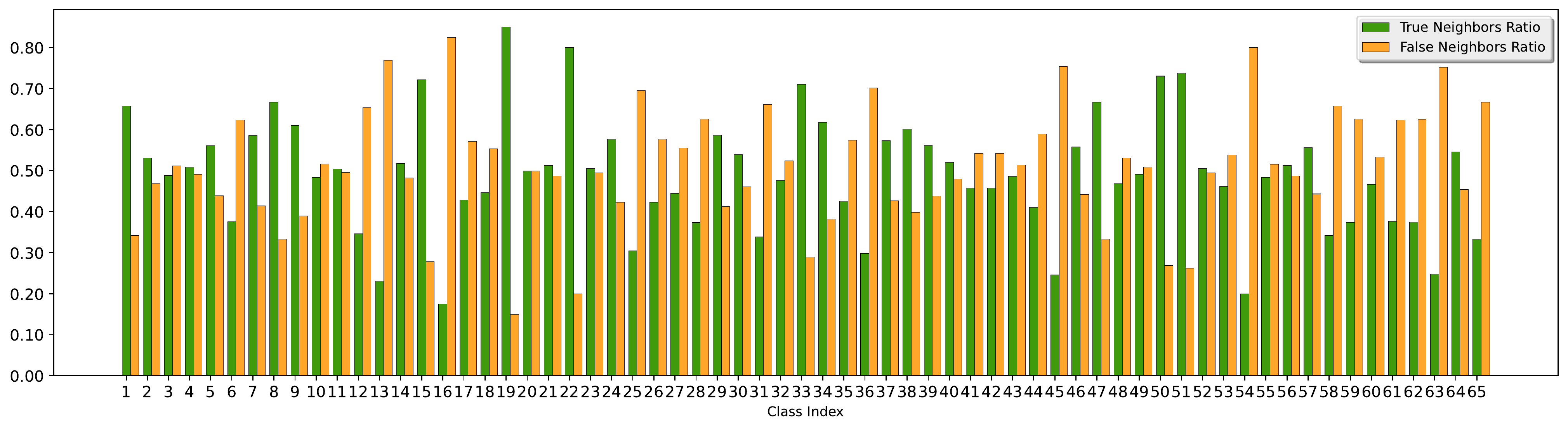}
\caption{True/False Neighbors on Office-Home}
\label{fig:tf_officehome}
\vspace{-10pt}
\end{figure*}

\begin{figure*}[!ht]
\centering
\includegraphics[width=0.33\textwidth]{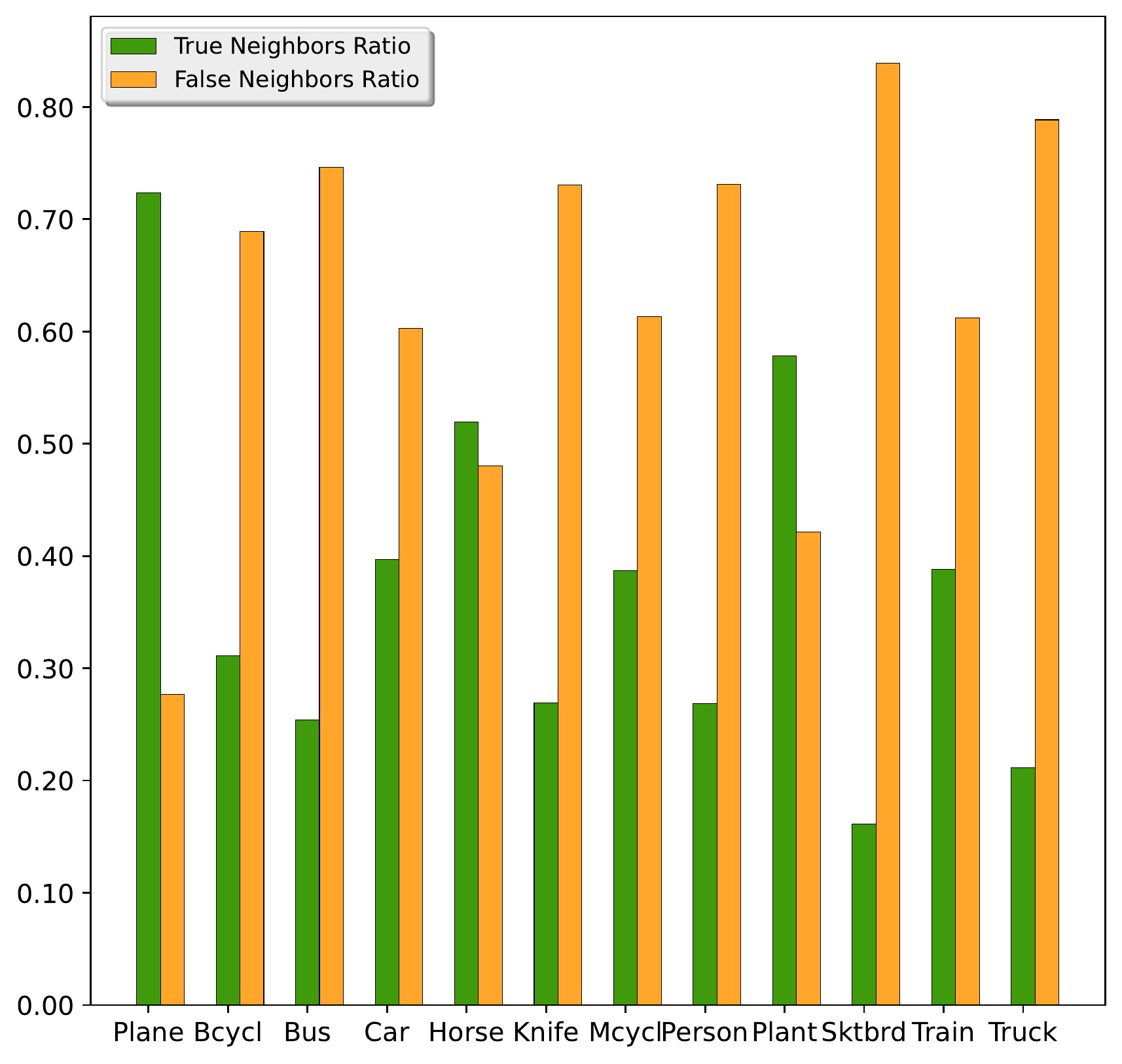}
\caption{True/False Neighbors on VisDA}
\label{fig:tf_visda}
\vspace{-12pt}
\end{figure*}

\begin{figure*}[!ht]
\includegraphics[width=0.95\textwidth]{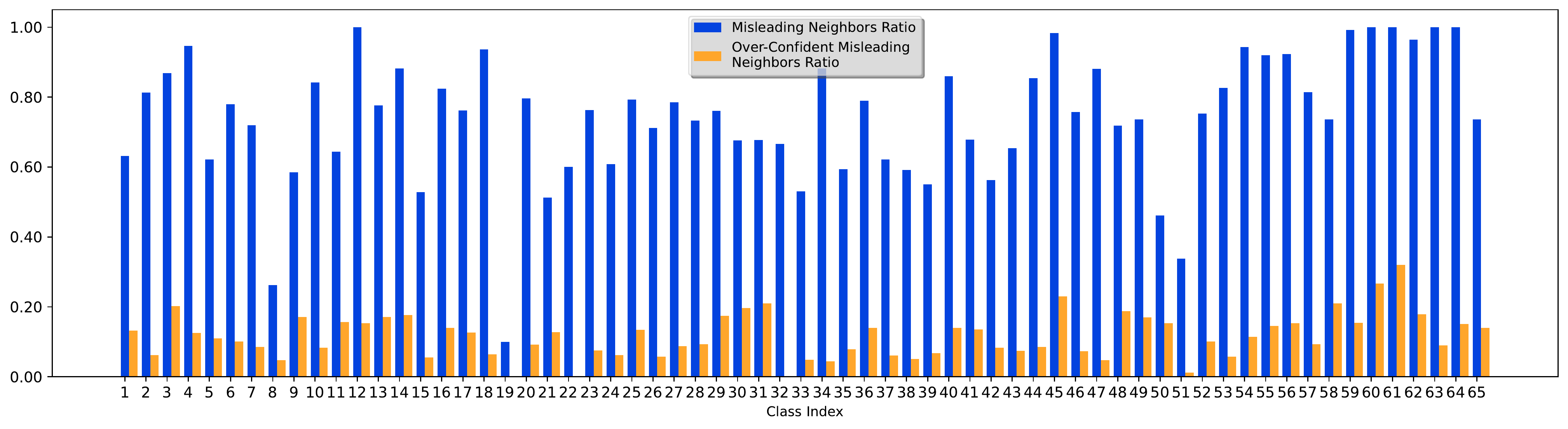}
\caption{Neighbors Label Noise Analysis on Office-Home}
\label{fig:NoisyNeigh_officehome}
\vspace{-12pt}
\end{figure*}


Currently, most SFDA methods inevitably leverage the pseudo-labels for self-supervised learning or to learn the cluster structure of the target data in the feature space, in order to realize the domain adaptation goal. However, the pseudo labels generated by the source domain are usually noisy and of poor quality due to the domain distribution shift. 
Some neighborhood-based heuristic methods \citep{yang2021exploiting, yang2021generalized} have been proposed to purify these target domain pseudo labels, which use the pseudo label of neighbors in the feature space to correct and reassign the central data's pseudo label.
In fact, such methods rely on a strong assumption: a relatively high quality of the neighbors' pseudo label. However, in our experimental observations, we find that at the very beginning of the adaptation process, the similarity of two data points in the feature space can not fully represent their label space's connection. 
Furthermore, such methods are easy to provide useless and noisy prediction information for the central data. We will show some statistical results on VisDA and Office-Home, these two real-world datasets.

Following the neighborhood construction method in \citet{yang2021exploiting, yang2021generalized}, we use the pre-trained source model to infer the target data, extract the feature space outputs and get the prediction results. We use the cosine similarity on the feature space to find the top $k$ similar neighbors (e.g., $k=2$) for each data point (named as the central data point).
Then, we collect the neighbors regarding the ground truth label of central data points and study the neighbor's quality for each class.  

\textbf{Neighbors who do not belong to the correct category}
We define the neighbors who do not belong to the same category as its central data point as \emph{False Neighbor}, which means their ground-truth labels are not the same: $Y_{neighbor} \neq Y_{central}$. And the results of VisDA (train $\to$ validation) and Office-Home (Pr $\to$ Cl) datasets are shown in Figure \ref{fig:tf_visda} and Figure \ref{fig:tf_officehome}.

\textbf{Neighbors who can not provide useful prediction information}
We further study the prediction information provided by such neighbors. Regardless of their true category properties, we consider neighbors whose \emph{Predicted Label} is the same as the \emph{Ground Truth Label} of the central data point to be \emph{Useful Neighbors}; otherwise, they are \emph{Misleading Neighbors}, as they can not provide the expected useful prediction information. We denote the \emph{Misleading Neighbors Ratio} as the proportion of noisy neighbors among all neighbors for each class. 
Besides, as some methods heuristically utilize the predicted logits as the predicted probability or confidence score in the pseudo label purification process, we further study the \emph{Over-Confident Misleading Neighbors Ratio} for each class. We defined the over-confident misleading neighbors ratio as the number of over-confident misleading neighbors (misleading neighbors with a high predicted logit, larger than 0.75) divided by the number of all neighbors per class. The results on VisDA and Office-Home are shown in Figure \ref{fig:intro-b} and Figure \ref{fig:NoisyNeigh_officehome}.

We want to clarify that the above exploratory experiment results can only reflect the phenomenon of unbounded noise in SFDA to some extent: the set of over-confidence misleading neighbors is non-empty can correspond, to some extent, to the fact that R is non-empty proved in Theorem \ref{thm:2}; but the definition of misleading neighbors does not rigorously satisfies the definition of unbounded label noise.

\section{Relationship Between Mislabeling Error And domain shift} \label{app:sec1}

In this part, we focus on explaining the relationship between the label noise and the domain shift, as illustrated in Figure \ref{fig:illu}.
The following theorem characterizes the relationship between the labeling error and the domain shift.

\begin{theorem} \label{thm:1}
Without loss of generality, we assume that the $\mathbf{\Delta}$ is positively correlated with the vector $\boldsymbol{\mu_2}-\boldsymbol{\mu_1}$\textcolor{black}{, i.e., $\mathbf{\Delta}^\top(\boldsymbol{\mu_2}-\boldsymbol{\mu_1}) > 0$}.
Let $f_S$ be the Bayes optimal classifier for the source domain $S$.
Then
\begin{equation} \label{eq:thm1}
    \Pr_{(\mathbf{x},y)\sim \mathcal{D}_T}[f_S(\mathbf{x})\not=y]= \frac{1}{2} \Phi(-\frac{d_1}{\sigma}) + \frac{1}{2} \Phi(-\frac{d_2}{\sigma}),
\end{equation}
where $d_1=\norm{\frac{\boldsymbol{\mu_2}-\boldsymbol{\mu_1}}{2}-\mathbf{c}}\mathrm{sign}(\norm{\frac{\boldsymbol{\mu_2}-\boldsymbol{\mu_1}}{2}}-\norm{\mathbf{c}})$, $d_2=\norm{\frac{\boldsymbol{\mu_2}-\boldsymbol{\mu_1}}{2}+\mathbf{c}}$,  $\mathbf{c}=(\boldsymbol{\mu_2}-\boldsymbol{\mu_1})\frac{\mathbf{\Delta}^\top(\boldsymbol{\mu_2}-\boldsymbol{\mu_1}) }{\norm{\boldsymbol{\mu_2}-\boldsymbol{\mu_1}}^2}$, and $\Phi$ is the standard normal cumulative distribution function.
\end{theorem}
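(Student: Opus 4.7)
The plan is to derive the source Bayes optimal classifier explicitly, project everything onto the one-dimensional subspace spanned by $\boldsymbol{\mu_2}-\boldsymbol{\mu_1}$, and then reconcile the resulting Gaussian tail probabilities with the norms $d_1$ and $d_2$ that appear in the statement. Because both source components share covariance $\sigma^{2}\mathbf{I}_d$ and equal priors, $f_S$ is the linear classifier whose decision surface is the perpendicular bisector of the segment joining $\boldsymbol{\mu_1}$ and $\boldsymbol{\mu_2}$; writing $\mathbf{u}=(\boldsymbol{\mu_2}-\boldsymbol{\mu_1})/\norm{\boldsymbol{\mu_2}-\boldsymbol{\mu_1}}$ and $\mathbf{m}=(\boldsymbol{\mu_1}+\boldsymbol{\mu_2})/2$, one has $f_S(\mathbf{x})=1$ iff $\mathbf{u}^{\top}(\mathbf{x}-\mathbf{m})<0$.

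Next, I will compute the class-conditional errors on the target distribution by reducing to a one-dimensional Gaussian. For a target sample with $y=1$, i.e.\ $\mathbf{x}\sim\mathcal{N}(\boldsymbol{\mu_1}+\boldsymbol{\Delta},\sigma^{2}\mathbf{I}_d)$, the scalar $\mathbf{u}^{\top}(\mathbf{x}-\mathbf{m})$ is a linear functional of a Gaussian vector and is therefore distributed as $\mathcal{N}\big(\mathbf{u}^{\top}\boldsymbol{\Delta}-\norm{\boldsymbol{\mu_2}-\boldsymbol{\mu_1}}/2,\ \sigma^{2}\big)$; the conditional misclassification probability is $\Phi\big((\mathbf{u}^{\top}\boldsymbol{\Delta}-\norm{\boldsymbol{\mu_2}-\boldsymbol{\mu_1}}/2)/\sigma\big)$. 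Symmetrically, for the $y=-1$ component the error is $\Phi\big(-(\mathbf{u}^{\top}\boldsymbol{\Delta}+\norm{\boldsymbol{\mu_2}-\boldsymbol{\mu_1}}/2)/\sigma\big)$. Averaging with equal priors yields $\Pr[f_S(\mathbf{x})\neq y]=\tfrac{1}{2}\Phi(-a_1/\sigma)+\tfrac{1}{2}\Phi(-a_2/\sigma)$, where $a_1=\norm{\boldsymbol{\mu_2}-\boldsymbol{\mu_1}}/2-\mathbf{u}^{\top}\boldsymbol{\Delta}$ and $a_2=\norm{\boldsymbol{\mu_2}-\boldsymbol{\mu_1}}/2+\mathbf{u}^{\top}\boldsymbol{\Delta}$.

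It then remains to identify $(a_1,a_2)$ with $(d_1,d_2)$ from the statement. Since $\mathbf{c}$ is, by definition, the orthogonal projection of $\boldsymbol{\Delta}$ onto $\boldsymbol{\mu_2}-\boldsymbol{\mu_1}$, I can write $\mathbf{c}=(\mathbf{u}^{\top}\boldsymbol{\Delta})\,\mathbf{u}$, so the vectors $\tfrac{\boldsymbol{\mu_2}-\boldsymbol{\mu_1}}{2}\pm\mathbf{c}$ are collinear with $\mathbf{u}$ and have Euclidean norms $\big|\norm{\boldsymbol{\mu_2}-\boldsymbol{\mu_1}}/2\pm\mathbf{u}^{\top}\boldsymbol{\Delta}\big|$. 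Under the positive-correlation hypothesis $\mathbf{u}^{\top}\boldsymbol{\Delta}>0$, the expression for $d_2$ drops the absolute value directly and gives $d_2=a_2$; for $d_1$, the extra factor $\mathrm{sign}(\norm{\boldsymbol{\mu_2}-\boldsymbol{\mu_1}}/2-\norm{\mathbf{c}})$ in the definition is exactly what converts the absolute value back to the signed quantity $a_1$. Substitution then yields equation~(\ref{eq:thm1}).

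I do not expect a deep obstacle: the argument is essentially a one-dimensional Gaussian tail calculation made possible by rotating the problem so that the decision boundary is a single coordinate hyperplane. The one mildly fiddly point that I will have to be careful about is the sign convention hidden in $d_1$, which must permit $d_1<0$ (and hence a class-conditional error exceeding $1/2$) in the regime where the shift component along $\mathbf{u}$ is so large that the target class-$1$ mean has crossed the source decision boundary; the $\mathrm{sign}$ factor in the statement is tailored precisely to encode this case, so that $\Phi(-d_1/\sigma)$ gives the correct tail on both sides of that threshold.
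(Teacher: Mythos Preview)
Your proposal is correct and follows essentially the same approach as the paper: derive the source Bayes classifier as the perpendicular bisector of $\boldsymbol{\mu_1}$ and $\boldsymbol{\mu_2}$, reduce the target error computation to a one-dimensional Gaussian tail along the direction $\boldsymbol{\mu_2}-\boldsymbol{\mu_1}$ (the paper phrases this via rotational symmetry of the isotropic Gaussian and explicit integration, whereas you invoke directly that a linear functional of a Gaussian is Gaussian), and then identify the resulting signed distances with $d_1,d_2$ through the projection $\mathbf{c}$. Your handling of the $\mathrm{sign}$ factor in $d_1$ is exactly the point the paper leaves implicit, so there is no gap.
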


Theorem \ref{thm:1} indicates that the labeling error for the target domain can be represented by a function of the domain shift $\mathbf{\Delta}$\textcolor{black}{, which can be shown numerically in Figure \ref{fig:illu_2}.}
The projection of the domain shift $\mathbf{\Delta}$ on the vector $\boldsymbol{\mu_2}-\boldsymbol{\mu_1}$ is given by $\mathbf{c}$. Since $\mathbf{c}$ is on the direction of $\boldsymbol{\mu_2}-\boldsymbol{\mu_1}$, $\mathbf{c}$ can also be represented by \textcolor{black}{$\alpha(\boldsymbol{\mu_2}-\boldsymbol{\mu_1})$, where $\alpha \in \mathbb{R}$ characterizes the magnitude of the domain shift.}
\textcolor{black}{ More specifically, in Figure \ref{fig:illu_2}, we present the relationship between the mislabeling rate and $\alpha$ for all possible $\Delta$. 
When $\Delta$ is positively correlated with $\boldsymbol{\mu_2}-\boldsymbol{\mu_1}$ (assumption in Theorem \ref{thm:1}), we have $\alpha > 0$, and when $\Delta$ is negatively correlated with $\boldsymbol{\mu_2}-\boldsymbol{\mu_1}$, we obtain $\alpha < 0$. In both situations, we can observe that the labeling error increases with the absolute value of $\alpha$ increasing, which implies that the more severe the domain shift is, the greater the mislabeling error will be obtained.
Besides, we note that when the source and target domains are the same, the mislabeling error in Eq.~(\ref{eq:thm1}) is minimized and degraded to the Bayes error, which cannot be reduced \citep{fukunaga2013introduction}. This corresponds to the situation when $\Delta$ is perpendicular to $\boldsymbol{\mu_2}-\boldsymbol{\mu_1}$, $\mathbf{c}=\mathbf{0}$, and $\alpha = 0$ shown in Figure \ref{fig:illu_2}.
}

\begin{figure}[h]
\centering
\includegraphics[width=0.6\textwidth]{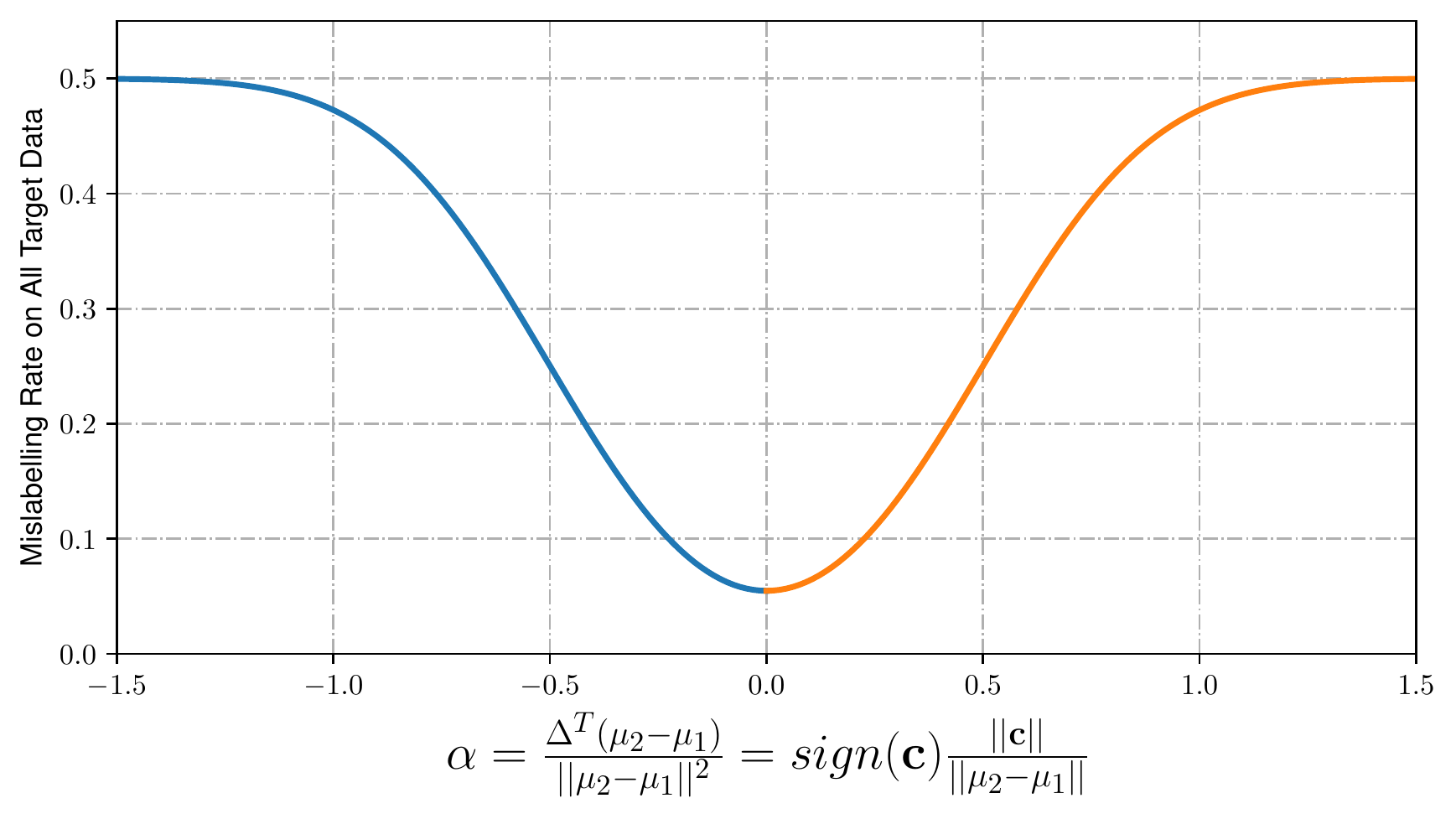}
\caption{Plot of Mislabeling Rate with different $\alpha$. We define $\mathbf{c}$ as the projection of the domain shift $\mathbf{\Delta}$ on the vector $\boldsymbol{\mu_2}-\boldsymbol{\mu_1}$, and $\alpha$ represents the magnitude of domain shift projected on $\boldsymbol{\mu_2}-\boldsymbol{\mu_1}$.
}
\label{fig:illu_2}
\end{figure}

\subsection{Proofs for Theorem \ref{thm:1}} \label{app:pthm1}

\begin{proof}
The Bayes classifier $f_S$ predicts $\mathbf{x}$ to the first component when
\begin{equation} \label{eq:app1}
    \log\frac{\Pr[y=1|X=\mathbf{x}]}{\Pr[y=-1|X=\mathbf{x}]} > 0.
\end{equation}
Since the distributions of the two components with the same priors for the source domain are given by $\mathcal{N}(\boldsymbol{\mu_1}, \sigma^2\mathbf{I}_d)$ and $\mathcal{N}(\boldsymbol{\mu_2}, \sigma^2\mathbf{I}_d)$, respectively.
Based on Bayes' rule, Eq.~(\ref{eq:app1}) is equivalent to
\begin{equation} \label{eq:app2}
    \log\frac{\Pr[X=\mathbf{x}|y=1]}{\Pr[X=\mathbf{x}|y=-1]} > 0
\end{equation}
Solving the left hand side of Eq.~(\ref{eq:app2}) by using the knowledge of two multivariate Gaussian distributions, we get
\begin{equation}
    h_S(\mathbf{x}) \coloneqq \log\frac{\Pr[X=\mathbf{x}|y=1]}{\Pr[X=\mathbf{x}|y=-1]} =  \frac{\mathbf{x}^\top (\boldsymbol{\mu_1} - \boldsymbol{\mu_2})}{\sigma^2} - \frac{\norm{\boldsymbol{\mu_1}}^2 -\norm{\boldsymbol{\mu_2}}^2}{2\sigma^2}.
\end{equation}
So $f_S$ predicts $\mathbf{x}$ to the first component when $h_S(\mathbf{x}) > 0$ and $f_S$ predicts $\mathbf{x}$ to the second component when $h_S(\mathbf{x}) \leq 0$
The decision boundary is $\mathbf{z}$ such that $h_S(\mathbf{z})=0$.
When there is no domain shift $\mathbf{\Delta}=\mathbf{0}$, we have $\mathcal{D}_S = \mathcal{D}_T$, and the mislabeling rate is the Bayes error, which is given by:
\begin{equation} \label{eq:app3}
    \Pr_{(\mathbf{x}, y)\sim \mathcal{D}_S}[f_S(\mathbf{x})\not=y]=\frac{1}{2}\Pr_{\mathbf{x}\sim \mathcal{N}(\boldsymbol{\mu_1}, \sigma^2\mathbf{I}_d)}[h_S(\mathbf{x})<0|y=1] + \frac{1}{2}\Pr_{\mathbf{x}\sim \mathcal{N}(\boldsymbol{\mu_2}, \sigma^2\mathbf{I}_d)}[h_S(\mathbf{x})>0|y=-1]
\end{equation}
We first study the first term in Eq.~(\ref{eq:app3}):
\begin{align}
    & \Pr_{\mathbf{x}\sim \mathcal{N}(\boldsymbol{\mu_1}, \sigma^2\mathbf{I}_d)}[h_S(\mathbf{x})<0|y=1]  \nonumber \\
    = & \idotsint\displaylimits_{\{\mathbf{x}|\mathbf{x}^\top (\boldsymbol{\mu_1} - \boldsymbol{\mu_2})<\frac{\norm{\boldsymbol{\mu_1}}^2 -\norm{\boldsymbol{\mu_2}}^2}{2} \}} \frac{1}{(2\pi\sigma^2)^{\frac{d}{2}}} \exp{\left(-\frac{\norm{\mathbf{x}-\boldsymbol{\mu_1}}^2}{2\sigma^2} \right)} \mathrm{d}x_1 \mathrm{d}x_2 \cdots \mathrm{d}x_d \nonumber \\
    = & \idotsint\displaylimits_{\{\mathbf{x}|-\infty <x_1,x_2, \dotsc ,x_{d-1}<\infty, d_0<x_d \}} \frac{1}{(2\pi\sigma^2)^{\frac{d}{2}}} \exp{\left(-\frac{\sum_{i=1}^d x_i^2}{2\sigma^2} \right)} \mathrm{d}x_1 \mathrm{d}x_2 \cdots \mathrm{d}x_d \nonumber \\
    = & \int_{d_0}^{\infty} \frac{1}{2\pi\sigma^2}\exp{\left(-\frac{x_d^2}{2\sigma^2} \right)} \mathrm{d}x_d \nonumber \\
    = & \Phi(-\frac{d_0}{\sigma}), \nonumber 
\end{align}
where the second equality is because of the rotationally symmetric property for isotropic Gaussian random vectors, $\Phi$ is the cumulative distribution function of the standard Gaussian distribution, and $d_0=\norm{(\boldsymbol{\mu_2} - \boldsymbol{\mu_1})/2}$. 
Applying the similar mathematical steps for the second term in Eq.~(\ref{eq:app3}), and take them into Eq.~(\ref{eq:app3}):
\begin{equation} \label{eq:app4}
    \Pr_{(\mathbf{x}, y)\sim \mathcal{D}_S}[f_S(\mathbf{x})\not=y]=\Phi(-\frac{\norm{\boldsymbol{\mu_2} - \boldsymbol{\mu_1}}}{2\sigma}).
\end{equation}
When there is no domain shift, the labeling error is the Bayes error, which is expressed by Eq.~(\ref{eq:app4}).

Then we consider the case when $\mathbf{\Delta}\not=\mathbf{0}$.
The distributions of the first and the second component are $\mathcal{N}(\boldsymbol{\mu_1}+\mathbf{\Delta}, \sigma^2\mathbf{I}_d)$ and $\mathcal{N}(\boldsymbol{\mu_2}+\mathbf{\Delta}, \sigma^2\mathbf{I}_d)$, respectively.
Notice that the decision boundary $\mathbf{z}$ is the affine hyperplane.
Any shift paralleled to this affine hyperplane will not affect the final component predictions.
The domain shift $\mathbf{\Delta}$ can be decomposed into the sum of two vectors: the one is paralleled to this affine hyperplane, and another is perpendicular to the hyperplane.
It is straightforward to verify that $\boldsymbol{\mu_2} - \boldsymbol{\mu_1}$ is perpendicular to the hyperplane.
Thus, we project the domain shift $\mathbf{\Delta}$ onto the vector $\boldsymbol{\mu_2} - \boldsymbol{\mu_1}$ to get the component of $\mathbf{\Delta}$ that is perpendicular to the hyperplane, which is given by:
\begin{equation} 
    \mathbf{c}=(\boldsymbol{\mu_2}-\boldsymbol{\mu_1})\frac{\mathbf{\Delta}^\top(\boldsymbol{\mu_2}-\boldsymbol{\mu_1}) }{\norm{\boldsymbol{\mu_2}-\boldsymbol{\mu_1}}^2}.
\end{equation}
Since we assume $\mathbf{\Delta}$ is positively correlated to the vector $\boldsymbol{\mu_2}-\boldsymbol{\mu_1}$, $\alpha=\frac{\mathbf{\Delta}^\top(\boldsymbol{\mu_2}-\boldsymbol{\mu_1}) }{\norm{\boldsymbol{\mu_2}-\boldsymbol{\mu_1}}^2}$ can be regarded as the magnitude of the domain shift along the direction $\boldsymbol{\mu_2}-\boldsymbol{\mu_1}$.
Note that the results also hold for the case where $\mathbf{\Delta}$ is negatively correlated to $\boldsymbol{\mu_2}-\boldsymbol{\mu_1}$.
The whole proof can be obtained by following the very similar proof steps for the positively correlated case.

The mislabeling rate of the optimal source classifier $f_S$ on target data is:
\begin{equation} \label{eq:app5}
        \Pr_{(\mathbf{x}, y)\sim \mathcal{D}_T}[f_S(\mathbf{x})\not=y]=\frac{1}{2}\Pr_{\mathcal{N}(\boldsymbol{\mu_1}+\mathbf{\Delta}, \sigma^2\mathbf{I}_d)}[h_S(\mathbf{x})<0|y=1] + \frac{1}{2}\Pr_{\mathcal{N}(\boldsymbol{\mu_2}+\mathbf{\Delta}, \sigma^2\mathbf{I}_d)}[h_S(\mathbf{x})>0|y=-1]
\end{equation}

\begin{figure}[t]
\centering
\includegraphics[width=0.6\textwidth]{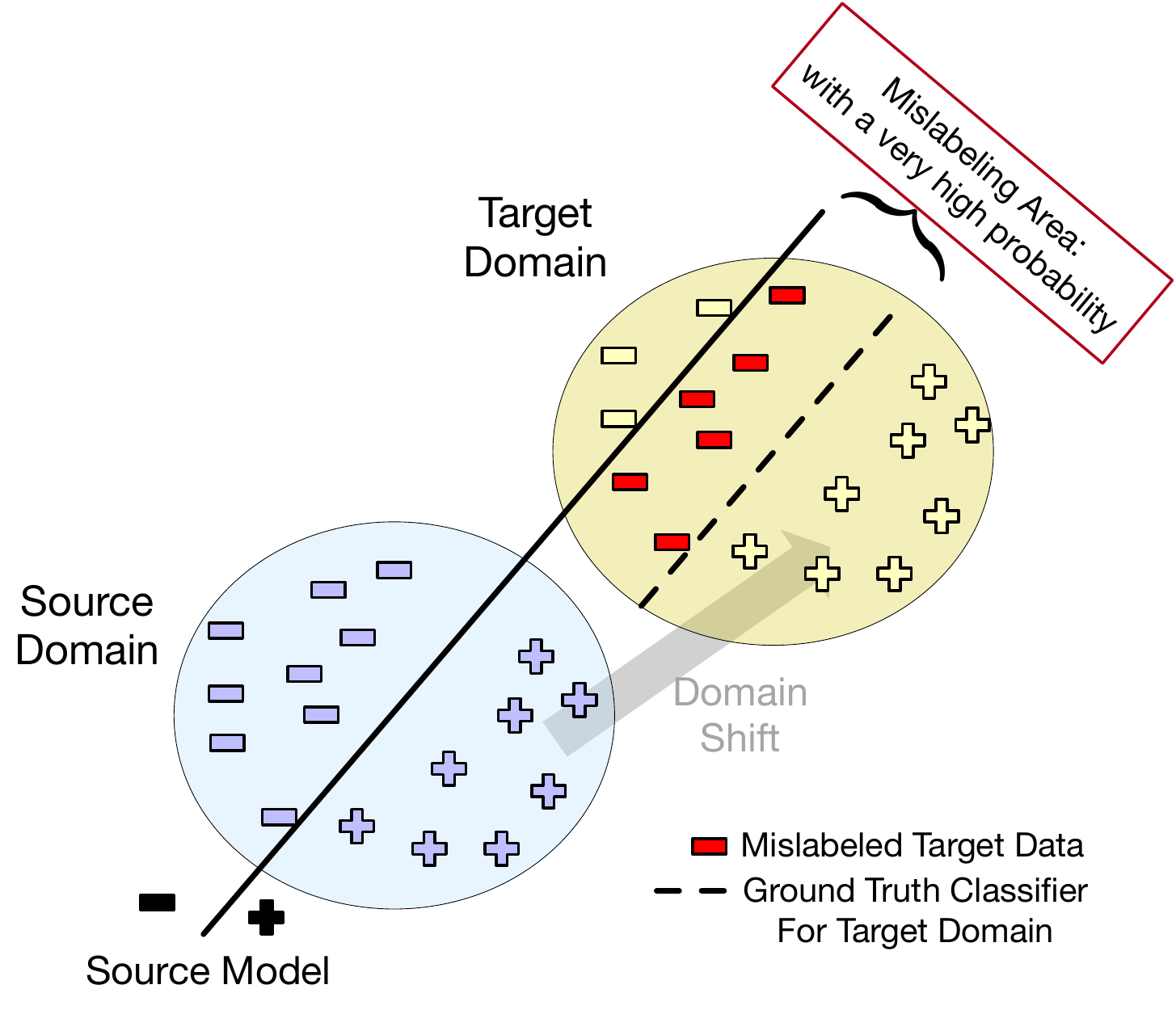}
\caption{Illustration of noisy labels generated by the domain shift.
}
\label{fig:illu}
\end{figure}

We first calculate the first term of Eq.~(\ref{eq:app5}).
Following the same tricks discussed above:
\begin{align} \label{eq:app6}
    & \Pr_{\mathbf{x}\sim \mathcal{N}(\boldsymbol{\mu_1}+\mathbf{\Delta}, \sigma^2\mathbf{I}_d)}[h_S(\mathbf{x})<0|y=1]  \nonumber \\
    = & \Pr_{\mathbf{x}\sim \mathcal{N}(\boldsymbol{\mu_1}+\mathbf{c}, \sigma^2\mathbf{I}_d)}[h_S(\mathbf{x})<0|y=1]  \nonumber \\
    = & \idotsint\displaylimits_{\{\mathbf{x}|\mathbf{x}^\top (\boldsymbol{\mu_1} - \boldsymbol{\mu_2})<\frac{\norm{\boldsymbol{\mu_1}}^2 -\norm{\boldsymbol{\mu_2}}^2}{2} \}} \frac{1}{(2\pi\sigma^2)^{\frac{d}{2}}} \exp{\left(-\frac{\norm{\mathbf{x}-\boldsymbol{\mu_1}-\mathbf{\Delta}}^2}{2\sigma^2} \right)} \mathrm{d}x_1 \mathrm{d}x_2 \cdots \mathrm{d}x_d \nonumber \\
    = & \idotsint\displaylimits_{\{\mathbf{x}|-\infty <x_1,x_2, \dotsc ,x_{d-1}<\infty, d_1<x_d \}} \frac{1}{(2\pi\sigma^2)^{\frac{d}{2}}} \exp{\left(-\frac{\sum_{i=1}^d x_i^2}{2\sigma^2} \right)} \mathrm{d}x_1 \mathrm{d}x_2 \cdots \mathrm{d}x_d \nonumber \\
    = & \int_{d_1}^{\infty} \frac{1}{2\pi\sigma^2}\exp{\left(-\frac{x_d^2}{2\sigma^2} \right)} \mathrm{d}x_d \nonumber \\
    = & \Phi(-\frac{d_1}{\sigma}),  
\end{align}
where $d_1=\norm{\frac{\boldsymbol{\mu_2}-\boldsymbol{\mu_1}}{2}-\mathbf{c}}\mathrm{sign}(\norm{\frac{\boldsymbol{\mu_2}-\boldsymbol{\mu_1}}{2}}-\norm{\mathbf{c}})$.

Similarly, the second term is given by:
\begin{align} \label{eq:app7}
     \Pr_{\mathbf{x}\sim \mathcal{N}(\boldsymbol{\mu_2}+\mathbf{\Delta}, \sigma^2\mathbf{I}_d)}[h_S(\mathbf{x})>0|y=-1]  = & \Phi(-\frac{d_2}{\sigma}), 
\end{align}
where $d_2=\norm{\frac{\boldsymbol{\mu_2}-\boldsymbol{\mu_1}}{2}+\mathbf{c}}$.

Taking Eq.~(\ref{eq:app6}) and Eq.~(\ref{eq:app7}) into Eq.~(\ref{eq:app5}), we have 
\begin{equation}
    \Pr_{(\mathbf{x}, y)\sim \mathcal{D}_T}[f_S(\mathbf{x})\not=y]=\frac{1}{2} \Phi(-\frac{d_1}{\sigma}) + \frac{1}{2} \Phi(-\frac{d_2}{\sigma}).
\end{equation}
\end{proof}

\section{Proofs for Theorem \ref{thm:2}} \label{app:p2}
\begin{proof}
Without loss of generality, we choose to assume $\boldsymbol{\mu_2}=\boldsymbol{\mu_1}+\sigma \mathbf{1}_d$ as the convenient way to present our results.
From the proof for Theorem \ref{thm:1}, we know that  $\mathbf{x}_0=\frac{\boldsymbol{\mu_1}+\boldsymbol{\mu_2}}{2} + \mathbf{\Delta}$ is at the decision boundary such that $h_T(\mathbf{x_0})=0$, where 
\begin{equation*}
    h_T(\mathbf{x}) = \frac{\mathbf{x}^\top (\boldsymbol{\mu_1} - \boldsymbol{\mu_2})}{\sigma^2} - \frac{\norm{\boldsymbol{\mu_1}+ \mathbf{\Delta}}^2 -\norm{\boldsymbol{\mu_2}+ \mathbf{\Delta}}^2}{2\sigma^2}.
\end{equation*}

Let $f_T$ be the optimal Bayes classifier for the target domain, which can be obtained the same way as $f_S$ mentioned in \ref{app:pthm1}.
The equation $h_T(\mathbf{x_0})=0$ implies that 
\begin{equation*}
    \Pr_{(\mathbf{x}, y)\sim \mathcal{D}_T}[y=1|X=\mathbf{x}_0] = \Pr_{(\mathbf{x}, y)\sim \mathcal{D}_T}[y=-1|X=\mathbf{x}_0].
\end{equation*}

Note that $\mathbf{x}_0$ is on the affine hyperplane $\mathbf{z}$ where $h_T(\mathbf{z})=0$.
Any data points on this hyperplane will have the equal probabilities to be correctly classified.
We start from this hyperplane and calculate another point $\mathbf{x}_1$, where $\Pr_{(\mathbf{x}, y)\sim \mathcal{D}_T}[y=1|X=\mathbf{x}_1]$ is at least $\frac{1-\delta}{\delta}\Pr_{(\mathbf{x}, y)\sim \mathcal{D}_T}[y=-1|X=\mathbf{x}_1]$.
Thus, for any points that are mislabeled and far away from $\mathbf{x}_1$ will result in $\Pr_{(\mathbf{x}, y)\sim \mathcal{D}_T}[y=1|X=\mathbf{x}_1]\geq 1 - \delta$.
We first aim to find such a data point $\mathbf{x}_1$.
Let $\mathbf{x}_1=\mathbf{x}_0 - m_0\sigma\mathbf{1}_d$, where $m_0$ is the scalar measures the distance between the point $\mathbf{x}_1$ to the hyperplane $\mathbf{z}$.
We need to find $m_0$ such that 
\begin{align} \label{eq:app8}
    \frac{P_T(\mathbf{x}_1|y=1)}{P_T(\mathbf{x}_1|y=-1)} \geq & 1 - \delta,
\end{align}
where 

\begin{align} \label{eq:app9}
    & \frac{P_T(\mathbf{x}_1|y=1)}{P_T(\mathbf{x}_1|y=-1)}  \nonumber \\
    =& \exp{\big(-\frac{\norm{\mathbf{x}_1 - \boldsymbol{\mu}_1 - \mathbf{\Delta}}^2}{2\sigma^2} + \frac{\norm{\mathbf{x}_1 - \boldsymbol{\mu}_2 - \mathbf{\Delta}}^2}{2\sigma^2}\big)} \nonumber \\
    =& \exp{\big(-\frac{\norm{\frac{\boldsymbol{\mu}_2 - \boldsymbol{\mu}_1}{2} -  m_0\sigma\mathbf{1}_d }^2}{2\sigma^2} + \frac{\norm{\frac{\boldsymbol{\mu}_2 - \boldsymbol{\mu}_1}{2} +  m_0\sigma\mathbf{1}_d}^2}{2\sigma^2}\big)} \nonumber \\
    = & \exp{(m_0 d)}
\end{align}

Taking Eq.~(\ref{eq:app9}) into Eq.~(\ref{eq:app8}), we get $m_0 \geq (\log\frac{1-\delta}{\delta})/d$.
Since the isotropic Gaussian random vectors has the rotationally symmetric property, we can transform the integration of multivariate normal distribution to standard normal distribution with different intervals of integration.
Then any data points from a region that have at most $\norm{\mathbf{x}_1-\boldsymbol{\mu_1}-\mathbf{\Delta}}$ distance to its mean $\boldsymbol{\mu_1}+\mathbf{\Delta}$ will have at least $0.99$ probability coming from the first component.
Let the region $\mathbf{R}_1$ be:
\begin{align*}
    \mathbf{R}_1 = \{\mathbf{x}: \norm{\mathbf{x} - \boldsymbol{\mu}_1 - \mathbf{\Delta}} \leq \norm{\mathbf{x}_1-\boldsymbol{\mu_1}-\mathbf{\Delta}}\}
\end{align*}
Equivalently, taking $\mathbf{R}_1$ can be simplified:
\begin{equation*}
    \mathbf{R}_1 = \{\mathbf{x}: \norm{\mathbf{x} - \boldsymbol{\mu}_1 - \mathbf{\Delta}} \leq \sigma(\frac{\sqrt{d}}{2} - \frac{\log\frac{1-\delta}{\delta}}{\sqrt{d}})\}
\end{equation*}

The region $\mathbf{R}_1$ is valid when data dimension $d$ is large.
This is realistic in practice.
Since neural networks are usually dealing with high dimension data, for example $d \gg (1)$, the region $\mathbf{R}_1$ is valid.

On the other hand, we aim to find a region $\mathbf{R}_2$ where all data points are mislabeled.
From the proof for Theorem 1, the source classifier $h_S$ is given by
\begin{equation}
    h_S(\mathbf{x}) = \frac{\mathbf{x}^\top (\boldsymbol{\mu_1} - \boldsymbol{\mu_2})}{\sigma^2} - \frac{\norm{\boldsymbol{\mu_1}}^2 -\norm{\boldsymbol{\mu_2}}^2}{2\sigma^2}.
\end{equation}
Any data points are classified to the second component if $h_S(\mathbf{x})< 0$.
Hence
\begin{equation*}
    \mathbf{R}_2 = \{\mathbf{x}: \mathbf{x}^\top \mathbf{1}_d > \frac{\sigma d + 2\boldsymbol{\mu}_1^\top \mathbf{1}_d}{2} \}
\end{equation*}

We take the intersection of $\mathbf{R}_1$ and $\mathbf{R}_2$, all data points from this intersection are (1) having at least $1-\delta$ probability coming from the first component, and (2) being classified to the second component.
Formally, for $(\mathbf{x}, y) \sim \mathcal{D}_T$, if $\mathbf{x}\in \mathbf{R}_1 \bigcap \mathbf{R}_2$, then
\begin{equation}
    \Pr[f_S(\mathbf{x})\not=y] \geq 1-\delta,
\end{equation}

We note that $\mathbf{x}\in \mathbf{R}_1 \bigcap \mathbf{R}_2$ is non-empty when $(\log\frac{1-\delta}{\delta})/d < \alpha$, where $\alpha = \frac{\mathbf{\Delta}^\top(\boldsymbol{\mu_2}-\boldsymbol{\mu_1}) }{\norm{\boldsymbol{\mu_2}-\boldsymbol{\mu_1}}^2}$ is the magnitude of the domain shift along with the direction $\boldsymbol{\mu_2}-\boldsymbol{\mu_1}$.
Since $\mathbf{x}_1$ is chosen from $\mathbf{R}_1$, to verify that $\mathbf{R}_1 \bigcap \mathbf{R}_2$ is non-empty, we only need to verify that $\mathbf{x}_1$ also belongs to $\mathbf{R}_2$.

$\mathbf{x}_1 \in \mathbf{R}_2$ if and only if:
\begin{align*}
    \mathbf{x}_1^\top \mathbf{1}_d > & \frac{\sigma d + 2\boldsymbol{\mu}_1^\top \mathbf{1}_d}{2} \\
    (\boldsymbol{\mu}_1+\mathbf{c}+\frac{\sigma}{2}\mathbf{1}_d-m_0\sigma\mathbf{1}_d)^\top \mathbf{1}_d > & \frac{\sigma d + 2\boldsymbol{\mu}_1^\top \mathbf{1}_d}{2}  \\
     (\boldsymbol{\mu}_1+\alpha \sigma \mathbf{1}_d+\frac{\sigma}{2}\mathbf{1}_d-m_0\sigma\mathbf{1}_d)^\top \mathbf{1}_d > & \frac{\sigma d + 2\boldsymbol{\mu}_1^\top \mathbf{1}_d}{2}  \\
     (\alpha-m_0)\sigma d > & 0,
\end{align*}
where $ \mathbf{c}=(\boldsymbol{\mu_2}-\boldsymbol{\mu_1})\frac{\mathbf{\Delta}^\top(\boldsymbol{\mu_2}-\boldsymbol{\mu_1}) }{\norm{\boldsymbol{\mu_2}-\boldsymbol{\mu_1}}^2}$.

Therefore, if $\alpha>m_0\geq (\log\frac{1-\delta}{\delta})/d$, $\mathbf{R}_1 \bigcap \mathbf{R}_2$ is non-empty.

Next, we show $\Pr_{(\mathbf{x}, y) \sim \mathcal{D}_T}[\mathbf{x}\in \mathbf{R}]$ increases as $\alpha$ increases.

Let event $\mathbf{A}_0$ be a set of $\mathbf{x}$ such that they are mislabeled by $f_S$ (i.e.~$f_S(\mathbf{x}) \not= y$).
Let event $\mathbf{A}_1$ be a set of $\mathbf{x}$ such that they are from the first component but are mislabeled to the second component with a probability $\Pr[f_S(\mathbf{x}\not=y)]< 1-\delta$.
Let event $\mathbf{A}_2$ be a set of $\mathbf{x}$ such that they are from the second component but are mislabeled to the first component with a probability $\Pr[f_S(\mathbf{x}\not=y)]< 1-\delta$.
Thus
\begin{equation}
    \Pr_{(\mathbf{x}, y) \sim \mathcal{D}_T}[\mathbf{x}\in \mathbf{R}]
    = \Pr_{(\mathbf{x}, y) \sim \mathcal{D}_T}[\mathbf{A}_0] - \Pr_{(\mathbf{x}, y) \sim \mathcal{D}_T}[\mathbf{A}_1] -\Pr_{(\mathbf{x}, y) \sim \mathcal{D}_T}[\mathbf{A}_2]
\end{equation}

Let event $\mathbf{A}_3$ be a set of $\mathbf{x}$ such that they are from the first component such that $\Pr[f_S(\mathbf{x}\not=y)]< 1-\delta$ or $\Pr[f_S(\mathbf{x}=y)]< 1-\delta$.
Let event $\mathbf{A}_4$ be a set of $\mathbf{x}$ such that they are from the second component but are mislabeled to the first component.
For $\Pr[\mathbf{A}_3]$,
\begin{equation*}
    \Pr_{(\mathbf{x}, y) \sim\mathcal{N}(\boldsymbol{\mu_1}+\mathbf{\Delta}, \sigma^2\mathbf{I}_d)}[\mathbf{A}_3] = \Pr_{(\mathbf{x}, y) \sim\mathcal{N}(\boldsymbol{\mu_1}+\mathbf{\Delta}, \sigma^2\mathbf{I}_d)}[\mathbf{R}_1^\complement],
\end{equation*}
which does not change as the domain shift $\mathbf{\Delta}$ varies.
Meanwhile,
\begin{equation*}
    \Pr_{(\mathbf{x}, y) \sim\mathcal{N}(\boldsymbol{\mu_2}+\mathbf{\Delta}, \sigma^2\mathbf{I}_d)}[\mathbf{A}_4]=\Phi(-\frac{\norm{\frac{\boldsymbol{\mu_2}-\boldsymbol{\mu_1}}{2}+\mathbf{c}}}{\sigma}),
\end{equation*}
which is given by Eq.~(\ref{eq:app7}).
By our assumption, the domain shift $\mathbf{\Delta}$ is positively correlated with the vector $\boldsymbol{\mu_2}-\boldsymbol{\mu_1}$.
So when $\alpha$ increases, $ \Pr_{(\mathbf{x}, y) \sim\mathcal{N}(\boldsymbol{\mu_2}+\mathbf{\Delta}, \sigma^2\mathbf{I}_d)}[\mathbf{A}_4]$ decreases.

Since $\mathbf{A}_1 \subseteq \mathbf{A}_3$ and $\mathbf{A}_2 \mathbf{A}_4$, the probability measure on $\mathbf{R}$ is given by:
\begin{align} \label{eq:app21}
        \Pr_{(\mathbf{x}, y) \sim \mathcal{D}_T}[\mathbf{x}\in \mathbf{R}]
    & = \Pr_{(\mathbf{x}, y) \sim \mathcal{D}_T}[\mathbf{A}_0] - \Pr_{(\mathbf{x}, y) \sim \mathcal{D}_T}[\mathbf{A}_1] -\Pr_{(\mathbf{x}, y) \sim \mathcal{D}_T}[\mathbf{A}_2] \nonumber \\
    & \geq \Pr_{(\mathbf{x}, y) \sim \mathcal{D}_T}[\mathbf{A}_0] - \Pr_{(\mathbf{x}, y) \sim \mathcal{D}_T}[\mathbf{A}_3] -\Pr_{(\mathbf{x}, y) \sim \mathcal{D}_T}[\mathbf{A}_4],
\end{align}
where the first term is the mislabeling rate that increases as $\alpha$ increases (given by Theorem \ref{thm:1}); the second term is a constant; the third term decreases as as $\alpha$ increases.
The equality in Eq.~(\ref{eq:app21}) holds when $\alpha \to \infty$.
Therefore,  when the magnitude of the domain shift $\alpha$ increases, the lower bound of $\Pr_{(\mathbf{x}, y) \sim \mathcal{D}_T}[\mathbf{x}\in \mathbf{R}]$ increases, which forces more points to break the conventional LLN assumption.

\end{proof}

\section{Background Introduction and Proofs for Lemma \ref{lem:1}} \label{app:lem}

Learning with label noise is an important task and topic in deep learning and modern artificial intelligence research. The main idea behind it is robust training, which can be further divided into fine-grained categories, such as robust architecture, robust regularization, robust loss design, and simple selection \citep{LLN_survey}. For example, for the robust architecture-based methods, they propose to modify the deep model's architecture, including adding an adaptation layer or leveraging a dedicated module, to learn the label transition process and to tackle the noisy label. In addition, the robust regularization approaches usually enforce the DNN to overfit less to false-labeled examples by adopting a regularizer, explicitly or implicitly. For instance, \citet{Yi_2022_CVPR} proposed to utilize a contrastive regularization term to learn a noisy-label robust representation. 
Recently, with the widespread implementation of AI technologies, the topics of trustworthiness and fairness have drawn a lot of interest \citep{pmlr-v97-liu19f,pmlr-v162-shui22a,shui2022on}. How to provide trustworthy and fair learning in LLN problems is a significant research direction.
In this paper, we will, however, develop our discussion based on the robust loss methods in LLN.

In this section, we will first introduce the concepts and technical details of some noise-robust loss based LLN methods, including GCE \citep{zhang2018generalized}, SL \citep{wang2019symmetric}, NCE \citep{ma2020normalized}, and GJS \citep{englesson2021generalized}. Then, we will present the proof details of Lemma \ref{lem:1}.

\subsection{Noise-Robust Loss Functions in LLN methods}


Among the numerous studies of LLN methods, loss correction is a major branch of research. The main idea of loss correction is to modify the loss function and make it robust to noisy labels.

As indicated in \citet{ma2020normalized}, the loss function $\ell$ is defined to be noise robust if $\sum_{k=1}^K \ell(h(\mathbf{x}), k)=C$, where $C$ is a positive constant and $K$ is the overall class number of label space.
For example, the most widely utilized Cross-Entropy (CE) loss is unbounded and therefore is not robust to the label noise. Some LLN studies show that existing loss functions such as mean absolute error (MAE) \citep{manwani2013noise}, reverse cross entropy (RCE) \citep{wang2019symmetric}, normalized cross entropy (NCE) \citep{ma2020normalized}, and normalized focal loss (NFL) are noise-robust and that combining them with CE can help mitigate the sensitivity of the model to noisy labels.

More specifically, for a given data $(\mathbf{x}, \mathbf{y})$ and a classifier $h(\mathbf{x})$, GCE \citep{zhang2018generalized} leverages the negative Box-Cox transformation as a loss function, which can exploit the benefits of both the noise-robustness provided by MAE and the implicit weighting scheme of CE:
\begin{equation*} 
    \ell_{\mathbf{GCE}}(h(\mathbf{x}), \mathbf{e}_k) = \frac{(1-h_{k}(\mathbf{x})^{q})}{q}
\end{equation*}
where $q \in (0,1]$ is a hyperparameter to be decided. 

Another noise-robust loss based method SL \citep{wang2019symmetric} proposes combining the reverse cross entropy (RCE) loss, which is noise tolerant, with CE loss and obtain the $\mathcal{L}_{\mathbf{SL}}$:
\begin{align*} 
    \ell_{\mathbf{SL}} &= \alpha\ell_{\mathbf{CE}} + \beta\ell_{\mathbf{RCE}} \\
                              &= -(\alpha\sum_{k=1}^{K}q(k|\mathbf{x})log p(k|\mathbf{x}) + \beta\sum_{k=1}^{K}p(k|\mathbf{x})log q(k|\mathbf{x}))
\end{align*}
where $p(k|\mathbf{x})$ is the predicted distribution over labels by classifier $h(\mathbf{x})$ and $q(k|\mathbf{x})$ is the ground truth class distribution conditioned on sample $\mathbf{x}$.

GJS \citep{zhang2018generalized} utilizes the multi-distribution generalization of Jensen-Shannon Divergence as loss function, which has been proven noise-robust and is in fact a generalization of CE and MAE.
Concretely, the generalized JS divergence and GJS loss are defined as:
\begin{equation*} 
    D_{\mathbf{GJS}_{\mathbf{\pi}}} = \sum_{i=1}^{M}\pi_{i}D_{\mathbf{KL}}\Big(\boldsymbol{p}^{(i)} \Big|\Big| \sum_{j=1}^{M}\pi_{j}\boldsymbol{p}^{(j)}\Big)
\end{equation*}

\begin{equation*} 
    \ell_{\mathbf{GJS}}(\mathbf{x}, \mathbf{y}, h) = \frac{D_{\mathbf{GJS}_{\mathbf{\pi}}}(\mathbf{y},h(\tilde{\mathbf{x}}^{(2)}),...,h(\tilde{\mathbf{x}}^{(M)}))}{Z}
\end{equation*}
where \boldsymbol{$\pi$}, $\boldsymbol{p}^{(i)}$ are categorical distributions over $K$ classes, $\tilde{\mathbf{x}}^{(i)}\sim\mathcal{A}(\mathbf{x})  $, a random perturbation of sample $\mathbf{x}$, and $Z=-(1-\pi_{1})$log$(1-\pi_1)$

Further, \citet{ma2020normalized} shows a simple loss normalization scheme which can be applied for any loss $\mathcal{L}$:
\begin{equation*} 
    \ell_{\mathbf{NORM}} = \frac{\ell(h(\mathbf{x}), \mathbf{y})}{\sum_{k=1}^{K}\ell(h(\mathbf{x}), k)}
\end{equation*}
The study found that the normalized loss can indeed satisfy the robustness condition. However, it will also cause an underfitting problem in some situations.

Note that generalized cross entropy (GCE \citep{zhang2018generalized}) extends MAE and symmetric loss (SL \citep{wang2019symmetric}) extends RCE.
So we study GCE and SL in our experiments instead studying MAE and RCE.
Besides, GJS \citep{englesson2021generalized} is shown to be tightly bounded around $\sum_{k=1}^K \ell(h(\mathbf{x}), k)$.
All these methods have shown to be noise tolerant under either bounded random label noise or bounded class-conditional label noise with additional assumption that $R(h^\star)=0$.
We show that under the same assumption with unbounded label noise datasets, these methods are not noise tolerant in section \ref{subsecforlemma}.

\subsection{Proofs for Lemma \ref{lem:1}} \label{subsecforlemma}
\begin{proof}


Let $\eta_{yk}(\mathbf{x})$ be the $\Pr[\tilde Y=k|Y=y,X=\mathbf{x}]$ probability of observing a noisy label $k$ given the ground-truth label $y$ and a sample $\mathbf{x}$.
Let $\eta_y(\mathbf{x}) = \sum_{k\not=y} \eta_{yk}(\mathbf{x})$.
The risk of $h$ under noisy data is given by
\begin{align} \label{app:eq22}
    \widetilde R(h) =& \expt_{\mathbf{x},  \tilde y}[\ell_\text{LLN}(h(\mathbf{x}), \tilde y)] \nonumber \\
    =& \expt_{\mathbf{x}}\expt_{y|\mathbf{x}}\expt_{\tilde y|\mathbf{x},y}[\ell_\text{LLN}(h(\mathbf{x}), \tilde y)] \nonumber \\
    = & \expt_{\mathbf{x}, y}\bigg[(1-\eta_y(\mathbf{x}))\ell_\text{LLN}(h(\mathbf{x}), y) + \sum_{k\not=y}\eta_{yk}(\mathbf{x})\ell_\text{LLN}(h(\mathbf{x}), k)\bigg] \nonumber \\
    =&  \expt_{\mathbf{x}, y}\bigg[(1-\eta_y(\mathbf{x}))\big(\sum_{k=1}^K\ell_\text{LLN}(h(\mathbf{x}), k) - \sum_{k\not=y}\ell_\text{LLN}(h(\mathbf{x}), k)\big) + \sum_{k\not=y}\eta_{yk}(\mathbf{x})\ell_\text{LLN}(h(\mathbf{x}), k)\bigg] \nonumber \\
    =&  \expt_{\mathbf{x}, y}\bigg[(1-\eta_y(\mathbf{x}))\big(C - \sum_{k\not=y}\ell_\text{LLN}(h(\mathbf{x}), k)\big) + \sum_{k\not=y}\eta_{yk}(\mathbf{x})\ell_\text{LLN}(h(\mathbf{x}), k)\bigg]  \nonumber \\
    =&  \expt_{\mathbf{x}, y}\bigg[(1-\eta_y(\mathbf{x}))C\bigg] -  \expt_{\mathbf{x}, y}\bigg[ \sum_{k\not=y}\big(1-\eta_y(\mathbf{x})-\eta_{yk}(\mathbf{x})\big)\ell_\text{LLN}(h(\mathbf{x}), k) \bigg].
\end{align}

Since Eq.~(\ref{app:eq22}) holds for both $\tilde h^\star$ and $h^\star$, we have
\begin{equation} \label{app:eq23}
     \widetilde R(\tilde h^\star) = \expt_{\mathbf{x}, y}\bigg[(1-\eta_y(\mathbf{x}))C\bigg] -  \expt_{\mathbf{x}, y}\bigg[ \sum_{k\not=y}\big(1-\eta_y(\mathbf{x})-\eta_{yk}(\mathbf{x})\big)\ell_\text{LLN}(\tilde h^\star(\mathbf{x}), k) \bigg] 
\end{equation}
and 
\begin{equation} \label{app:eq24}
     \widetilde R(h^\star) = \expt_{\mathbf{x}, y}\bigg[(1-\eta_y(\mathbf{x}))C\bigg] -  \expt_{\mathbf{x}, y}\bigg[ \sum_{k\not=y}\big(1-\eta_y(\mathbf{x})-\eta_{yk}(\mathbf{x})\big)\ell_\text{LLN}(h^\star(\mathbf{x}), k) \bigg] .
\end{equation}

As $\tilde h^\star$ is the minimizer of $\widetilde R(h)$, $\widetilde R(\tilde h^\star) \leq \widetilde R(h^\star)$.
Then we combine Eq.~(\ref{app:eq23}) and Eq.~(\ref{app:eq24}), we have

\begin{equation} \label{eq:app25}
    \expt_{\mathbf{x}, y}\bigg[ \sum_{k\not=y}\big(1-\eta_y(\mathbf{x})-\eta_{yk}(\mathbf{x})\big)\big(\ell_\text{LLN}(h^\star(\mathbf{x}), k) - \ell_\text{LLN}(\tilde h^\star(\mathbf{x}), k) \big) \bigg] \leq 0.
\end{equation}

We note that $\ell_\text{LLN}(\tilde h^\star(\mathbf{x}), k) \geq \ell_\text{LLN}(h^\star(\mathbf{x}), k)$ implies $p_k(\mathbf{x})=0$ and $p_y(\mathbf{x})=1$ for $k\not=y$, where $p_k(\mathbf{x})$ is the probability output by $\tilde h^\star$ for predicting the sample $\mathbf{x}$ to be the class $k$.
This argument is proved given by \citet{ wang2019symmetric, manwani2013noise, yang2021generalized, ma2020normalized} (Theorem 1\&2 in \citet{manwani2013noise}, Theorem 1 in \citet{wang2019symmetric}, Lemma 1\&2 in \citet{ma2020normalized} and Theorem 1\&2 in \citet{englesson2021generalized}).

To let $\ell_\text{LLN}(\tilde h^\star(\mathbf{x}), k) \geq \ell_\text{LLN}(h^\star(\mathbf{x}), k)$ holds for all inputs $\mathbf{x}$, previous studies assume the bounded label noise, which is given by
\begin{equation} \label{eq:app26}
    1-\eta_y(\mathbf{x})-\eta_{yk}(\mathbf{x}) > 0\ \ \forall \mathbf{x}\ \text{ s.t. } P(X=\mathbf{x})>0.
\end{equation}

For random label noise which assumes that the mislabeling probability from the ground-truth label to any other label is the same for all inputs, i.e.~$\eta_{ji}(\mathbf{x})=a_0\ \forall i\not=j$, where $a_0$ is a constant.
Let $\eta=(K-1)a_0$, then Eq.~(\ref{eq:app26}) is degraded to 
\begin{align} \label{eq:app28}
    &1-\eta - \frac{\eta}{K-1} >  0 \nonumber \\
    &1 >  \frac{K}{K-1}\eta \nonumber \\
    & \eta <1- \frac{1}{K} \nonumber.
\end{align}
This bounded assumption is commonly assumed by \citet{ wang2019symmetric, manwani2013noise, yang2021generalized, ma2020normalized} (Theorem 1 in \citet{manwani2013noise}, Theorem 1 in \citet{wang2019symmetric}, Lemma 1 in \citet{ma2020normalized} and Theorem 1 in \citet{englesson2021generalized}).

For class-conditional label noise, which assumes the $\eta_{ji}(\mathbf{x}_1) = \eta_{ji}(\mathbf{x}_2)$ for any inputs $\mathbf{x}_1$ and $\mathbf{x}_2$.
Let $\eta_{ji}(\mathbf{x}) = \eta_{ji}$,
Then the bounded assumption Eq.~(\ref{eq:app26}) is degraded to
\begin{equation*}
    \eta_{yk} < 1-\eta_y.
\end{equation*}
This bounded assumption is also commonly assumed, and it can be found in Theorem 2 in \citet{manwani2013noise}, Theorem 1 in \citet{wang2019symmetric}, 2 in \citet{ma2020normalized} and Theorem 2 in \citet{englesson2021generalized}.


However, in SFDA, we proved that the following event $\mathbf{B}$ holds with a probability at least $1-\delta$:
\begin{equation} \label{eq:app27}
    1-\eta_y(\mathbf{x})-\eta_{yk}(\mathbf{x}) < 0 \ \forall \mathbf{x}\in\mathbf{R}.
\end{equation}

Indeed, we first denote $\mathbf{B_1} = \{\tilde y\not= y|\mathbf{x} \in \mathbf{R}\}$ by the event that $\mathbf{x}\in\mathbf{R}$ is mislabeled. Then
\begin{align*}
    \Pr[\mathbf{B}] &= \Pr[\mathbf{B}|\mathbf{B}_1] + \Pr[\mathbf{B}|\mathbf{B}_1^\complement]\Pr[\mathbf{B}_1^\complement] \\
    &\geq \Pr[\mathbf{B}|\mathbf{B}_1]\Pr[\mathbf{B}_1] \\
    & \geq 1-\delta
\end{align*}
Given the result in Eq.~(\ref{eq:app27}), and combined it with the Eq.~(\ref{eq:app25}), we have
\begin{equation*}
    \ell_\text{LLN}(\tilde h^\star(\mathbf{x}), k) \leq \ell_\text{LLN}(h^\star(\mathbf{x}), k).
\end{equation*}
When the event $\mathbf{B}$ holds, the condition $\ell_\text{LLN}(\tilde h^\star(\mathbf{x}), k) \leq \ell_\text{LLN}(h^\star(\mathbf{x}), k)$ holds.

Note that only $\ell_\text{LLN}(\tilde h^\star(\mathbf{x}), k) \geq \ell_\text{LLN}(h^\star(\mathbf{x}), k)$ means $p_k(\mathbf{x})=0$ for $k\not=y$ and $p_y(\mathbf{x})=1$ for $k\not=y$.
It means that the optimal classifier $\tilde h^\star$ from noisy data can make correct predictions on any inputs, which is consistent with the optimal classifier $h^\star$ obtained from clean data.

As for the condition $\ell_\text{LLN}(\tilde h^\star(\mathbf{x}), k) \leq \ell_\text{LLN}(h^\star(\mathbf{x}), k)$, we can get $p_k(\mathbf{x})=1$ for a $k\not=y$, which means that the optimal classifier $\tilde h^\star$ from noisy data cannot make correct predictions on samples $\mathbf{x} \in \mathbf{R}$.
To verify this, we use the robust loss function RCE $\ell_\text{RCE}$ as an example, and it can be easily generalized to other robust los functions mentioned above.
Based on the definition of the RCE loss \citep{wang2019symmetric}, we have
\begin{align*}
    \ell_\text{RCE}(\tilde h^\star(\mathbf{x}), k) = & C_\text{RCE}(1-p_k(\mathbf{x}))\\
    \ell_\text{RCE}(h^\star(\mathbf{x}), k) = & C_\text{RCE},
\end{align*}
where $C_\text{RCE}>0$ is a constant.
The above equations show that any $0\leq p_k(\mathbf{x}) \leq 1$ can make the condition $\ell_\text{LLN}(\tilde h^\star(\mathbf{x}), k) \leq \ell_\text{LLN}(h^\star(\mathbf{x}), k)$ hold.
Meanwhile, $\tilde h^\star$ is the global minimizer of the risk over the noisy data, which makes $\tilde h^\star$ memorize the noisy dataset.

Therefore, $\tilde h^\star$ makes incorrect predictions for $\mathbf{x} \in \mathbf{R}$ such that $p_k(\mathbf{x})=1$ for a $k\not=y$, and $h^\star$ is the global optimal over clean data, which gives correct predictions for $\mathbf{x} \in \mathbf{R}$ such that $p_k(\mathbf{x})=1$ for a $k=y$.
That completes the proof as $h^\star$ makes different predictions on $\mathbf{x}\in \mathbf{R}$ compared to $\tilde h^\star$.

\end{proof}

\section{Proofs for Theorem \ref{thm:3}} \label{app:p3}
The proof for Theorem \ref{thm:3} is partially adopted from \citet{liu2020early}.
Note that we are dealing with unbounded label noise, whereas the bounded label noise is considered in \citet{liu2020early}.
As indicated in \citet{liu2020early}, $T$ is set as the smallest positive integer such that $\theta_T^\top \boldsymbol{\mu}\geq 0.1$, and $T=\Omega(1/\eta)$ with high probability. 
Parameters $\theta$ is initialized by Kaiming initialization \citep{he2015delving} that $\theta_0 \sim \mathcal{N}(0, \frac{2}{d}\mathbf{I}_d)$, and $|\theta_0^\top \boldsymbol{\mu}|$ converges in probability to $0$.
For simplicity, we assume $\theta_0=0$ without loss of generality.
The proof consists of two parts. 
The first part is to show that $\theta_{T-1}$ is highly positively correlated with the ground truth classifier.
The second part is to show that the prediction accuracy on mislabeled samples can be represented as the correlation between the learned classifier and the ground truth classifier. 

\begin{proof}
{\bf To begin with, we show the first part.}
Let samples $\mathbf{x}_i= y_i (\boldsymbol{\mu} - \sigma \mathbf{z}_i)$, where $\mathbf{z}\sim \mathcal{N}(0, \mathbf{I}_d)$.
The gradient of the logistic loss function with respect to the parameter $\theta$ is given by:
\begin{align} \label{eq:apps31}
    \nabla_\theta \mathcal{L}(\theta_t)=& \frac{1}{2n}\sum_{i=1}^n \mathbf{x}_i\big(\mathrm{tanh}(\theta_t^\top \mathbf{x}_i) - \tilde y_i \big) \nonumber \\
    =& \underbrace{-\frac{1}{2n}\sum_{i=1}^n \tilde y_i\mathbf{x}_i}_\text{\circledone}  + \underbrace{\frac{1}{2n}\sum_{i=1}^n \mathbf{x}_i\mathrm{tanh}(\theta_t^\top \mathbf{x}_i)}_\text{\circledtwo}
\end{align}

Then we will show that $-\boldsymbol{\mu}^\top \nabla_\theta \mathcal{L}(\theta_t)$ is lower bounded by a positive number.
We first show the bound on \circledone in Eq.~(\ref{eq:apps31}).
Since $\mathbf{x}_i$ is sampled from standard normal distribution, $\frac{1}{n}\sum_{i=1}^n\tilde y_i \boldsymbol{\mu}^\top \mathbf{x}_i$ has limited variance.
By the law of large number, $\frac{1}{n}\sum_{i=1}^n\tilde y_i \boldsymbol{\mu}^\top \mathbf{x}_i$ converges in probability to its mean.
Therefore,
\begin{align*}
    \expt[\tilde y \mathbf{x}^\top \boldsymbol{\mu}]=& \expt[\tilde y \boldsymbol{\mu}^\top\mathbf{x}\mathbbm{1}\{y\mathbf{x}^\top\boldsymbol{\mu}\leq r\}] + \expt[\tilde y \boldsymbol{\mu}^\top\mathbf{x}\mathbbm{1}\{y\mathbf{x}^\top\boldsymbol{\mu}> r\}]  \\
    =& \expt[\expt[\tilde y \boldsymbol{\mu}^\top\mathbf{x}\mathbbm{1}\{y\mathbf{x}^\top\boldsymbol{\mu}\leq r\}]|y] \\ 
    &+ \expt[\expt[\tilde y \boldsymbol{\mu}^\top\mathbf{x}\mathbbm{1}\{y\mathbf{x}^\top\boldsymbol{\mu}> r\}] |y] \\
    = & \expt[-\boldsymbol{\mu}^\top\mathbf{x}\mathbbm{1}\{\mathbf{x}^\top\boldsymbol{\mu}\leq r\}|y=1] + \expt[\boldsymbol{\mu}^\top\mathbf{x}\mathbbm{1}\{\mathbf{x}^\top\boldsymbol{\mu}> r\}|y=1] 
\end{align*}

Note that $\mathbf{x}|y=1$ is a Gaussian random vector with independent entries, we have $\mathbf{x}^\top \boldsymbol{\mu} \overset{\mathrm{d}}{=} w + 1$, where $w\sim \mathcal{N}(0,\sigma^2)$.
Therefore, the above expectation is equivalent to
\begin{align}  \label{eq:apps32}
    \expt[\tilde y \mathbf{x}^\top \boldsymbol{\mu}]=& -\int_{-\infty}^{r-1}(w+1) \diff \mathbb{P}_w+\int_{r-1}^\infty(w+1)\diff \mathbb{P}_w \nonumber\\
    = & -\int_{-\infty}^{r-1}w \diff \mathbb{P}_w+\int_{r-1}^{+\infty} w\diff \mathbb{P}_w -\int_{-\infty}^{r-1} \diff \mathbb{P}_w+\int_{r-1}^{+\infty} \diff \mathbb{P}_w \nonumber\\
    =& \int_{r-1}^{1-r}\diff \mathbb{P}_w -\int_{-\infty}^{r-1}w \diff \mathbb{P}_w+\int_{r-1}^{+\infty} w\diff \mathbb{P}_w \nonumber\\
    =& \mathrm{Erf}[\frac{1-r}{\sqrt{2}\sigma}] + 2\frac{\sigma}{\sqrt{2\pi}}\exp{\big(-\frac{(r-1)^2}{2\sigma^2}\big)},
\end{align}
where $\mathrm{Erf}[x]=\frac{2}{\sqrt{\pi}}\int_0^x e^{-t^2}\diff t$.
Note that $r<1$, which means that most half of samples are mislabeled.
Thus
\begin{equation*}
    \frac{1}{2}\expt[\tilde y_i \boldsymbol{\mu}^\top \mathbf{x}_i] = \frac{1}{2}\mathrm{Erf}[\frac{1-r}{\sqrt{2}\sigma}] + \frac{\sigma}{\sqrt{2\pi}}\exp{\big(-\frac{(r-1)^2}{2\sigma^2}\big)} > 0.
\end{equation*}

Now we deal with the \circledtwo in in Eq.~(\ref{eq:apps31}).
\begin{align}  \label{eq:apps33}
    \frac{1}{2n}|\boldsymbol{\mu}^\top \big(\sum_{i=1}^n\mathrm{tanh}(\theta_t^\top x_i) \big)| 
    = & \frac{1}{2n} |\mathbf{q}^\top \mathbf{p}| \nonumber \\
    \leq & \frac{1}{2n} \norm{\mathbf{q}} \norm{\mathbf{p}},
\end{align}
 $\mathbf{q}=(\boldsymbol{\mu}^\top\mathbf{x}_1,\boldsymbol{\mu}^\top\mathbf{x}_2,\dotsc,\boldsymbol{\mu}^\top\mathbf{x}_n ) \in \mathbb{R}^n$, and $\mathbf{p}=(\mathrm{tanh}(\theta_t^\top x_1), \mathrm{tanh}(\theta_t^\top x_2),\dotsc, \mathrm{tanh}(\theta_t^\top x_n))\in \mathbb{R}^n$.

By triangle inequality of the norm, 
\begin{equation*}
    \norm{\mathbf{q}}=\norm{\mathbf{q}-\mathbf{1} + \mathbf{1}} \leq \norm{\mathbf{q}-\mathbf{1}} + \norm{\mathbf{1}} = \sqrt{n} + \norm{\mathbf{q}-\mathbf{1}},
\end{equation*}
where $\mathbf{q}-\mathbf{1}$ is a random vector with Gaussian coordinates.
By Lemma \ref{lem:gaucon},
\begin{equation}
     \norm{\mathbf{q}-\mathbf{1}}/\sigma \leq  2\sigma \sqrt{n}
\end{equation}
with probability $1-\delta$ when $n\geq c_1\log{1/\delta}$, where $c_1$ is a constant.

On the other hand,
\begin{align} \label{eq:apps34}
    \norm{\mathbf{p} - \mathrm{tanh}(\theta_t^\top \boldsymbol{\mu})\mathbf{1}_n + \mathrm{tanh}(\theta_t^\top \boldsymbol{\mu})\mathbf{1}_n} \leq& \norm{\mathrm{tanh}(\theta_t^\top \boldsymbol{\mu})\mathbf{1}_n} + \norm{\mathbf{p} - \mathrm{tanh}(\theta_t^\top \boldsymbol{\mu})\mathbf{1}_n} \nonumber \\
    \leq & \norm{\mathrm{tanh}(\theta_t^\top \boldsymbol{\mu})\mathbf{1}_n} + \norm{\theta_t} \norm{\mathbf{q}-1} \nonumber \\
    =& \mathrm{tanh}(\theta_t^\top \boldsymbol{\mu}) \sqrt{n} + 2\sigma\sqrt{n}\norm{\theta_t},
\end{align}
where the second inequality is by Lemma 9 from \citet{liu2020early}, the last inequality by Lemma \ref{lem:gaucon}.

Then we take Eq.~(\ref{eq:apps33}) and Eq.(\ref{eq:apps34}) together, and then take them and Eq.(\ref{eq:apps32}) into $-\boldsymbol{\mu}^\top \nabla_\theta \mathcal{L}(\theta_t)$,
which gives us:
\begin{equation} \label{eq:apps35}
     -\nabla_\theta \mathcal{L}(\theta_t)^\top \boldsymbol{\mu} \geq  \frac{1}{2}\mathrm{Erf}[\frac{1-r}{\sqrt{2}\sigma}] + \frac{\sigma}{\sqrt{2\pi}}\exp{\big(-\frac{(r-1)^2}{2\sigma^2}\big)} - \sigma(\mathrm{tanh}(\theta_t^\top \boldsymbol{\mu})+ 2\sigma\norm{\theta_t})
\end{equation}

By Lemma 8 from \citet{liu2020early}, we have $\sup_{\theta \in \mathbb{R}^d} \norm{\nabla_\theta \mathcal{L}(\theta)}\leq 1+2\sigma$.
Therefore, Eq.~(\ref{eq:apps35}) can be rewritten as:
\begin{align} \label{eq:apps36}
    \frac{-\nabla_\theta \mathcal{L}(\theta_t)^\top \boldsymbol{\mu}}{\norm{\nabla_\theta \mathcal{L}(\theta_t)}} \geq & \frac{\mathrm{Erf}[\frac{1-r}{\sqrt{2}\sigma}] + 2\frac{\sigma}{\sqrt{2\pi}}\exp{\big(-\frac{(r-1)^2}{2\sigma^2}\big)}}{1+2\sigma} - \frac{\sigma(\mathrm{tanh}(\theta_t^\top \boldsymbol{\mu})+ 2\sigma\norm{\theta_t})}{1+2\sigma}\nonumber \\
    \geq & \frac{b_0}{1+2\sigma} - \frac{\sigma(\mathrm{tanh}(\theta_t^\top \boldsymbol{\mu})+ 2\sigma\norm{\theta_t})}{1+2\sigma},
\end{align}
where we let $b_0=\frac{1}{2} \mathrm{Erf}[\frac{1-r}{\sqrt{2}\sigma}] + \frac{\sigma}{\sqrt{2\pi}}\exp{\big(-\frac{(r-1)^2}{2\sigma^2}\big)}$.

Then we prove $\frac{-\nabla_\theta \mathcal{L}(\theta_t)^\top \boldsymbol{\mu}}{\norm{\nabla_\theta \mathcal{L}(\theta_t)}} \geq \frac{1}{10}\frac{b_0}{1+2\sigma}$ by mathematical induction, which can help us get rid of the dependence on $\theta_t$ for the lower bound in Eq.~(\ref{eq:apps36}).

For $t=0$, the inequality holds trivially.
By the gradient descent algorithm, $\theta_{t+1}=-\eta \sum_{i=0}^t \nabla_\theta \mathcal{L}(\theta_i)$, where $-\boldsymbol{\mu}^\top \nabla_\theta \mathcal{L}(\theta_i)/\norm{\nabla_\theta \mathcal{L}(\theta_i)} \geq \frac{1}{10}\frac{b_0}{1+2\sigma}$.
\begin{align*}
    \frac{\theta_{t+1}^\top \boldsymbol{\mu}}{\norm{\theta_{t+1}}} \geq &  \frac{-\eta \sum_{i=0}^t \boldsymbol{\mu}^\top \nabla_\theta \mathcal{L}(\theta_i)}{\eta \norm{\sum_{i=0}^t  \nabla_\theta \mathcal{L}(\theta_i)}}  \\
    \geq & \frac{\frac{1}{10}\frac{b_0}{1+2\sigma}(\sum_{i=0}^t \norm{ \nabla_\theta \mathcal{L}(\theta_i)})}{\sum_{i=0}^t \norm{ \nabla_\theta \mathcal{L}(\theta_i)}} \\
    \geq & \frac{1}{10}\frac{b_0}{1+2\sigma} \\
\end{align*}
As $t+1<T$, we have $\norm{\theta_{t+1}} \leq 10 \frac{1+2\sigma}{b_0} \theta_{t+1}^\top \boldsymbol{\mu}\leq  \frac{1+2\sigma}{b_0}$.
Taking it into Eq.~(\ref{eq:apps36}), we have
\begin{align*}
    \frac{-\nabla_\theta \mathcal{L}(\theta_t)^\top \boldsymbol{\mu}}{\norm{\nabla_\theta \mathcal{L}(\theta_t)}} \geq \frac{b_0}{1+2\sigma}- \frac{\sigma(0.1+\frac{1+2\sigma}{b_0})}{1+2\sigma}
\end{align*}

To show $ \frac{-\nabla_\theta \mathcal{L}(\theta_t)^\top \boldsymbol{\mu}}{\norm{\nabla_\theta \mathcal{L}(\theta_t)}}$ is lower bounded by $\frac{1}{10}\frac{b_0}{1+2\sigma}$, we need to have
\begin{equation*}
    h(\sigma)=\frac{9}{10} \frac{b_0}{1+2\sigma} - \sigma (0.1+\frac{1+2\sigma}{b_0}) > 0
\end{equation*}

It is straightforward to verify that  $h(\sigma=0)>0$ and it can be verified that when $0<\sigma<c_0$, we have $h'(\sigma)>0$.
Therefore, for $0<\sigma<c_0$ and any $t<T-1$
\begin{align*} 
    \frac{-\nabla_\theta \mathcal{L}(\theta_{t})^\top \boldsymbol{\mu}}{\norm{\nabla_\theta \mathcal{L}(\theta_{t})}} \geq \frac{1}{10}\frac{b_0}{1+2\sigma}
\end{align*}
Hence by gradient descent algorithm $\theta_T = -\eta \sum_{i=0}^{T-1} \nabla_\theta \mathcal{L}(\theta_i)$ and the same proof above, we have 
\begin{equation} \label{eq:apps37}
    \frac{\theta_T^\top \boldsymbol{\mu}}{\norm{\theta_T}} \geq \frac{1}{10}\frac{b_0}{1+2\sigma}
\end{equation}

{\bf For the second part: }the prediction accuracy on mislabeled sample set $B$ converges in probability to its mean.
Therefore, the expectation of the prediction accuracy on mislabeled samples is given by
\begin{align} \label{eq:apps38}
    \expt[\mathbbm{1}\{\mathrm{sign}(\theta_{T}^\top\mathbf{x})=y\}] = &  \expt[\mathbbm{1}\{\mathrm{sign}(y\theta_{T}^\top(\boldsymbol{\mu} - \sigma \mathbf{z}))=y\}] \nonumber  \\
    =& \expt[\mathbbm{1}\{\mathrm{sign}(\theta_{T}^\top(\boldsymbol{\mu} - \sigma \mathbf{z}))=1\}] \nonumber \\
    =& \Pr[\sigma \theta_{T}^\top\mathbf{z} > \theta_{T}^\top \boldsymbol{\mu}]
\end{align}
Note that $\mathbf{z}$ is a standard Gaussian vector, $\theta_{T}^\top\mathbf{z}$ is distributed as $\mathcal{N}(0, \norm{\theta_{T}}^2)$
Thus,  Eq.~(\ref{eq:apps38}) is equivalent to $\Phi(\frac{ \theta_{T}^\top \boldsymbol{\mu}}{\sigma \norm{\theta_{T}}})$.

By the inequality $1-\Phi(x) \leq \exp\{-x^2/2\}$ for $x>0$, then we have
\begin{equation*}
    \Phi(\frac{ \theta_{T}^\top \boldsymbol{\mu}}{\sigma \norm{\theta_{T}}}) \geq 1 - \exp\{-\frac{(\frac{ \theta_{T}^\top \boldsymbol{\mu}}{\sigma \norm{\theta_{T}}})^2}{2}\} \geq 1- \exp\{-\frac{1}{200} \big(\frac{b_0}{(1+2\sigma)\sigma} \big)^2 \}
\end{equation*}

We denote $g(\sigma)$ by:
\begin{equation*}
    g(\sigma) = \frac{\mathrm{Erf}[\frac{1-r}{\sqrt{2}\sigma}]}{2(1+2\sigma)\sigma} + \frac{ \exp{(-\frac{(r-1)^2}{2\sigma^2})}}{\sqrt{2\pi}(1+2\sigma)},
\end{equation*}
where $g(\sigma)>0$ for any $\sigma>0$.
Note that $g(\sigma) \to \infty$ when $\sigma \to 0$, and $g(\sigma)$ is monotone decreasing as $\sigma$ increases since $g'(\sigma)<0$ for $\sigma>0$.

\end{proof}

\begin{lemma} \label{lem:gaucon}
Let $X=(X_1, X_2, \dotsc, X_n)\in \mathbb{R}^n$ be a random vector with independent, Gaussian coordinates $X_i$ with $\expt[X_i]=0$ and $\expt[X_i^2]=1<\infty$.
Then
\begin{equation*}
    \Pr[|\norm{X}_2 - \sqrt{n}|\geq  \sqrt{n}] \leq 2 \exp\big(-a n\big),
\end{equation*}
where $a>0$ is a constant.
\end{lemma}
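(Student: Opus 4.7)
The strategy is to reduce the claim to a concentration inequality for the squared norm $\norm{X}_2^2 = \sum_{i=1}^n X_i^2$, which is a sum of $n$ i.i.d.\ $\chi^2_1$ random variables, and then apply a Chernoff argument using the explicit moment generating function of the chi-square distribution.

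First, I would observe that because $\norm{X}_2 \geq 0$ almost surely, the event $\{|\norm{X}_2 - \sqrt{n}| \geq \sqrt{n}\}$ coincides (up to a null set) with $\{\norm{X}_2 \geq 2\sqrt{n}\}$, equivalently $\{\norm{X}_2^2 \geq 4n\}$. The lower tail event $\{\norm{X}_2 \leq 0\}$ contributes zero probability, so the factor of $2$ in the stated bound is absorbed into the constant $a$.

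Next, I would recall that for a standard normal $X_i$ and any $t \in (0, 1/2)$,
\begin{equation*}
\expt\!\left[e^{tX_i^2}\right] = (1 - 2t)^{-1/2},
\end{equation*}
so by independence $\expt[\exp(t \norm{X}_2^2)] = (1-2t)^{-n/2}$. Applying the Chernoff bound,
\begin{equation*}
\Pr\!\left[\norm{X}_2^2 \geq 4n\right] \leq e^{-4nt} (1-2t)^{-n/2} = \left( e^{-4t} (1-2t)^{-1/2} \right)^n.
\end{equation*}
Choosing $t = 1/4$ gives the base $e^{-1}\sqrt{2} \approx 0.52 < 1$, hence setting $a = 1 - \tfrac{1}{2}\log 2 > 0$ yields $\Pr[\norm{X}_2 \geq 2\sqrt{n}] \leq e^{-an} \leq 2 e^{-an}$, which is the desired inequality.

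The main obstacle here is really just bookkeeping: any $t \in (0, 1/2)$ for which $e^{-4t}(1-2t)^{-1/2} < 1$ will work, so no delicate optimization of $t$ is needed for the qualitative statement, and one can just exhibit a convenient choice such as $t = 1/4$. As an alternative route one could appeal to Gaussian Lipschitz concentration applied to the $1$-Lipschitz map $x \mapsto \norm{x}_2$, combined with the standard estimate $\expt[\norm{X}_2] \leq \sqrt{\expt \norm{X}_2^2} = \sqrt{n}$; this would give a slightly sharper subgaussian constant, but at the cost of invoking the Borell--Tsirelson--Ibragimov--Sudakov inequality as a black box, whereas the Chernoff route above is fully self-contained.
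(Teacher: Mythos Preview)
Your argument is correct. The reduction to the upper tail $\{\norm{X}_2^2 \geq 4n\}$ via the observation that the lower tail $\{\norm{X}_2 \leq 0\}$ is a null event is clean, and the Chernoff bound with the explicit $\chi^2$ MGF and the choice $t=1/4$ gives exactly the exponential decay claimed, with $a = 1 - \tfrac{1}{2}\log 2 > 0$.

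By way of comparison, the paper does not actually prove this lemma: it simply cites Proposition~5.34 in Vershynin's \emph{High-Dimensional Probability} as a black box. That reference states the more general two-sided concentration $\Pr[\,|\norm{X}_2 - \sqrt{n}| \geq t\,] \leq 2\exp(-ct^2)$ for sub-gaussian coordinates and all $t>0$, of which the lemma here is the special case $t=\sqrt{n}$. Your route is more elementary and fully self-contained, exploiting the specific deviation level $t=\sqrt{n}$ so that the lower tail vanishes and only a one-sided Chernoff bound on $\sum X_i^2$ is needed. The citation route buys generality (arbitrary $t$, sub-gaussian rather than Gaussian entries) at the cost of invoking a heavier result; your route buys transparency at the cost of being tailored to this particular deviation. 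Either is perfectly adequate for how the lemma is used downstream in the proof of Theorem~\ref{thm:3}.
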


\begin{proof}
The Gaussian concentration result is taken from Proposition 5.34 in \citet{vershynin2018high}, which will be used here for proving Theorem \ref{thm:3}.
\end{proof}

\section{Additional Learning Curves} \label{app:curve}
We provide additional learning curves on DomainNet dataset, shown in Figure \ref{fig:lc_domain}.
The dataset contains $12$ pairs of tasks showing: (1) target classifiers have higher prediction accuracy during the early-training time; (2) leverage ETP by using ELR can alleviate the memorization of unbounded noisy labels generated by source models.

\begin{figure}[h]
\includegraphics[width=\textwidth]{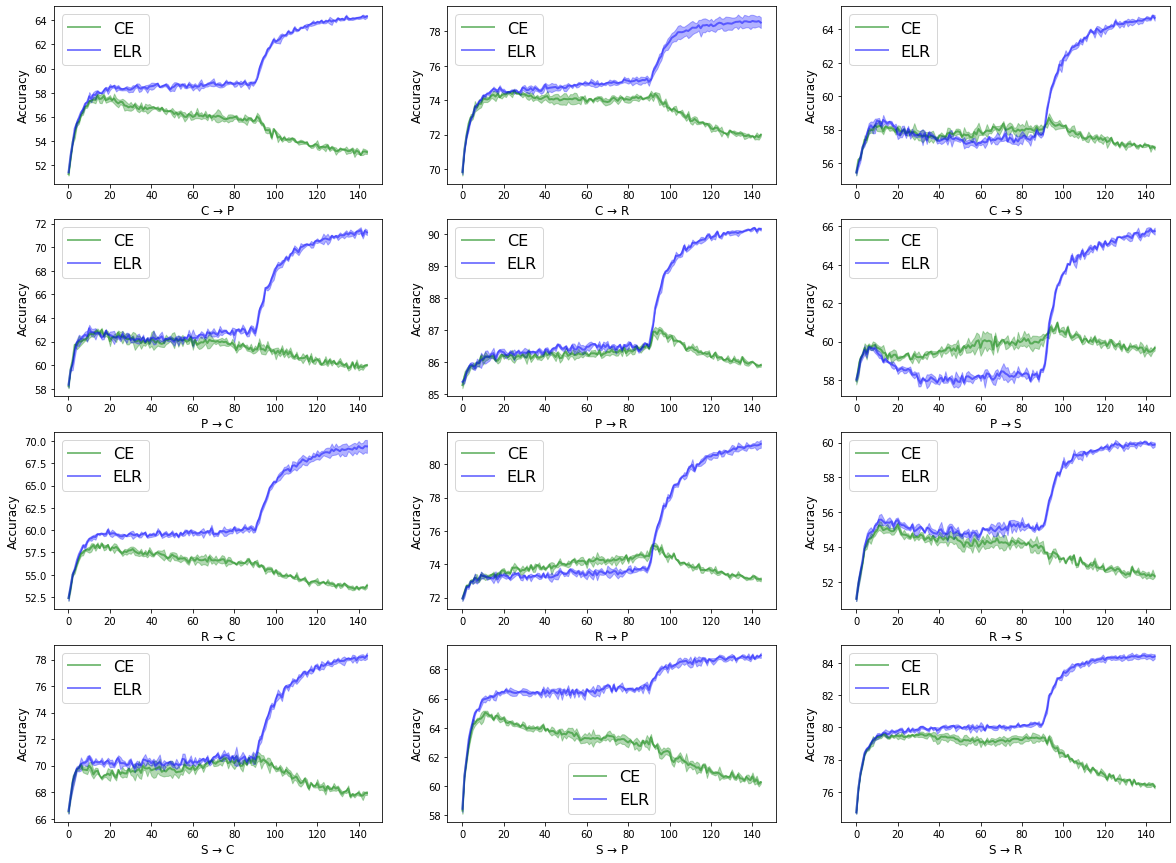}
\caption{The source models are used to initialize the classifiers and annotate unlabeled target data.
As the classifiers memorize the unbounded label noise very fast, we evaluate the prediction accuracy on target data every batch for the first $90$ steps.
After the $90$ steps, we evaluate the prediction accuracy for every $0.3$ epoch.
We use the CE and ELR to train the classifiers on the labeled target data, shown in solid green lines and solid blue lines, respectively.
}
\label{fig:lc_domain}
\end{figure}

\section{Experimental Details} \label{app:exp}
In this section, we additionally show the overall training process of our method, illustrated in Figure \ref{fig:intro_model} and in Algorithm \ref{alg:1}. Besides, we provide more experimental information of our paper in details. 

\begin{figure}[!htp]
    \centering
    \includegraphics[width=0.7\textwidth, trim={0 0 0 0.05cm},clip]{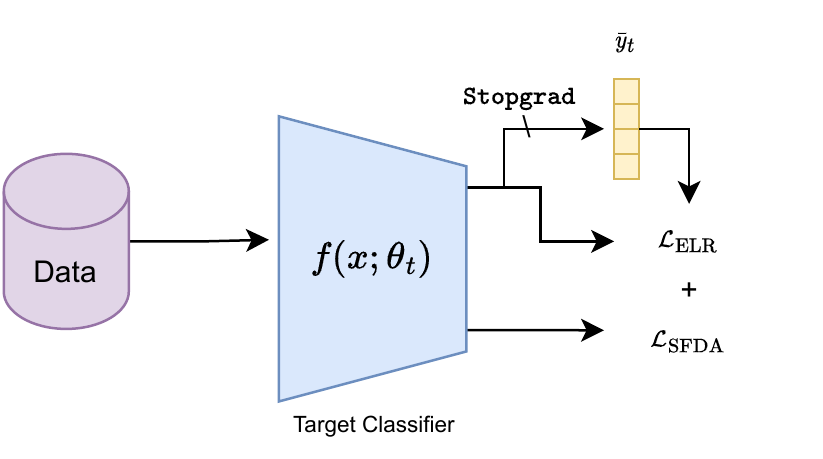}
    \caption{Overview of the SFDA problem and our method.}
    \vspace{-15 pt}
    \label{fig:intro_model}
\end{figure}


\begin{algorithm} 
\SetAlgoLined
\KwIn{Source Pre-Trained Model: $f(x; \theta_0) $, Target Data: $\mathcal{X}_t(x_t)$, Training Epochs: T }

Initialize a prediction bank $\mathcal{Y}$ with $\bar y_0 = \textbf{0}$

\For{epoch=1 \rm \textbf{to} T}{

\For{iterations t=1,2,3,...}{

Compute the SFDA objective $\mathcal{L}_\text{SFDA}$ (depends on concrete SFDA algorithms)

Update the prediction bank $\mathcal{Y}$: $\bar y_t=\beta \bar y_{t-1} + (1-\beta)f(x_t;\theta_{t})$ 

Compute the ELR regularization $\mathcal{L}_\text{ELR}$: $\mathcal{L}_\text{ELR} =\log(1-\bar y_t^\top f(x_t))$

Compute the total loss: $\mathcal{L} = \mathcal{L}_\text{SFDA} + \lambda \mathcal{L}_\text{ELR}$

Update the parameters of $f(\theta_t)$ via $\mathcal{L}$
}

}
\KwOut{Target Adapted Model $f(x; \theta_T) $}

\caption{ \textbf{SFDA ELR} - Source Free Domain Adaptation with ELR}
\label{alg:1}
\end{algorithm}


{\bf Datasets. } We use four benchmark datasets, which have been widely utilized in the Unsupervised Domain Adaptation (UDA) \citep{long2015learning, tan2020class, gap_min} and Source-Free Domain Adaptation (SFDA) \citep{liang2020we} scenarios, to verify the effectiveness of leveraging the early-time training phenomenon to address unbounded label noise.
Office-$31$ \citep{saenko2010adapting} contains $4,652$ images in three domains (Amazon, DSLR, and Webcam), and each domain consists of $31$ classes.
Office-Home \citep{venkateswara2017deep} contains $15,550$ images in four domains (Real, Clipart, Art, and Product), and each domain consists of $65$ classes.
VisDA \citep{peng2017visda} contains $152$K synthetic images and $55$K real object images with $12$ classes.
DomainNet \citep{peng2019moment} contains around $600$K images in six different domains (Clipart, Infograph, Painting, Quickdraw, Real and Sketch).
Following previous work \citet{tan2020class, liu2021cycle}, we select $40$ the most commonly-seen classes from four domains: Real, Clipart, Painting, and Sketch.

{\bf Implementation. }
We use ResNet-50 \citep{he2016deep} for Office-$31$, Office-Home and DomainNet, and ResNet-101 \citep{he2016deep} for VisDA as backbones.
We adopt a fully connected (FC) layer as the feature extractor on the backbone and another FC layer as the classifier head.
The batch normalization layer is put between the two FC layers and the weight normalization layer is implemented on the last FC layer.
We set the learning rate to $1$e-$4$ for all layers except for the last two FC layers, where we apply $1$e-$3$ for the learning rate for all datasets.
The training for source models are set to be consistent with the SHOT \citep{liang2020we}.
The hyperparameters for ELR with self-training, ELR with SHOT, ELR with G-SFDA, and ELR with NRC on four different datasets are shown in Table \ref{tab:hypers}.
We note that for ELR with self-training, there is only one hyperparameter $\beta$ to tune.
The hyperparameters for existing SFDA algorithms are set to be consistent with their reported values for Office-$31$, Office-Home, and VisDA datasets.
As these SFDA algorithms have not reported their performance for DomainNet dataset,
We follow the hyperparameter search strategy from their work \citep{liang2020we, yang2021exploiting, yang2021generalized}, and choose the optimal hyperparameters $\beta=0.3$ for SHOT, $K=5$ and $M=5$ for NRC, and $k=5$ for G-SFDA.

\begin{table}[btp]
\begin{minipage}[tbp]{0.99\textwidth}
\centering
\caption{Accuracies (\%) on Office-31 for ResNet50-based methods.}
\begin{center}
		\addtolength{\tabcolsep}{-5.5pt}
		\scalebox{0.85}{
			\setlength{\tabcolsep}{3mm}{ 
				\begin{tabular}{l|c|ccccccc}
					\hline
					Method & \multicolumn{1}{|c|}{SF}&A$\rightarrow$D & A$\rightarrow$W & D$\rightarrow$W & W$\rightarrow$D & D$\rightarrow$A & W$\rightarrow$A & Avg \\
					\toprule
					MCD \citep{saito2018maximum}& \xmark&{92.2} & {88.6} & {98.5} & \textbf{100.0} & {69.5} & {69.7} & {86.5} \\	
					CDAN \citep{long2018conditional}& \xmark & {92.9} & {94.1} & {98.6} & \textbf{100.0} & {71.0} & {69.3} & {87.7} \\	
					MDD~\citep{zhang2019bridging}& \xmark& 90.4 &90.4  &98.7 &99.9 &75.0  &73.7 &88.0\\
					{BNM}~\citep{cui2020towards}& \xmark   &90.3  &91.5  &98.5 &\textbf{100.0} &70.9  &71.6 &87.1\\	
				
					DMRL~\citep{wu2020dual}& \xmark&93.4  &90.8  &{99.0} &\textbf{100.0} &{73.0}  &71.2 &87.9\\
					BDG~\citep{yang2020bi}& \xmark&93.6  &93.6  &{99.0} &\textbf{100.0} &{73.2}  &72.0 &88.5\\
					MCC~\citep{jin2019minimum}& \xmark&{95.6}  &	{95.4}  &98.6 &\textbf{100.0} &{72.6}  &73.9 &{89.4}\\
					SRDC~\citep{tang2020unsupervised}& \xmark&{95.8}  &{95.7}  &{99.2} &\textbf{100.0} &{76.7}  &77.1 &{90.8}\\
					RWOT~\citep{xu2020reliable}& \xmark&{94.5}  &{95.1}  &\textbf{99.5} &\textbf{100.0} &\textbf{77.5}  &77.9 &{90.8}\\
					RSDA-MSTN~\citep{gu2020spherical}& \xmark&\textbf{95.8}  &\textbf{96.1}  &{99.3} &  \textbf{100.0}& 77.4 & \textbf{78.9} &\textbf{91.1}\\
					\toprule
					\toprule
					Source Only& \cmark & 80.8 &  76.9 & 95.3 & 98.7 & 60.3 & 63.6 & 79.3\\
					\textbf{\ \ +ELR}& \cmark & \underline{90.9} &  \underline{89.0} &  \underline{98.2} & \textbf{\underline{100.0}}  & \underline{67.1} & \underline{64.1} & \underline{84.9}\\
					\toprule
					SHOT~\citep{liang2020we}& \cmark & {94.0} &  90.1 &  98.4 & 99.9  & 74.7 & 74.3 & {88.6}\\
					\textbf{\ \ +ELR}& \cmark & \underline{94.9} & \underline{91.6} &  \underline{98.7} & \textbf{\underline{100.0}}  & \underline{75.2} & \underline{74.5} & \underline{89.3}\\
					\toprule
					G-SFDA~\citep{yang2021generalized}& \cmark & 85.9 &  87.3 & 98.6 & 99.8  & 71.4 & 72.1 & {85.8}\\
					\textbf{\ \ +ELR}& \cmark & \underline{86.9} & \underline{87.8} &  \underline{98.7} & 99.8  & 71.4 & \underline{72.9} & \underline{86.2}\\
					\toprule
					{NRC}~\citep{yang2021exploiting}& \cmark & 93.7	& 93.8 & 97.8  & \textbf{\textbf{100.0}}	& 75.5	&{75.6}	&{89.4}\\
					\textbf{\ \ +ELR}& \cmark & \underline{93.8} & 93.3 &  \underline{98.0} & \textbf{100.0}  & \underline{76.2} & \underline{76.9} & \underline{89.6}\\
					\hline
			\end{tabular}}
		}
		\end{center}
				\label{tab:office31}
    \vspace{-1mm}
\end{minipage}
\end{table}

\begin{table}[]
    \centering
    \caption{Optimal Hypermaraters ($\beta$/$\lambda$) on various datasets.}
    \begin{tabular}{c|c|c|c|c}
    \toprule
         Hyperparameters: $\beta$/$\lambda$ & Office-$31$ & Office-Home & VisDA & DomainNet  \\ \hline
         ELR only& $0.9$/$-$ & $0.99$/$-$ & $0.99$/$-$ & $0.9$/$-$  \\
         ELR + SHOT& $0.7$/$1.0$ & $0.6$/$3.0$ & $0.6$/$25$ & $0.7$/$7.0$  \\
         ELR + G-SFDA&  $0.8$/$1.0$ & $0.9$/$1.0$ & $0.5$/$7.0$ & $0.8$/$12.0$  \\
         ELR + NRC&  $0.5$/$1.0$ & $0.6$/$3.0$ & $0.5$/$3.0$ & $0.8$/$3.0$  \\
         \bottomrule
    \end{tabular}
    \label{tab:hypers}
    \vspace{-3mm}
\end{table}

\section{Memorization Speed Between Label Noise in SFDA and in Conventional LLN settings} \label{app:secmemspeed}
Although ETP exists in both SFDA and conventional LLN scenarios, the memorization speed for them is still different.
Specifically, the target classifiers memorize noisy labels much faster in the SFDA scenario. 
It has already been shown that it takes many epochs before classifiers start memorizing noisy labels in conventional LLN scenario \citep{liu2020early, xia2020robust}.
We highlight that the main factor causing the difference is the label noise.
To show it, we replace the unbounded label noise in SFDA with bounded random label noise, and we keep the other settings unchanged as introduced in \ref{subsec:emp}.
To replace the unbounded label noise with bounded random label noise, we use the source model to identify mislabeled target samples,
then we assign random labels to these mislabeled samples.
Figure \ref{fig:ohlns} and Figure \ref{fig:o31lns} show the learning curves on Office-Home and Office-31 datasets with unbounded label noise and random bounded label noise.
To better visualize the learning curves with unbounded label noise, we re-plot Figures \ref{fig:ohlns}-\ref{fig:o31lns} with different y scale in Figures \ref{fig:ohln0}-\ref{fig:o31ln0}.
These figures demonstrate that target classifiers memorizing noisy labels with unbounded label noise is much faster than noisy labels with random bounded label noise.
The classifiers with bounded label noise (colored in red) are expected to memorize all noisy labels eventually.
As illustrated in Figures \ref{fig:ohln0}-\ref{fig:o31ln0}, the classifiers with unbounded label noise (colored in green) show that the noisy labels are already memorized.
We note that for the first $90$ steps, the prediction accuracy is evaluated every batch, while the prediction accuracy is evaluated every $0.3$ epoch after that time. 
Therefore, for unbounded label noise, target classifiers start memorizing the noisy labels within the first epoch  (consisting of more than $90$ batches).

There are some existing LLN methods such as PCL \citep{zhang2021learning} to purify noisy labels every epoch based on ETP.
Due to this difference, these LLN methods are not helpful to solving label noise in SFDA as they are not able to capture the benefits of ETP.
Our empirical results in Section \ref{subsec:llnexp} can support this argument.
We also note that PCL does not suffer from the fast memorization speed and is able to capture the benefits of ETP in conventional LLN settings.
As we indicated in Figures \ref{fig:ohlns}-\ref{fig:o31lns}, it takes much longer time (more than a few epochs) for target classifiers to start memorizing bounded noisy labels.
We hope these insights can motivate the researcher to consider memorization speed and design algorithms better for SFDA. 

\section{Additional Analysis of ELR and a standard SFDA method - NRC} \label{app:elrandsfda}

In this section, we will theoretically and empirically compare ELR and NRC in detail. Specifically, NRC \citep{yang2021exploiting} is a well-known SFDA method that explores the neighbors of target data by graph-based methods and utilizes these neighbors' information to correct the target data's pseudo-label, in order to boost the SFDA performance. The proposed NRC loss has the following form:
\begin{equation*} 
    \ell_{\mathbf{NRC}} = \mathcal{L}_{div} + \mathcal{L}_{\mathcal{N}} + \mathcal{L}_{E} + \mathcal{L}_{self}
\end{equation*}
with:
\begin{equation*} 
    \mathcal{L}_{div} = \sum_{k=1}^{K}KL(\bar{p_k}||q_k)
\end{equation*}
the diversity loss where $\bar{p_k}$ is the empirical label distribution and $q$ is a uniform distribution; and 
\begin{equation*} 
    \mathcal{L}_{\mathcal{N}} = -\frac{1}{n_t}\sum_{i}\sum_{m\in\mathcal{N}_{M}^{i}}A_{im}\mathcal{S}_{m}^{\mathsf{T}}h(\mathbf{x}_i)
\end{equation*}
the neighbors loss, where $m$ is the index of the $m$-th nearest neighbors of $\mathbf{x}_i$, $\mathcal{S}_{m}$ is the $m$-th item in memory bank $\mathcal{S}$, $A_{im}$ is the affinity value of $m$-th nearest neighbors of input $\mathbf{x}_i$ in the feature space.
\begin{equation*} 
    \mathcal{L}_{E} = -\frac{1}{n_t}\sum_{i}\sum_{m\in\mathcal{N}_{M}^{i}}\sum_{j\in E_{N}^m}r\mathcal{S}_{n}^{\mathsf{T}}h(\mathbf{x}_i)
\end{equation*}
the expanded neighbors loss, where $E_{N}^m$ contain the $N$-nearest neighbors of neighbor $m$ in $\mathcal{N}_{M}$.
\begin{equation*} 
    \mathcal{L}_{self} = -\frac{1}{n_t}\sum_{i}^{n_t}\mathcal{S}_i^{\mathsf{T}}h(\mathbf{x}_i)
\end{equation*}
the self-regularization loss, where $\mathcal{S}_i$ means the stored prediction in the memory bank, a constant vector and is \textbf{identical} to the $h(\mathbf{x}_i)$ as in NRC they update the memory banks \textbf{before} the training.

\subsection{Theoretical Analysis of NRC's Self-Regularization term compared to ELR }

To emphasize the novelty of our proposed ELR in SFDA problems, we will compare the original formulas and also the gradients of ELR and NRC's self-regularization (SR) term in detail. And then, we will explain why NRC can not benefit from the ETP only by adopting the SR term.
As we formulate in the main paper, we can represent the ELR loss and the SR loss as follows:
\begin{equation} \label{eq:elr_elrnrc}
    \mathcal{L}_\text{ELR}(\theta_t)=\log(1-\bar y_t^\top f(\mathbf{x};\theta_t))
\end{equation}
and
\begin{equation} \label{eq:nrc_elrnrc}
    \mathcal{L}_\text{SR}(\theta_t)=  - \hat{y}_t^\top f(\mathbf{x};\theta_t)
\end{equation}
where $\bar y_t=\beta \bar y_{t-1} + (1-\beta)f(\mathbf{x};\theta_t)$ in ELR is the moving average prediction for $\mathbf{x}$, and $\hat{y_t} = f(\mathbf{x};\theta_t)$ in SR is the constant vector copied from the \textbf{current} training step's prediction.
Besides, the gradients of ELR and SR are: 
\begin{equation}  \label{eq:gradelr_elrnrc}
    \frac{\diff  \mathcal{L}_\text{ELR}(\theta_t)}{\diff f(\mathbf{x};\theta_t)} = -\frac{\bar y_t}{1-\bar y_t^\top f(\mathbf{x};\theta_t)}
\end{equation}
and 
\begin{equation} \label{eq:gradnrc_elrnrc}
    \frac{\diff  \mathcal{L}_\text{SR}(\theta_t)}{\diff f(\mathbf{x};\theta_t)} = - \hat{y_t}
\end{equation}
The motivation of the SR term is to emphasize the ego feature of current prediction and, therefore, to reduce the potential impact of noisy neighbors, whereas the ELR proposed in this paper considers the changes of prediction quality during the training process and aims to encourage the model prediction to stick to the early-time predictions for each data point.

As shown in Eq. (~\ref{eq:elr_elrnrc}) and Eq. (~\ref{eq:nrc_elrnrc}), we can directly observe that ELR involves the previous training step's prediction information in loss (included in $\bar{y}_t$), however, SR leverages only the prediction result of current step.

\begin{figure}[t]
\includegraphics[width=\textwidth]{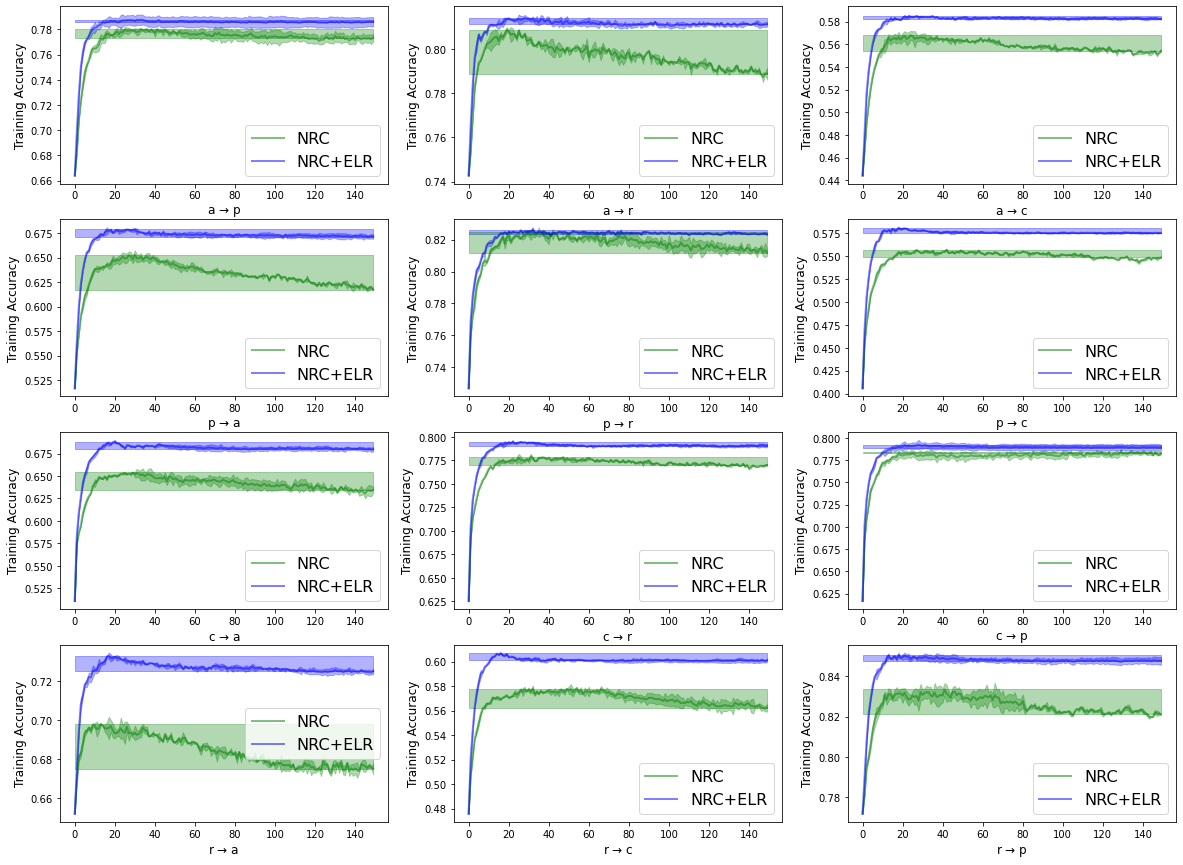}
\caption{Fine-Grained Training Accuracy of NRC and NRC + ELR on Office-Home dataset.
The solid green lines represent the training process of NRC, whereas the solid blue lines represent the training process of NRC with ELR term. The colored bands represent the performance drop.}
\label{fig:ohnrcelr}
\end{figure}

Besides, if we further look at the gradient formulas of these two losses and analyze the back-propagation process, we can find that the gradient $\frac{\diff  \mathcal{L}_\text{ELR}(\theta_t)}{\diff f(\mathbf{x};\theta_t)}$ increases as the model prediction closes to the target $\bar{y}_t$, which will further force the prediction $f(\mathbf{x};\theta_t)$ close to $\bar{y}$ thanks to the large magnitude of the gradient. And this will help with the utilization of early-time predictions and ETP. However, the gradient of $\mathcal{L}_\text{SR}$ is a constant vector with values of prediction logits, which could be very small. So when $\frac{\diff  \mathcal{L}_\text{SR}(\theta_t)}{\diff f(\mathbf{x};\theta_t)}$ is small, SR term can be easily overwhelmed by the other loss terms that favour fitting incorrect pseudo labels, leading to poor performance.

The above analysis shows a fundamentally different difference between SR and ELR. Specifically, SR does not utilize ETP and cannot handle the unbounded label noise either.

\subsection{Empirical Analysis of ELR and NRC in terms of the utilization of ETP}

In addition to the above theoretical analysis for the loss functions, we also observed the same conclusion through experiments. As shown in Figure ~\ref{fig:ohnrcelr}, we observe that thanks to the update of the pseudo-label with the process of adaptation in the SFDA method, overall, NRC can obtain a model with relatively high accuracy on the target domain. However, the performance drop still exists when using the NRC method alone, which can be effectively avoided by adding the ELR term. This confirms that ELR can effectively leverage ETP and avoid the problem of noisy label memorization.

\section{Additional Discussion of Pseudo-Label Purifications in SFDA and LLN Approaches} \label{app:sfdavslln}

In this section, we will further discuss the similarities and the differences between the LLN approaches and the pseudo-label purification processes proposed in current SFDA methods.

The main similarity between the existing SFDA approaches and the LLN methods is that both research fields have to deal with data with noise, aiming to get a model with promising performance. As for the differences, they can be mainly divided into the following aspects. From the perspective of motivations, most of the existing SFDA approaches are developed under the domain adaptation setting. They study how to best exploit the distribution relationship between the source and target domains in the absence of source data, so as to achieve domain adaptation better. Their motivation is to investigate how to better assign the pseudo-label. In contrast, LLN is an independent field that mainly studies, given a set of noisy data, how to deal with the label noise, conduct the model training, and obtain a noise-robust model with better performance. Traditionally, the study of LLN does not involve assumptions about the data domain or source model. Meanwhile, there are more in-depth and rigorous studies (theoretical and methodological) on the types of noises, and how to handle and exploit them. From the perspective of the methodology, in order to obtain a higher quality pseudo-label, many SFDA methods heuristically use clustering or neighbor features to correct the pseudo-labels, and use the corrected labels to perform a normal supervised learning. The current SFDA methods focus on the explicit pseudo-label purification process, which can be summarized as noisy label correction. However, for LLN, the noisy label correction is just a research sub-branch. LLN also includes many other research directions, such as studies of different label noise types, research about how to utilize and even benefit from label noise in the training process, and how to train the model more robustly. Many noise-robust loss functions and related theoretical analyses have been developed.

We would like to emphasize that the motivation of our paper is to investigate how to study SFDA from the perspective of learning with label noise. We combine the characteristics of SFDA with the LLN approaches and discover the unbounded nature of label noise in SFDA. Further, we rigorously distinguish which LLN methods can help SFDA problems and which approaches are limited in their use in SFDA. We believe that the studies of LLN can open new avenues for the research of SFDA and bring more ideas and inspiration to the design of the SFDA algorithm.

\clearpage

\begin{figure}
\includegraphics[width=\textwidth]{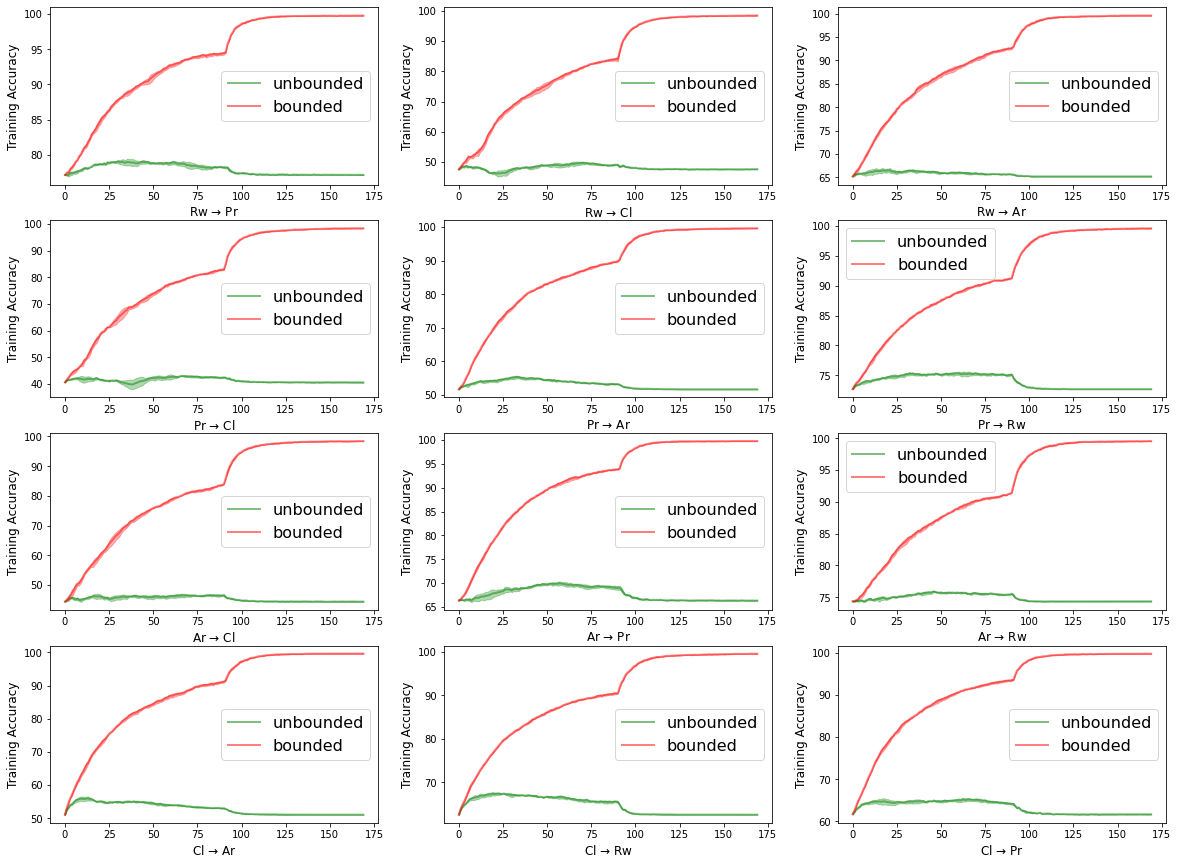}
\caption{Training accuracy on Office-Home dataset.
The solid green lines represent the unbounded label noise in SFDA, whereas the solid red lines represent the bounded label noise.}
\label{fig:ohlns}
\end{figure}

\begin{figure}
\includegraphics[width=\textwidth]{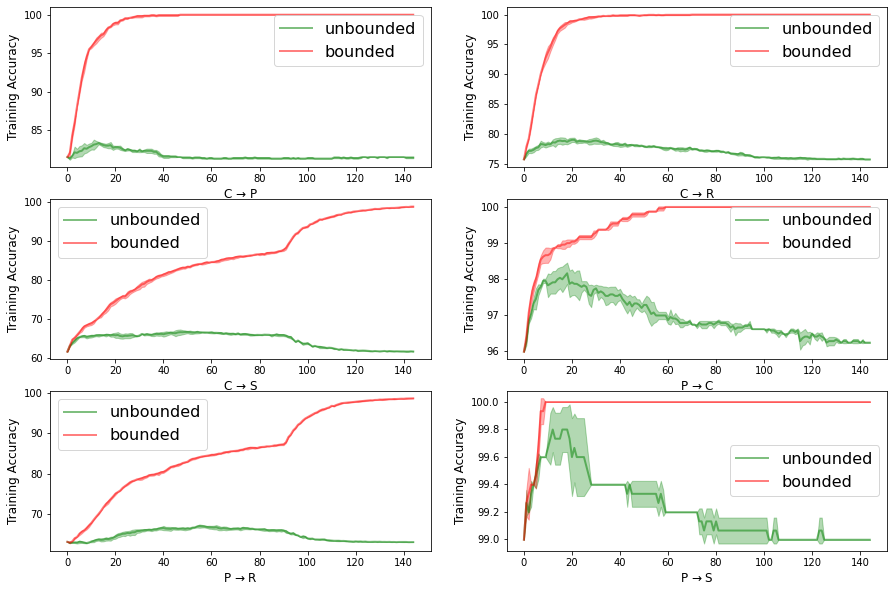}
\caption{Training accuracy on Office-$31$ dataset.
The solid green lines represent the unbounded label noise in SFDA, whereas the solid red lines represent the bounded label noise.}
\label{fig:o31lns}
\end{figure}

\begin{figure}
\includegraphics[width=\textwidth]{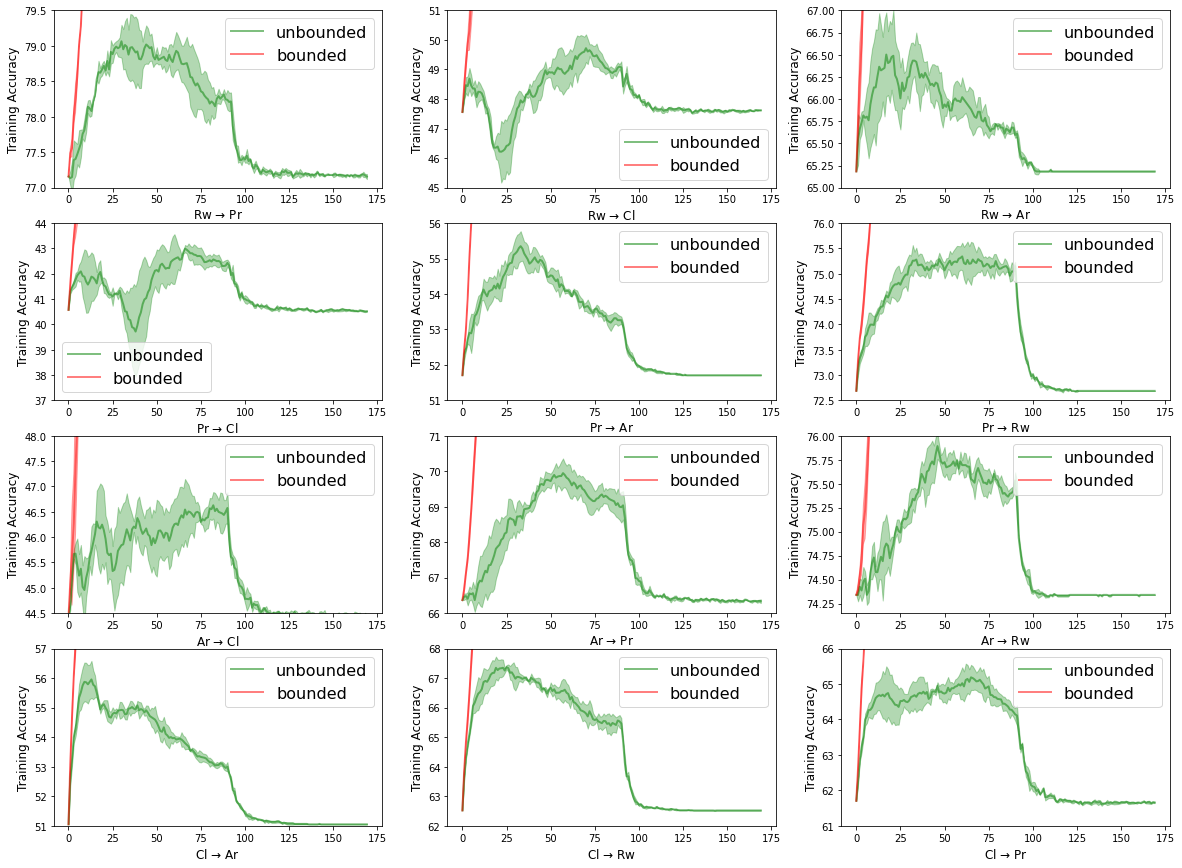}
\caption{Figure \ref{fig:ohlns} with different y-scale to better show learning details of the unbounded label noise.}
\label{fig:ohln0}
\end{figure}

\begin{figure}
\includegraphics[width=\textwidth]{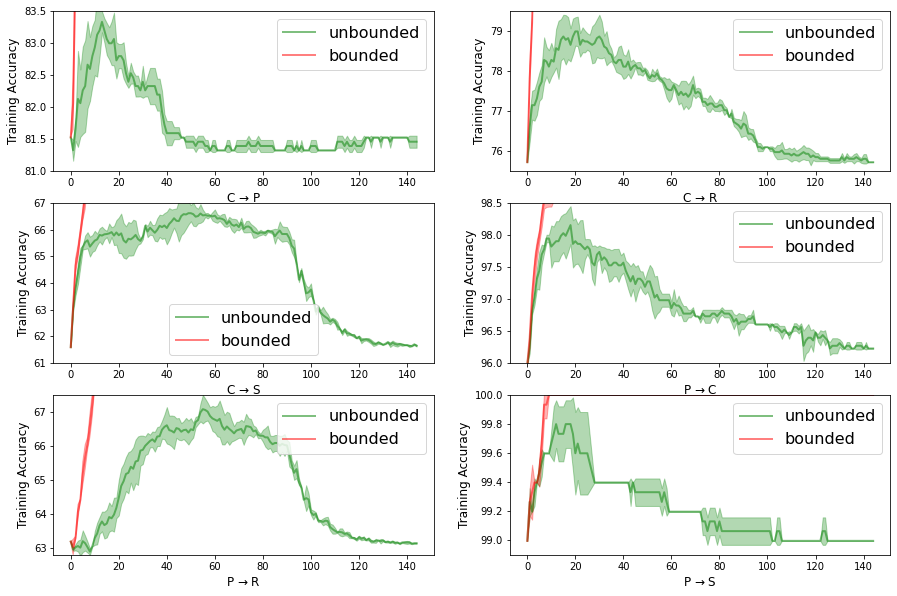}
\caption{Figure \ref{fig:o31lns} with different y-scale to better show learning details of the unbounded label noise.}
\label{fig:o31ln0}
\end{figure}

\clearpage

\end{document}